\newtheorem{theorem}{Theorem}
\newtheorem{definition}[theorem]{Definition}
\newtheorem{lemma}[theorem]{Lemma}
\newtheorem{remark}[theorem]{Remark}
\newcommand{\norm}[1]{\left\lVert#1\right\rVert}
\newcommand{\smallnorm}[1]{\lVert#1\rVert}
\newcommand{\nuc}[1]{\left\lVert#1\right\rVert_*}
\DeclareRobustCommand\onedot{\futurelet\@let@token\@onedot}
\def\@onedot{\ifx\@let@token.\else.\null\fi\xspace}
\def\etal{{et al}\onedot}
\def\eg{{e.g}\onedot} 
\def\ie{{i.e}\onedot} 
\def\etc{{etc}\onedot} 
\def\wrt{w.r.t\onedot} 
\def\etal{\emph{et al}\onedot}
\title{Deep Neural Collapse Is Provably Optimal \\for the Deep Unconstrained Features Model}
\date{} 					
\author{Peter Súkeník\\
        ISTA\\
        \texttt{peter.sukenik@ista.ac.at} \\
	\And
        Marco Mondelli\thanks{Equal contribution}\\
        ISTA\\
        \texttt{marco.mondelli@ista.ac.at} \\
        \And
        Christoph H. Lampert\footnotemark[1]\\
        ISTA\\
        \texttt{chl@ista.ac.at} \\
}
\begin{document}
\maketitle

\begin{abstract}
Neural collapse (NC) refers to the surprising structure of the last layer of deep neural networks in the terminal phase of gradient descent training. Recently, an increasing amount of experimental evidence has pointed to the propagation of NC to earlier layers of neural networks. However, while the NC in the last layer is well studied theoretically, much less is known about its multi-layered counterpart -- deep neural collapse (DNC). In particular, existing work focuses either on linear layers or only on the last two layers at the price of an extra assumption. Our paper fills this gap by generalizing the established analytical framework for NC -- the unconstrained features model -- to multiple non-linear layers. Our key technical contribution is to show that, in a deep unconstrained features model, the unique global optimum for binary classification exhibits all the properties typical of DNC. This explains the existing experimental evidence of DNC. We also empirically show that \emph{(i)} by optimizing deep unconstrained features models via gradient descent, the resulting solution agrees well with our theory, and \emph{(ii)} trained networks recover the unconstrained features suitable for the occurrence of DNC, thus supporting the validity of this modeling principle.
\end{abstract}


\section{Introduction}\label{sec:intro}

In the thought-provoking paper  \cite{papyan2020prevalence}, Papyan \etal uncovered a remarkable structure in the last layer of sufficiently expressive and well-trained deep neural networks (DNNs). This phenomenon is dubbed ``neural collapse'' (NC), and it has been linked to robustness and generalization of DNNs. In particular, in \cite{papyan2020prevalence} it is shown that all training samples from a single class have the same feature vectors after the penultimate layer (a property called NC1); 
the globally centered class-means form a \emph{simplex equiangular tight frame}, the most ``spread out'' configuration of equally-sized vectors geometrically possible (NC2); the last layer's classifier vectors computing scores for each of the classes align with the centered class-means (NC3); finally, these three properties make the DNN's last layer act as a nearest class-center classifier on the training data (NC4). 

The intriguing phenomenon of neural collapse has spurred a flurry of interest aimed at understanding its emergence, and the unconstrained features model (UFM) \cite{mixon2020neural} has emerged as a widely accepted theoretical framework for its analysis.
In this model, one assumes that the network possesses such an expressive power that it can represent arbitrary features on the penultimate layer's output. Thus, the feature representations of the training samples on the penultimate layer are treated as free optimization variables, which are optimized for (instead of the previous layers' weights). Since its introduction, the UFM has been understood rather exhaustively, see \eg\ \cite{mixon2020neural, lu2020neural, wojtowytsch2020emergence, fang2021exploring, ji2021unconstrained, zhu2021geometric} and Section~\ref{sec:related}.

%

Most recently, researchers started to wonder, whether NC affects solely the last layer. The ``cascading neural collapse'' is mentioned in \cite{hui2022limitations}, where it is pointed out that NC1 might propagate to the layers before the last one. Subsequently, a mathematical model to study this deep version of NC has been proposed in \cite{tirer2022extended}, albeit only for two layers; and a deep model has been considered in \cite{dang2023neural}, albeit only for linear activations. On the empirical side, the propagation of NC1 through the layers is demonstrated in \cite{he2022law}, and the occurrence of all the NC properties in multiple layers 
is extensively measured in \cite{rangamani2023feature}. 
All these works agree that neural collapse 
occurs to some extent on the penultimate and earlier layers, leaving as a key open question the rigorous characterization of its emergence.  

To address this question, 
we generalize the UFM and its extensions in \cite{tirer2022extended,dang2023neural} to cover an arbitrary number of non-linear layers. We call this generalization the \emph{deep unconstrained features model (DUFM)}. The features, as in the original UFM, are still treated as free optimization variables, but they are now followed by an arbitrary number of fully-connected layers with ReLU activations between them, which leads to a natural setup to investigate the theoretical underpinnings of deep neural collapse (DNC).  
Our main contribution is to prove (in Section~\ref{sec:main_result}) that DNC is the only globally optimal solution in the DUFM, for a binary classification problem with $l_2$ loss and regularization. This extends the results in \cite{tirer2022extended} and \cite{dang2023neural} by having more than two layers and non-linearities, respectively. While the statements of \cite{tirer2022extended} are formulated for an arbitrary number of classes, here we restrict ourselves to binary classification. However, we drop an important assumption on the features of the optimal solution made by \cite{tirer2022extended} (see the corresponding Theorem 4.2 and footnote 3), which was only verified experimentally. 
In a nutshell, we provide the first theoretical validation of the occurrence of DNC in practice: deep neural collapse gives a globally optimal representation of the data under the unconstrained features assumption, for any arbitrary depth.

\begin{figure}
    \centering
    \includegraphics[width=0.32\textwidth]{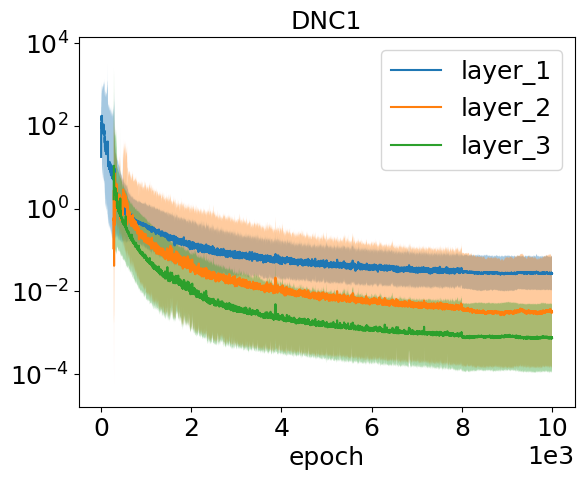}
    \includegraphics[width=0.32\textwidth]{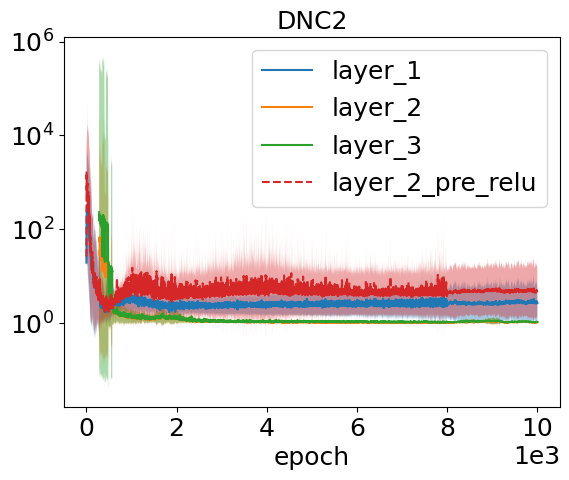}
    \includegraphics[width=0.32\textwidth]{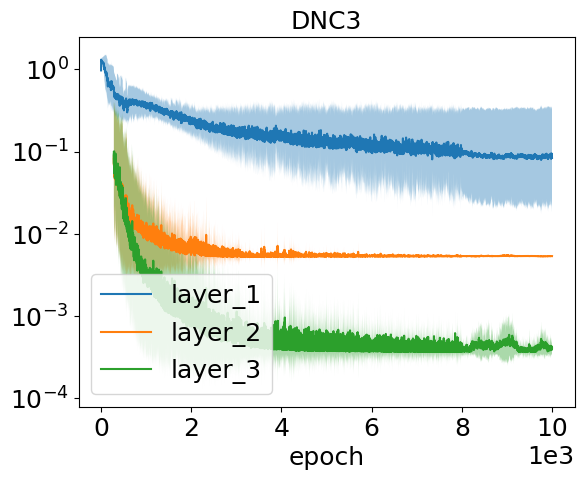}
    \caption{Deep neural collapse metrics as a function of training progression. On the left, DNC1 measures within-class variability collapse (the closer to $0$, the more collapsed); in the middle, DNC2 measures closeness of feature activations to an orthogonal frame (the closer to $1$, the more collapsed); and on the right, DNC3 measures alignment between feature activations and weights (the closer to $0$, the more collapsed). The ResNet20 recovers the DNC metrics for DUFM, in accordance with our theory. This validates unconstrained features as a modeling principle for neural collapse.}
    \label{fig:pretrained_dufm}
    \vspace{-10pt}
\end{figure}

In addition to our theoretical contribution, we numerically validate in Section~\ref{ssec:numerics} the occurrence of DNC when training the DUFM with gradient descent. The loss achieved by the DUFM after training agrees well with the theoretically computed optimum, and the corresponding solution clearly exhibits the properties described by our main result. 
Finally, to check whether the main assumption of the DUFM -- the unconstrained features -- is relevant in practical training, we show that DNNs trained on CIFAR-10 can indeed represent features suitable for the occurrence of DNC. In particular, \emph{(i)} we pre-train a small DUFM to full collapse, \emph{(ii)} attach the fully-connected layers (DUFM without the unconstrained features) to the end of a randomly initialized ResNet20, and finally \emph{(iii)} train the ResNet20 with the DUFM's weights fixed. 
The plots in Figure~\ref{fig:pretrained_dufm} show that the ResNet20 recovers the unconstrained features leading to DNC. This is quite remarkable, as the collapse, being data-dependent, completely breaks after exchanging the trained unconstrained features with the output of the randomly initialized ResNet. Additional details can be found in Section~\ref{ssec:numerics}. 


\section{Related work}\label{sec:related}

Since the original paper \cite{papyan2020prevalence}, neural collapse has been intensively studied, with \cite{mixon2020neural,fang2021exploring} introducing the now widely accepted unconstrained features model (UFM), also called layer peeled model. 
In particular, the work on NC can be loosely categorized into \emph{(i)} the study of the 
mathematical grounds behind NC \cite{wojtowytsch2020emergence, lu2020neural, zhou2022optimization, fang2021exploring, zhu2021geometric, ji2021unconstrained, mixon2020neural, han2021neural, liu2023generalizing, tirer2022perturbation, yaras2022neural, zarka2020separation}, \emph{(ii)} the development of a connection between NC and DNN performance \cite{xie2023neural, galanti2022improved, haas2022linking, ben2022nearest, yang2022we, li2022principled, zhong2023understanding, liang2023inducing, li2023no} and \emph{(iii)} empirical observations concerning NC \cite{yang2022neurons, ma2023do}. 
The first line of work 
can be further divided into \emph{(i-a)} a static analysis, that studies the global optimality of NC on suitable objectives \cite{wojtowytsch2020emergence, lu2020neural, zhou2022optimization, tirer2022extended, zhu2021geometric, ji2021unconstrained}, and \emph{(i-b)} a dynamic analysis, that 
characterizes the emergence of NC in training with gradient-based methods 
\cite{mixon2020neural, han2021neural, ji2021unconstrained, wang2022linear}. The vast majority of works uses the UFM to derive their results, but there are attempts to derive NC without it, see \eg \cite{rangamani2022neural, xu2023dynamics, poggio2020explicit, poggio2020implicit, kunin2022asymmetric, zhang2021imitating}. For additional references, see also 
the recent survey \cite{kothapalli2022neural}.

More specifically, \cite{wojtowytsch2020emergence, lu2020neural} show global optimality of NC under UFM, when 
using the cross-entropy (CE) loss. Similar optimality results are obtained in \cite{zhou2022optimization, tirer2022extended} for the MSE loss. For CE loss, a class-imbalanced setting is considered in \cite{fang2021exploring}, 
where the danger of minority classes collapsing to a single representation (thus being indistinguishable by the network) is pointed out. This analysis is further refined in \cite{thrampoulidis2022imbalance}, where a generalization of the ETF geometry under class imbalance is discovered. 
On the edge between static and dynamic analysis are results which characterize the optimization landscape of UFM. They aim to show that 
all the stationary points are either global optima (thus exhibiting NC) or local maxima/strict saddle points with indefinite Hessian, which enables optimization methods capable of escaping strict saddles to reach the global optimum. For the CE loss, this property is proved in \cite{zhu2021geometric} and \cite{ji2021unconstrained}, while in the MSE setup this insight is provided by \cite{zhou2022optimization}. A unified analysis for CE, MSE and some other losses is done in \cite{zhou2022all}. 
The convergence of the GD dynamics to NC is considered in \cite{mixon2020neural} under MSE loss for near-zero initialization. The more refined analysis of \cite{han2021neural} reveals that the last layer's weights of UFM are close to being conditionally optimal given the features and, assuming their \textit{exact} optimality, the emergence of NC is demonstrated with renormalized gradient flow. For CE loss, \cite{ji2021unconstrained} shows that gradient flow converges in direction to the KKT point of the max-margin optimization problem, \textit{even without} any explicit regularization (unlike in the MSE loss setting). As the optimal solutions of this max-margin problem exhibit NC, this yields the convergence of gradient-based methods to NC as well. Quantitative results on the speed of convergence for both the MSE and CE losses are given in \cite{wang2022linear}.


Most relevant to our paper, the emergence of NC on earlier layers of DNNs is empirically described in \cite{he2022law, rangamani2023feature, hui2022limitations, galanti2023implicit}. In particular, it is demonstrated in \cite{he2022law} that, at the end of training, the NC1 metric decreases log-linearly with depth, thus showing a clear propagation of neural collapse. Most recently, the propagation of all the NC properties beyond the last layer is studied in \cite{rangamani2023feature}, for a setting that includes biases. 
On the theoretical side, the UFM with \emph{multiple linear} layers is considered in 
\cite{dang2023neural}, where it is shown that 
NC is globally optimal for different formulations of the problem. The first result for a non-linear model with \emph{two layers} (separated by a ReLU activation) is given by \cite{tirer2022extended}. There, it is shown that NC, as formulated in the bias-free setting, is globally optimal. However, an additional assumption on the optimum is required, see the detailed discussion in Section~\ref{sec:main_result}. 



\section{Preliminaries}

\paragraph{Notation.} Let $N=Kn$ be the total number of training samples, where $K$ is the number of classes and $n$ the number of samples per class. We are interested in the last $L$ layers of a DNN. The layers are counted from the beginning of indexing, \ie, the $L$-th layer denotes the last one. The layers are fully connected and bias-free with numbers of input neurons $d_1, \dots, d_L>1$ and weight matrices $W_1 \in \mathbb{R}^{d_2 \times d_1}, W_2 \in \mathbb{R}^{d_3 \times d_2}, \dots, W_L \in \mathbb{R}^{K \times d_l}$. Let $H_1 \in \mathbb{R}^{d_1 \times N}, H_2 \in \mathbb{R}^{d_2 \times N}, \dots, H_L \in \mathbb{R}^{d_L \times N}$ be the feature matrices \emph{before} the application of the ReLU activation function, which is denoted by $\sigma$, \ie, $H_2=W_1H_1$ and $H_l=W_{l-1}\sigma(H_{l-1})$ for $l \ge 3$. When indexing a particular sample, we denote by $h_{c,i}^{(l)}$ the $i$-th sample of the $c$-th class in the $l$-th layer, \ie, a particular column of $H_l$. Moreover, let $\mu^{(l)}_c=\frac{1}{n}\sum_{i=1}^n h_{c,i}^{(l)}$ be the activation class-mean for layer $l$ and class $c$, and $\Bar{H}_l$ the matrix containing $\mu^{(l)}_c$ as the $c$-th column. For convenience, the training samples are arranged class-by-class (first all the samples of first class, then of the second, \etc). Thus, the label matrix containing the one-hot encodings $Y \in \mathbb{R}^{K\times N}$ can be written as $I_K \otimes \mathbf{1}_n^T,$ where $I_K$ denotes a $K\times K$ identity matrix, $\otimes$ denotes the Kronecker product and $\mathbf{1}_n$ a row vector of all-ones of length $n$. With this, we can write $\Bar{H}_l=\frac{1}{n}H_l(I_K \otimes \mathbf{1}_n)$ and we further denote $\overline{\sigma(H_l)}:=\frac{1}{n}\sigma(H_l)(I_K\otimes\mathbf{1}_n).$

\paragraph{Deep neural collapse.}
Since we assume the last $L$ layers to be bias-free, we formulate a version of the deep neural collapse (DNC) without biases (for a formulation including biases, see \cite{rangamani2023feature}). 


\begin{definition} The deep neural collapse at layer $l$ is described by the following three properties:\footnote{NC often involves an additional fourth property -- the last layer acting as a nearest class classifier. This property is well defined only for networks including biases, hence we will not consider it here.}

\begin{itemize}
    \item[DNC1:] The within-class variability after $l-1$ layers is $0$. This property can be stated for the features either before or after the application of the activation function $\sigma$. In the former case, the condition requires $h_{c,i}^{(l)}=h_{c,j}^{(l)}$ for all $i, j\in [n]$ (or, in matrix notation, $H_l=\Bar{H}_l \otimes \mathbf{1}_n^T$); in the latter, $\sigma(h_{c,i}^{(l)})=\sigma(h_{c,j}^{(l)})$ for all $i, j\in [n]$ (or, in matrix notation, $\sigma(H_l)=\overline{\sigma(H_l)} \otimes \mathbf{1}_n^T$).
    \item[DNC2:] The feature representations of the class-means after $l-1$ layers form an orthogonal matrix. As for DNC1, this property can be stated for features either before or after $\sigma$. In the former case, the condition requires $\Bar{H}_l^T\Bar{H}_l \propto I_K$; in the latter, $\overline{\sigma(H_l)}^T\overline{\sigma(H_l)} \propto I_K.$
    \item[DNC3:] The rows of the weight matrix $W_l$ are either 0 or collinear with one of the columns of the class-means matrix $\overline{\sigma(H_l)}.$
\end{itemize}
\end{definition}

Note that, in practice, DNNs do not achieve such properties \textit{exactly}, but they approach them as the training progresses.  

\paragraph{Deep unconstrained features model.} Next, we generalize the standard unconstrained features model and its extensions \cite{tirer2022extended, dang2023neural} to its deep, non-linear counterpart. 

\begin{definition}
The $L$-layer deep unconstrained features model ($L$-DUFM) denotes the following optimization problem: 
\begin{align}\label{eq:LDUFM}
\min_{H_1, W_1, \dots, W_L} \ 
&\frac{1}{2N}\norm{W_L\sigma(W_{L-1}\sigma(\dots W_2\sigma(W_1H_1)))-Y}_F^2+\sum_{l=1}^L\frac{\lambda_{W_l}}{2}\norm{W_l}_F^2+
\frac{\lambda_{H_1}}{2}\norm{H_1}_F^2,    
\end{align}
where $\norm{\cdot}_F$ denotes the Frobenius norm and $\lambda_{H_1}, \lambda_{W_1}, \ldots, \lambda_{W_L}>0$ are regularization parameters.
\end{definition}

\section{Optimality of deep neural collapse}\label{sec:main_result}


We are now state our main result (with proof sketch in Section \ref{subsec:sketch}, full proof in Appendix \ref{App:proofs}). 

\begin{restatable}{theorem}{fulldufm}
\label{Pthm:full_dufm}
The optimal solutions of the $L$-DUFM \eqref{eq:LDUFM} for binary classification ($K=2$) are s.t.\ 
\begin{equation}\label{eq:cond}
(H_1^*, W_1^*, \dots, W_L^*)\,\,\, \mbox{ exhibits DNC},\qquad \mbox{if }\,\,\,
n\lambda_{H_1}\lambda_{W_1}\dots\lambda_{W_L}<\frac{(L-1)^{L-1}}{2^{L+1}L^{2L}}.    
\end{equation}
More precisely, DNC1 is present on all layers; DNC2 is present on $\sigma(H_l^*)$ for $l\ge2$ and  on $H_l^*$ for $l\ge3$; and DNC3 is present for $l\ge2$. The optimal $W_1^*, H_1^*$ do not necessarily exhibit DNC2 or DNC3. If the inequality in \eqref{eq:cond} holds with the opposite strict sign, then the optimal solution is only $(H_1^*, W_1^*, \dots, W_L^*)=(0,0,\dots,0).$
\end{restatable}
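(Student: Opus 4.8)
The plan is to derive a lower bound on the $L$-DUFM objective in \eqref{eq:LDUFM} that, after exploiting the binary structure and the scaling freedom of ReLU, reduces to a one-dimensional function $\Phi$; to exhibit an explicit DNC configuration attaining it; and to read off \eqref{eq:cond} and the claimed geometry from the equality cases. Write $Z:=W_L\sigma(W_{L-1}\sigma(\cdots\sigma(W_1H_1)))$ for the output and $\Lambda:=\lambda_{H_1}\lambda_{W_1}\cdots\lambda_{W_L}$. For $K=2$ the label matrix $Y=I_2\otimes\mathbf 1_n^T$ satisfies $\norm{Y}_F^2=2n=N$ and has both of its nonzero singular values equal to $\sqrt n$. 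The first step is a spectral lower bound on the fitting term: by the von Neumann trace inequality, $\norm{Z-Y}_F^2\ge(s_1(Z)-\sqrt n)^2+(s_2(Z)-\sqrt n)^2$, where $s_1\ge s_2\ge0$ denote the two largest singular values of $Z$ ($\sigma$ is reserved for ReLU, so I write $s_i$ for singular values). Hence the fitting term is at least $\tfrac1{4n}\big[(\sqrt n-s_1(Z))^2+(\sqrt n-s_2(Z))^2\big]$, which already reveals that a rank-deficient $Z$ is heavily penalized — this will force the optimum to build up \emph{both} singular directions.

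Next I would propagate an upper bound on $s_1(Z),s_2(Z)$ backwards through the $L$ layers, combining submultiplicativity of the operator norm with the $1$-Lipschitzness of ReLU (so $\norm{\sigma(A)}_F\le\norm{A}_F$) and the rank-$2$ structure of the matrices involved. The delicate point is to do this without leaking multiplicative constants: the naive chain $\norm{Z}_F\le\big(\prod_l\norm{W_l}_{\mathrm{op}}\big)\norm{H_1}_F$ together with $\norm{W_l}_{\mathrm{op}}\le\norm{W_l}_F$ wastes a factor $\sqrt2$ at each layer — exactly the gap between the $2$ and the $2^{L+1}$ in \eqref{eq:cond} — whereas tracking $s_1(Z)$ and $s_2(Z)$ jointly and coupling them to $\norm{W_l}_F^2=s_1(W_l)^2+s_2(W_l)^2$ (equality pinning $\operatorname{rank}W_l\le2$) recovers the sharp constant. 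Using the positive homogeneity of ReLU to rescale the layers so that $\lambda_{H_1}\norm{H_1}_F^2$ and all $\lambda_{W_l}\norm{W_l}_F^2$ coincide, and then applying AM--GM to these $L+1$ quantities, the whole objective is bounded below by
\[
\Phi(p):=\frac{(1-p)^2}{2}+(L+1)\,(n\Lambda)^{1/(L+1)}\,p^{2/(L+1)},\qquad p\in[0,1],
\]
with $p$ the common shrinkage of the solution. Carrying out the same computation for an explicit DNC configuration — within-class constant features on every layer, orthogonal non-negative class means, and aligned weights — produces objective value $\Phi(p)$ exactly, so the bound is tight.

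It then remains to analyze $\Phi$ and to characterize equality. Since $\Phi(0)=\tfrac12$ is the value of the all-zero solution and (for $L\ge2$) $\Phi'(0^+)=+\infty$, the optimum is nontrivial iff $\min_{p\in(0,1]}\Phi(p)<\tfrac12$. Reducing $\Phi(p)<\tfrac12$ to $2(L+1)(n\Lambda)^{1/(L+1)}p^{(1-L)/(L+1)}<2-p$ and maximizing the right-hand side over $p$ (maximizer $p^\star=\tfrac{L-1}{L}$) shows this is equivalent to $n\Lambda<\tfrac{(L-1)^{L-1}}{2^{L+1}L^{2L}}$, i.e.\ \eqref{eq:cond}; if the reverse strict inequality holds then $\Phi(p)>\tfrac12$ for all $p>0$, the bound is met only at $p=0$, hence $Z=0$ and, by positivity of all $\lambda$'s, $(H_1^*,W_1^*,\dots,W_L^*)=(0,\dots,0)$. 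In the nontrivial regime, optimality forces equality everywhere: equality in the ReLU bounds forces $H_l\ge0$ entrywise for $l\ge2$ (so $\sigma$ acts as the identity on the intermediate features and the network is effectively linear there); equality in von Neumann forces $Z\propto Y$, and propagating this back forces within-class collapse on all layers (DNC1), the class means of $\sigma(H_l)$ for $l\ge2$ and of $H_l$ for $l\ge3$ to form an orthogonal frame (DNC2) — which, being non-negative, must have disjoint supports; equality in the operator-norm steps forces the rows of $W_l$ ($l\ge2$) to be collinear with the columns of $\overline{\sigma(H_l)}$ (DNC3); and equality in AM--GM fixes the balanced scales. The first layer is exceptional: $H_1$ only has to realize $W_1H_1=H_2$ with its class-mean Gram matrix a prescribed multiple of $I_2$, leaving the freedom of a ``square root'', so $H_1$ may have negative entries, in which case $\sigma(H_1)$ is not an orthogonal frame and the rows of $W_1$ are not collinear with the columns of $\overline{\sigma(H_1)}$ — hence $W_1^*,H_1^*$ need not satisfy DNC2 or DNC3.

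The main obstacle is the tightness of the lower bound: a careless propagation of the spectral estimate through the ReLU layers leaks a factor $2$ per layer and yields only a threshold of order $(L-1)^{L-1}/(2L^{2L})$, so the argument must couple the layerwise norm inequalities tightly to the rank-$2$ structure that the fitting term enforces (both singular values of $Z$ wanting to equal $\sqrt n$). A secondary difficulty is the ReLU equality analysis — extracting non-negativity of the intermediate pre-activations and the disjoint-support structure of the orthogonal non-negative class means, and isolating precisely why the first layer escapes DNC2 and DNC3.
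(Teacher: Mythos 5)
Your high-level plan — a spectral lower bound on the fit term via von~Neumann/Mirsky, a norm chain through the layers, balancing by homogeneity, AM--GM, then a one-dimensional analysis whose threshold reproduces \eqref{eq:cond} — is an appealing program and your final calculus (maximizer $p^\star=(L-1)/L$, threshold $n\Lambda<(L-1)^{L-1}/(2^{L+1}L^{2L})$) does reproduce the constant in \eqref{eq:cond}. It is also genuinely different from the paper's route: the paper reduces the whole problem to the singular values $\{s_{l,1},s_{l,2}\}_{l\ge2}$ of $\sigma(H_l)$ through a sequence of explicit sub-optimizations (a ridge-regression step for $W_L$, a KKT analysis for each middle layer, and a Schatten-norm/nuclear-norm argument for the first layer), whereas you aim directly for a one-shot lower bound. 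However, there are concrete gaps that I do not see how to close along your lines.

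First, the central analytic step — pushing a singular-value (nuclear-norm) bound through the ReLU layers with the exact constant — is asserted but not supplied, and it is precisely the hard part. The $1$-Lipschitzness of ReLU controls only $\norm{\sigma(A)}_F\le\norm{A}_F$; it does \emph{not} control $\nuc{\sigma(A)}$ vs.\ $\nuc{A}$, and that inequality is false in general (the paper exhibits a $3\times 3$ counterexample). It does hold for two-column matrices, but that fact is exactly the content of the paper's Lemma~\ref{Plem:min_schatten_norm}, whose proof is a delicate case analysis over sign patterns; nothing in your sketch establishes it. Similarly, writing ``tracking $s_1(Z)$ and $s_2(Z)$ jointly and coupling them to $\norm{W_l}_F^2=s_1(W_l)^2+s_2(W_l)^2$ recovers the sharp constant'' is where the argument has to actually happen: the naive chain $\nuc{WA}\le\norm{W}_{\mathrm{op}}\nuc{A}\le\norm{W}_F\nuc{A}$ indeed loses $\sqrt{2}$ per layer, and the balanced identity $\norm{WA}_F^2=\tfrac12\norm{W}_F^2\norm{A}_F^2$ that you implicitly want holds only when both factors already have two equal singular values — which is what you are trying to prove, not what you are allowed to assume. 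The paper's Lemma~\ref{Plem:the_key_lemma} is exactly the sharp replacement: $\min\norm{W}_F^2$ over all $W$ with $\mathcal{S}(\sigma(WX))=\{s_1,s_2\}$ equals $s_1^2/\tilde s_1^2+s_2^2/\tilde s_2^2$, and its proof is again a nontrivial KKT computation (Lemmas~\ref{Plem:optimizing_W_2} and~\ref{Plem:optimizing_A}). Without an ingredient of this strength, $\Phi(p)$ is not a proven lower bound, so obtaining the ``right'' threshold from $\Phi$ does not by itself establish \eqref{eq:cond}.

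Second, your equality analysis contains an internal contradiction. You claim that equality in the ReLU bounds forces $H_l\ge 0$ entrywise for all $l\ge 2$, so that the network is ``effectively linear there''. If that were so, then $H_2=\sigma(H_2)$ and DNC2 on $\sigma(H_2)$ would automatically give DNC2 on $H_2$ as well — contradicting the statement you are trying to prove, which asserts DNC2 on $H_l^*$ only for $l\ge 3$. In fact the paper shows the opposite: by the remark following Lemma~\ref{Plem:min_schatten_norm}, the minimizing $H_2^*$ generically has negative entries (for the Schatten-$(2/L)$ objective it is rank~1 when $L>2$, and for the nuclear norm it belongs to a one-parameter family $x-ty,\,y-tx$ with $t\in[0,1]$). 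This is precisely why $H_2^*$ escapes DNC2 while $H_l^*$ for $l\ge 3$ does not. Your argument would need to isolate the first layer's nuclear-norm optimization as the single place where ReLU equality can fail, which it currently conflates with the later layers.

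Third, the passage from $n=1$ to general $n$ is not free. Your nuclear-norm-through-ReLU inequality (even granting it) applies to two-column matrices, but for $n>1$ the features $H_l$ have $2n$ columns, where the inequality does not hold. The paper first proves DNC1 (within-class collapse) by a separate contradiction argument that exploits separability of the objective over columns of $H_1$ together with a uniqueness statement for the optimal $H_1$ given $W_1^*$ — and only then reduces to $n=1$. Your sketch assumes the reduction (``propagating this back forces within-class collapse on all layers'') without arguing why multiple distinct within-class feature vectors cannot all be optimal when passed through a ReLU layer; the paper's argument hinges on the uniqueness of the $t^*$ pinning $A_2^{t^*}$ and of the minimum-$\ell_2$ solution of $W_1^*H_1=A_2^{t^*}$, and none of this follows from equality in von~Neumann alone.

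In short, the $\Phi$-based reduction is a plausible reorganization and your threshold arithmetic is correct, but the three load-bearing claims — the sharp nuclear-norm propagation through ReLU, the ReLU equality analysis, and the $n>1$ reduction — are exactly where the paper invests most of its technical effort, and your proposal does not yet supply arguments for them.
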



In words, Theorem \ref{Pthm:full_dufm} shows that, unless the regularization is so strong that the optimal solution is all-zeros, the optimal solutions of the DUFM exhibit all the DNC properties. Notice that \eqref{eq:cond} requires the per-layer regularization to be at most of order $L^{-1}n^{1/(L+1)}$, which is a very mild restriction. If the condition in \eqref{eq:cond} holds with equality, the set of optimal solutions includes \emph{(i)} solutions exhibiting DNC, \emph{(ii)} the all-zero solution, and \emph{(iii)} additional solutions that do \emph{not} exhibit DNC. 

We highlight that Theorem \ref{Pthm:full_dufm} holds for \emph{arbitrarily deep non-linear} unconstrained features models. This is a significant improvement over both \cite{tirer2022extended}, which considers $L=2$, and \cite{dang2023neural}, which considers the linear version of the DUFM. Furthermore, Theorem 4.2 of \cite{tirer2022extended} has a similar statement for two layers (although without considering the DNC1 property on $H_1, H_2$), but it requires the extra assumption $\norm{\sigma(H_2^*)}_*=\norm{H_2^*}_*$, where $\norm{\cdot}_*$ denotes the nuclear norm. In particular, the argument of \cite{tirer2022extended} crucially relies on the fact that $\norm{\sigma(H_2^*)}_*\le\norm{H_2^*}_*$. 
We remark that the inequality $\norm{\sigma(M)}_*\le\norm{M}_*$ does not hold for a generic matrix $M$ (see Appendix~\ref{App:multiclass} for a counterexample) and, at the optimum, it is only observed empirically in \cite{tirer2022extended}. 
In summary, our result is the first to theoretically address the emerging evidence \cite{rangamani2023feature, he2022law, galanti2023implicit} that neural collapse is not only a story of a single layer, but rather affects multiple layers of a DNN. 


\subsection{Proof sketch for Theorem~\ref{Pthm:full_dufm}}\label{subsec:sketch}

First, denote by $\mathcal{S}(\cdot)$ an operator which takes any matrix $A\in\mathbb{R}^{e\times f}$ as an input and returns the set of $\min\{e, f\}$ singular values of $A.$ The singular values are denoted by $s_i$ and ordered non-increasingly. We specifically label $s_{l,j}$ the $j$-th singular value of $\sigma(H_l)$ for $l\ge 2.$

The argument consists of three steps. First, we assume $n=1$, and prove that the optimal solutions of the $L$-DUFM problem exhibit DNC2-3 (as stated in Theorem \ref{Pthm:full_dufm}). Second, we consider $n>1$ and show that DNC1 holds. Third, we conclude that DNC2-3 hold for $n>1$.


\paragraph{Step 1: DNC2-3 for $n=1$.} The idea is to split the optimization problem \eqref{eq:LDUFM} into a series of simpler sub-problems, where in each sub-problem we solve for a fraction of the free variables and condition on all the rest. Crucially, the optimal solutions of all the sub-problems only depend on the singular values of $\sigma(H_l)$, for $l\ge 2$. Thus, the final objective can be reformulated in terms of $\mathcal{S}(\sigma(H_l))$, for $l\ge2$. By noticing that the objective is symmetric and separable \wrt the first and second singular values of the matrices, we can solve separately for $\{s_{l,1}\}_{2\le l\le L}$ and $\{s_{l,2}\}_{2\le l\le L}$. At this point, we show that these separated problems have a unique global optimum, which is either $0$ or element-wise positive (depending on the values of the regularization parameters $\lambda_{H_1}, \lambda_{W_1}, \ldots, \lambda_{W_L}$). Hence, the optimal $s_{l,j}$ will be equal for $j=1$ and $j=2$, which in turn yields that $\sigma(H_l)$ is orthogonal. Finally, using the intermediate steps of this procedure, we are able to reverse-deduce the remaining properties. 


We now sketch the step of splitting the problem into $L$ sub-problems. As the $l$-th sub-problem, we optimize over matrices with index $L-l+1$ and condition on all the other matrices, as well as singular values of $\{\sigma(H_l)\}_{l\ge2}$ (keeping them fixed). We start by optimizing over $W_L$ and then optimize over $W_l$ for $2\le l \le L-1$ (for all $l$ the optimization problem has the same form and thus this part is compressed into a single lemma). If $L=2$, we do not have any $l$ for which $2\le l \le L-1$ and, hence, we skip this part. Finally, we optimize jointly over $W_1, H_1.$ We start by formulating the sub-problem for $W_L.$

\begin{restatable}{lemma}{optimizingWL}
\label{Plem:optimizing_W_L}
Fix $H_1, W_1, \dots, W_{L-1}.$ Then, the optimal solution of the optimization problem
\begin{align*}
\min_{W_L} \quad \frac{1}{4}\norm{W_L\sigma(W_{L-1}(\dots W_2\sigma(W_1H_1))-I_2}_F^2+\frac{\lambda_{W_L}}{2}\norm{W_L}_F^2
\end{align*}
is achieved at $W_L=\sigma(H_L)^T(\sigma(H_L)\sigma(H_L)^T+2\lambda_{W_L}I_{d_L})^{-1}$ and equals
\begin{equation}\label{eq:WLopt}
 \frac{\lambda_{W_L}}{2(s_{L,1}^2+2\lambda_{W_L})}+\frac{\lambda_{W_L}}{2(s_{L,2}^2+2\lambda_{W_L})}.   
\end{equation}
\end{restatable}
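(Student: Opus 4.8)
The plan is to notice that, once $H_1,W_1,\dots,W_{L-1}$ are frozen, the matrix $Z:=\sigma(W_{L-1}(\dots W_2\sigma(W_1H_1)))=\sigma(H_L)\in\mathbb{R}^{d_L\times 2}$ is a fixed constant, so the displayed sub‑problem is exactly a Tikhonov‑regularized least‑squares (ridge regression) problem in the single matrix variable $W_L$, namely the minimization of $f(W_L)=\tfrac14\norm{W_LZ-I_2}_F^2+\tfrac{\lambda_{W_L}}{2}\norm{W_L}_F^2$. Thanks to the $\tfrac{\lambda_{W_L}}{2}\norm{W_L}_F^2$ term with $\lambda_{W_L}>0$, the map $f$ is a strictly convex, coercive quadratic in the entries of $W_L$; hence it has a unique minimizer, characterized by the stationarity equation $\nabla f(W_L)=0$.

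\textbf{Deriving the minimizer.} Using $\nabla_{W_L}\norm{W_LZ-I_2}_F^2=2(W_LZ-I_2)Z^T$ and $\nabla_{W_L}\norm{W_L}_F^2=2W_L$, the condition $\nabla f(W_L)=0$ reads $\tfrac12(W_LZ-I_2)Z^T+\lambda_{W_L}W_L=0$, i.e.\ $W_L\bigl(ZZ^T+2\lambda_{W_L}I_{d_L}\bigr)=Z^T$. Since $ZZ^T+2\lambda_{W_L}I_{d_L}\succ 0$ it is invertible, which gives the claimed unique stationary point $W_L^*=Z^T(ZZ^T+2\lambda_{W_L}I_{d_L})^{-1}=\sigma(H_L)^T(\sigma(H_L)\sigma(H_L)^T+2\lambda_{W_L}I_{d_L})^{-1}$.

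\textbf{Evaluating the optimal value.} Next I would substitute $W_L^*$ back and diagonalize via the SVD $Z=U\Sigma V^T$, where $U\in\mathbb{R}^{d_L\times d_L}$ and $V\in\mathbb{R}^{2\times 2}$ are orthogonal and $\Sigma$ carries the (at most two) singular values $s_{L,1}\ge s_{L,2}\ge 0$ of $\sigma(H_L)$. A direct computation then gives $W_L^*Z=V\,\mathrm{diag}\!\bigl(s_{L,i}^2/(s_{L,i}^2+2\lambda_{W_L})\bigr)_{i=1,2}V^T$, so that $\norm{W_L^*Z-I_2}_F^2=\sum_{i=1}^2\bigl(2\lambda_{W_L}/(s_{L,i}^2+2\lambda_{W_L})\bigr)^2$ and, by orthogonal invariance of the Frobenius norm, $\norm{W_L^*}_F^2=\sum_{i=1}^2 s_{L,i}^2/(s_{L,i}^2+2\lambda_{W_L})^2$. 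Adding the two contributions and simplifying, $f(W_L^*)=\sum_{i=1}^2\bigl(\lambda_{W_L}^2+\tfrac12\lambda_{W_L}s_{L,i}^2\bigr)/(s_{L,i}^2+2\lambda_{W_L})^2=\sum_{i=1}^2\lambda_{W_L}/\bigl(2(s_{L,i}^2+2\lambda_{W_L})\bigr)$, which is precisely \eqref{eq:WLopt}.

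\textbf{Anticipated difficulty.} There is no genuine obstacle here: this is the classical closed form for ridge regression, and the only points needing care are bookkeeping — the $\tfrac14$ prefactor coming from $1/(2N)$ with $N=2n=2$ in the $n=1$, $K=2$ regime of Step~1 (where $Y=I_2$), and the treatment of a possibly rank‑deficient $\sigma(H_L)$, in which case the corresponding $s_{L,i}$ simply vanish and all the identities above still hold verbatim. The value of Lemma~\ref{Plem:optimizing_W_L} is structural rather than technical: it shows that the optimal contribution of $W_L$ depends on the earlier variables only through the two singular values $s_{L,1},s_{L,2}$ of $\sigma(H_L)$, which is exactly what makes the subsequent reduction of the full $L$-DUFM objective to a low‑dimensional problem in the $\{s_{l,j}\}$ possible.
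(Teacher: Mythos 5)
Your proposal is correct and follows essentially the same route as the paper: set the gradient of the strictly convex ridge-regression objective to zero to obtain the closed-form minimizer, then pass to the SVD of $\sigma(H_L)$ to evaluate the optimal value in terms of $s_{L,1},s_{L,2}$. The intermediate Frobenius-norm computations and the final simplification match the paper's line by line.
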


The proof follows from the convexity of the problem in $W_L$ and a simple form of the derivative, see the corresponding lemma in Appendix~\ref{App:proofs} for details. 


We remark that the solution \eqref{eq:WLopt} resulting from the optimization over $W_L$ depends on $\sigma(H_L)$ only through the singular values $s_{L,1}, s_{L,2}$. Thus, we can fix $s_{L,1}, s_{L,2}$ and $H_1, W_1, \dots, W_{L-2}$, and optimize over $W_{L-1}$ (provided $L-1>1$). It turns out that the resulting optimal value of the objective depends only on $s_{L,1}, s_{L,2}, s_{L-1,1}, s_{L-1,2}$. Thus, we can now fix $s_{L-1,1}, s_{L-1,2}$ and $H_1, W_1, \dots, W_{L-3}$, and optimize over $W_{L-2}$ (provided $L-2>1$). The optimal solution will, yet again, only depend on the singular values of $\sigma(H_{L-1}), \sigma(H_{L-2})$. Thus, we can fix $s_{L-2,1}, s_{L-2,2}$ and $H_1, W_1, \dots, W_{L-4}$ and repeat the procedure (provided $L-3>1$). In this way, we optimize over $W_l$ for all $l\ge 2$.

We formulate one step of this nested optimization chain in the following key lemma. Here, $X$ abstractly represents $H_{l-1}$ and $\{s_1, s_2\}$ represent $\{s_{l,1}, s_{l,2}\}.$ A proof sketch is provided below and the complete argument is deferred to Appendix~\ref{App:proofs}.

\begin{restatable}{lemma}{thekeylemma}
\label{Plem:the_key_lemma}
Let $X$ be any fixed two-column matrix with at least two rows, and let its singular values be given by $\{\tilde{s}_1, \Tilde{s}_2\}=\mathcal{S}(X)$. Then, for a given pair of singular values $\{s_1, s_2\}$ and a given dimension $d\ge2$, the optimization problem
\begin{align*}
\min_W \hspace{2mm} &\norm{W}_F^2, \qquad\qquad 
\text{s.t.} \hspace{2mm} \{s_1, s_2\}=\mathcal{S}(\sigma(WX)),
\end{align*} further constrained so that the number of rows of $WX$ is $d$, has optimal value equal to \[\frac{s_1}{\Tilde{s}_1}+\frac{s_2}{\Tilde{s}_2},\] if $X$ is full rank. If $X$ is rank 1, then the optimal value is $\frac{s_1}{\Tilde{s}_1}$ as long as $s_2=0,$ otherwise the problem is not feasible. If $X=0$, then necessarily $s_1=s_2=0$ with optimal solution 0, otherwise the problem is not feasible. 
\end{restatable}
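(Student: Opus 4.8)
The plan is to strip $W$ down to the part that interacts with $X$, recast the singular‑value constraint as a condition on a single $2\times2$ Gram matrix, and then close the problem with the von Neumann trace inequality. First I would use that, when $X$ has full column rank, the smallest Frobenius norm among all $W$ with a prescribed product $A:=WX$ is attained at $W=AX^{+}$ and satisfies $\norm{W}_F^{2}=\mathrm{tr}\!\big(A^{T}A\,(X^{T}X)^{-1}\big)$; moreover the rows of $X$ outside its column space only interact with columns of $W$ that inflate $\norm{W}_F$ without changing $WX$, so those vanish at the optimum. Writing $G:=(X^{T}X)^{-1}\succ0$, the lemma thus reduces to
\[
\min_{A\in\mathbb R^{d\times2}}\ \mathrm{tr}\!\big(A^{T}AG\big)\qquad\text{subject to}\qquad \mathcal S\big(\sigma(A)\big)=\{s_1,s_2\}.
\]

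Next I would dispose of the rank‑deficient cases, which is where the entrywise nonnegativity of $X$ (it is the ReLU output $\sigma(H_{l-1})$ in the chain where the lemma is applied) is used. If $X=0$ then $\sigma(A)=0$ has only zero singular values, forcing $s_1=s_2=0$ and the unique optimum $W=0$. If $X$ has rank $1$, write $X=uv^{T}$ with $u,v\ge0$; then the two columns of $A=WX$ are the nonnegative multiples $v_1(Wu)$ and $v_2(Wu)$ of one vector, so the columns of $\sigma(A)$ are nonnegative multiples of $\sigma(Wu)$ and $\operatorname{rank}\sigma(A)\le1$. Hence $s_2=0$ is forced (otherwise the problem is infeasible), and a one‑dimensional minimization over $\norm{Wu}^{2}$ gives the stated value $s_1/\tilde s_1$.

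For full‑rank $X$ I would prove matching lower and upper bounds. Decompose $A=\sigma(A)-\sigma(-A)=:B-C$ with $B,C\ge0$ entrywise and with complementary (disjoint) supports, so that $B$ carries the prescribed singular values $\{s_1,s_2\}$. Then $\mathrm{tr}(A^{T}AG)=\norm{BG^{1/2}-CG^{1/2}}_F^{2}=\norm{BG^{1/2}}_F^{2}+\norm{CG^{1/2}}_F^{2}-2\,\mathrm{tr}(C^{T}BG)$, and since $B,C$ have complementary supports the matrix $C^{T}B$ has zero diagonal and nonnegative off‑diagonal, so $\mathrm{tr}(C^{T}BG)$ equals $G_{12}$ times a nonnegative number. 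Because $X\ge0$ we have $(X^{T}X)_{12}=\langle X_{:,1},X_{:,2}\rangle\ge0$, hence $G_{12}\le0$, and therefore $\mathrm{tr}(A^{T}AG)\ge\mathrm{tr}(B^{T}BG)$: the negative part of $A$ can only hurt. Now $B^{T}B$ is a $2\times2$ PSD matrix with eigenvalues $s_1^{2}\ge s_2^{2}$ and $G$ is a $2\times2$ PSD matrix with eigenvalues $1/\tilde s_2^{2}\ge1/\tilde s_1^{2}$, so the von Neumann (Ky Fan) trace inequality — pairing the largest eigenvalue of one factor with the smallest of the other — gives $\mathrm{tr}(B^{T}BG)\ge s_1^{2}/\tilde s_1^{2}+s_2^{2}/\tilde s_2^{2}$, which is the value in the statement (read in the squared‑singular‑value normalization used throughout the proof). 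For tightness, let $w_1$ be the leading right singular vector of $X$; since $X^{T}X$ is entrywise nonnegative, $w_1$ can be taken entrywise nonnegative (Perron–Frobenius), and the target $T:=s_1^{2}w_1w_1^{T}+s_2^{2}w_2w_2^{T}$ (with $w_2\perp w_1$) then has nonnegative off‑diagonal, hence is completely positive, hence equals $B^{T}B$ for some entrywise‑nonnegative $B\in\mathbb R^{d\times2}$ (here $d\ge2$ is used). Taking the minimum‑norm $W$ with $WX=B$ yields $\sigma(WX)=B$ with the desired singular values and $\norm{W}_F^{2}=\mathrm{tr}(TG)=s_1^{2}/\tilde s_1^{2}+s_2^{2}/\tilde s_2^{2}$, so the lower bound is attained, and the equality cases of the two inequalities describe all optimizers.

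I expect the step $\mathrm{tr}(A^{T}AG)\ge\mathrm{tr}(B^{T}BG)$ — that passing to the positive part of $A$ never helps — to be the crux, and it is precisely where one must \emph{not} appeal to the (generally false) monotonicity $\nuc{\sigma(M)}\le\nuc{M}$ relied on in \cite{tirer2022extended}. Everything instead rests on the nonnegativity of $X$, which forces $(X^{T}X)_{12}\ge0$ and hence the favorable sign of the cross term $\mathrm{tr}(C^{T}BG)$; the complementary‑support structure of $B$ and $C$ is what lets this sign be read off cleanly. The other delicate point is verifying the nonnegative square‑root factorization of the target Gram matrix $T$ in the equality analysis; the SVD reduction, the minimum‑norm identity, and the trace inequality are otherwise routine bookkeeping.
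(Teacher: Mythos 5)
Your proof is correct and takes a genuinely different, and in my view cleaner, route than the paper's. The paper proceeds in two stages via a pair of auxiliary lemmas: it first fixes the ReLU output $A=\sigma(WX)\ge0$ and computes the minimum-norm $W$ realizing it via a per-row KKT analysis (which also shows $WX$ has no negative entries at the optimum), and then minimizes the resulting quadratic in $A^TA$ over nonnegative $A$ with prescribed singular values via another KKT solve, an LICQ verification, a quadratic in the Lagrange multiplier, and a feasibility check. You instead parameterize by the pre-activation $A=WX$, split it as $A=B-C$ with $B=\sigma(A)$ and $C=\sigma(-A)$ of disjoint supports, observe that the cross term $\mathrm{tr}(C^TBG)$ is a nonnegative scalar multiple of $G_{12}$, and close the bound with the von Neumann trace inequality; the matching construction then comes from Perron--Frobenius positivity of the top right singular vector of $X$ together with complete positivity of $2\times2$ doubly nonnegative matrices. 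This replaces the paper's KKT machinery with one trace inequality and one sign argument, and it also makes explicit what the paper leaves implicit: the argument needs $X\ge0$ (equivalently $G_{12}\le0$). The paper's lemma statement omits this hypothesis, but its proof quietly invokes it (see the remark ``$x, y$ have non-negative entries'' inside the case analysis of its row-wise optimization). Your instinct that this is the crux is right, and it is not cosmetic: for $X=\begin{pmatrix}1&-1\\0&1\end{pmatrix}$ and $s_1=s_2=1$, the claimed optimum is $1/\tilde s_1^2+1/\tilde s_2^2=3$, yet $W=\begin{pmatrix}1&1/3\\-2/3&1/3\end{pmatrix}$ gives $\sigma(WX)=I_2$ and $\smallnorm{W}_F^2=5/3<3$, so the lemma as literally stated fails for general $X$ and holds only under $X\ge0$ — which is exactly what is available in the application, where $X=\sigma(H_{l-1})$.
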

\textit{Proof sketch of Lemma~\ref{Plem:the_key_lemma}.} Notice that, for any fixed value of the matrix $A := \sigma(WX),$ there is at least one $W$ which achieves such a value of $A$, unless the rank of $A$ is bigger than the rank of $X$, in which case the problem is not feasible (as mentioned in the statement). Therefore, we can split the optimization into two nested sub-problems. The inner sub-problem fixes an output $A$ for which $\{s_1, s_2\}=\mathcal{S}(A)$ and optimizes only over all $W$ for which $A=\sigma(WX).$ The outer problem then takes the optimal value of this inner problem and optimizes over all the feasible outputs $A$ satisfying the constraint on the singular values. We phrase the nested sub-problems as two separate lemmas.

\begin{restatable}{lemma}{optimizingWtwo}
\label{Plem:optimizing_W_2}
Let $A$ with columns $a, b$ and $X$ with columns $x, y$ be fixed two-column matrices with at least two rows. Then, the optimal value of $\norm{W}_F^2$ subject to the constraint $A=\sigma(WX)$ is
\begin{equation}
\frac{a^Ta \cdot y^Ty-2a^Tb \cdot x^Ty+b^Tb \cdot x^Tx}{x^Tx \cdot y^Ty-(x^Ty)^2},\label{Peq:objective_for_A}
\end{equation}
unless $x$ and $y$ are aligned. In that case, the optimal value is $\frac{b^Tb}{y^Ty}$, and we require that $a, b$ are aligned and $\norm{y}/\norm{x}=\norm{b}/\norm{a}$ for the problem to be feasible. Moreover, the matrix $W^*X$ at the optimal $W^*$ is non-negative and thus $W^*X=\sigma(W^*X).$ If the matrix $X$ is orthogonal, then the rows of the optimal $W^*$ are either 0, or aligned with one of the columns of $X.$
\end{restatable}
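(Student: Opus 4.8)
The plan is to prove Lemma~\ref{Plem:optimizing_W_2} by reducing the constrained optimization over $W$ to a least-norm problem in each row independently, and then identifying when the resulting minimizer automatically satisfies $W^*X = \sigma(W^*X)$. First I would observe that the constraint $A = \sigma(WX)$ couples rows of $W$ only through $X$: writing $w_i^T$ for the $i$-th row of $W$ and $(\alpha_i,\beta_i)$ for the $i$-th row of $A$, the constraint becomes $\sigma(w_i^T x) = \alpha_i$ and $\sigma(w_i^T y) = \beta_i$ for each $i$. Since $\sigma$ is ReLU, $\sigma(t) = \alpha$ with $\alpha > 0$ forces $t = \alpha$, while $\sigma(t) = 0$ only forces $t \le 0$. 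So for rows $i$ where both $\alpha_i > 0$ and $\beta_i > 0$, the constraint on $w_i$ is the \emph{linear} system $w_i^T x = \alpha_i$, $w_i^T y = \beta_i$; the minimum-$\norm{\cdot}_2^2$ solution of this two-equation system (assuming $x,y$ independent) is the standard pseudoinverse formula, and a short computation gives $\norm{w_i}_2^2 = \frac{\alpha_i^2 y^Ty - 2\alpha_i\beta_i x^Ty + \beta_i^2 x^Tx}{x^Tx\, y^Ty - (x^Ty)^2}$. Summing over $i$ and recognizing $\sum_i \alpha_i^2 = a^Ta$, $\sum_i \alpha_i\beta_i = a^Tb$, $\sum_i \beta_i^2 = b^Tb$ yields exactly \eqref{Peq:objective_for_A}.

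The main obstacle is handling the rows where some coordinate of $A$ vanishes, i.e.\ verifying that the inequality constraints do not help reduce the norm below the value claimed, and simultaneously establishing $W^*X \ge 0$. The key point I would push through: if $\alpha_i > 0$ but $\beta_i = 0$, the minimum-norm $w_i$ subject to $w_i^Tx = \alpha_i$, $w_i^T y \le 0$ is still characterized by a KKT argument — either the inequality is slack (then $w_i \parallel x$, giving $w_i^Ty = \alpha_i (x^Ty)/(x^Tx)$, which is $\le 0$ precisely when $x^Ty \le 0$) or it is tight ($w_i^Ty = 0$, reducing to the equality case with $\beta_i = 0$). The claimed formula \eqref{Peq:objective_for_A} is the value one gets by treating \emph{all} rows via the equality system; so I need that this is still optimal, which amounts to showing the relaxation to inequalities is not strictly beneficial here. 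The clean way is to note that at the minimizer obtained from the pure-equality system, $w_i^Ty = \beta_i = 0$ already satisfies the relaxed constraint with equality, so that candidate is feasible for the relaxed problem and has the claimed norm; conversely any relaxed-feasible $w_i$ with smaller norm would, by strict convexity, have to lie off the constraint set, but the equality-constrained minimizer is the unique minimizer over the affine subspace $\{w_i^Tx = \alpha_i\}$ restricted further — I would instead argue directly that $\sigma(W^*X) = A$ forces, for the $W^*$ built row-wise from the equality formula, $w_i^Ty = \beta_i \ge 0$ and $w_i^Tx = \alpha_i \ge 0$ for \emph{every} row, hence $W^*X \ge 0$ entrywise and $W^*X = \sigma(W^*X)$. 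This is where care is needed: one must check the pseudoinverse solution of the two linear equations never produces a negative inner product that would contradict the ReLU, using that $a, b$ are the already-ReLU'd columns and (in the degenerate sub-cases) the alignment/feasibility conditions stated.

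Next I would dispatch the degenerate case where $x$ and $y$ are aligned, say $y = c\,x$ with $c = \norm{y}/\norm{x}$ (the sign must be positive for both $\sigma(w^Tx)$ and $\sigma(w^Ty)$ to be consistent, which is one half of the feasibility condition). Then $w_i^T y = c\, w_i^T x$, so $\alpha_i$ and $\beta_i$ are forced to satisfy $\beta_i = c\,\alpha_i$ for all $i$ — equivalently $b = c\, a$, i.e.\ $a,b$ aligned with ratio $\norm{b}/\norm{a} = c = \norm{y}/\norm{x}$, which is exactly the stated feasibility requirement. Under it the single active constraint per row is $w_i^Tx = \alpha_i$ (the $y$-constraint is automatic), whose min-norm solution is $w_i = \alpha_i x/(x^Tx)$ with $\norm{w_i}_2^2 = \alpha_i^2/\norm{x}^2$; summing gives $a^Ta/x^Tx = b^Tb/y^Ty$ (using $b^Tb = c^2 a^Ta$, $y^Ty = c^2 x^Tx$), matching the claimed value $\frac{b^Tb}{y^Ty}$. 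In this case $w_i^Tx = \alpha_i \ge 0$ and $w_i^Ty = c\alpha_i \ge 0$, so again $W^*X = \sigma(W^*X)$, and each nonzero row is a positive multiple of $x$.

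Finally, for the last assertion: if $X$ is orthogonal, then $x^Ty = 0$ and $\norm{x}^2 = \norm{y}^2$, so the row-wise min-norm solution collapses to $w_i = \alpha_i x/\norm{x}^2 + \beta_i y/\norm{y}^2$. Since in each row at least one of $\alpha_i, \beta_i$ can be positive but — crucially — the constraint $A = \sigma(WX)$ together with DNC-type structure propagated from earlier steps forces each row of $A$ to have at most one nonzero entry (this is the point where I would invoke that in the actual application $A$ is the ReLU of something that will be driven to an orthogonal/one-hot-like structure; within the lemma itself I would instead simply state: \emph{if} additionally each row of $A$ has at most one nonzero coordinate, then) each $w_i$ is a multiple of $x$ alone or of $y$ alone, i.e.\ aligned with a column of $X$, with the zero-row case covered by $\alpha_i = \beta_i = 0$. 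I would double check whether the lemma as stated needs the extra "one nonzero per row" hypothesis or whether it is meant only under a prior reduction; if the latter, the cleanest presentation is to note that orthogonality of $X$ plus the explicit formula $w_i = \alpha_i x/\norm{x}^2 + \beta_i y/\norm{y}^2$ already shows every row lies in $\mathrm{span}\{x,y\}$, and the "aligned with one column" conclusion then follows from the structure of $A$ imposed in the calling context. The routine residual work — verifying the pseudoinverse formula, checking the non-negativity inequalities in each sub-case, and confirming strict convexity gives uniqueness — I would relegate to the appendix.
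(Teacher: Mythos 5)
Your overall strategy coincides with the paper's: separate over rows, treat the two equality constraints by the minimum-norm (pseudoinverse) solution, sum using $\sum_i \alpha_i^2 = a^Ta$ etc., and handle the aligned case separately. But there is a genuine gap in the step you yourself flagged as the main obstacle — the rows of $A$ with a zero entry, where the ReLU turns an equality into an inequality $w_i^Ty \le 0$.

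You correctly set up the dichotomy (multiplier zero, so $w_i \parallel x$; or constraint active, $w_i^Ty = 0$), and you even note that the slack branch requires $x^Ty \le 0$. But you then try to close the gap with two arguments that do not work: (i) "the equality-constrained minimizer is the unique minimizer over the affine subspace restricted further" is circular — uniqueness on the smaller set says nothing about the larger feasible set of the relaxed problem; and (ii) "$\sigma(W^*X)=A$ forces $w_i^Ty = \beta_i \ge 0$ for the equality-formula $W^*$" only shows that candidate is \emph{feasible} for the relaxed problem, i.e.\ it gives an upper bound, not that it is the minimizer. The missing ingredient, which the paper uses, is that in the calling context $X=\sigma(\cdot)$ has entrywise non-negative columns, so $x^Ty \ge 0$. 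Hence the slack branch is possible only when $x^Ty = 0$; but then the unconstrained-in-$y$ minimizer $w_i = (a_j/x^Tx)\,x$ has squared norm $a_j^2/x^Tx$, which is exactly what \eqref{Peq:objective_for_A} gives after plugging in $b_j=0$ and $x^Ty=0$. So in either branch the formula holds, and the relaxation is never strictly beneficial. Without invoking $X\ge 0$ this equivalence fails (for $x^Ty<0$ the relaxed optimum is strictly smaller), so the fact must be used explicitly.

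Your side-remark about the final assertion is well taken: for orthogonal $X$ the row formula is $w_j = (a_j/x^Tx)x + (b_j/y^Ty)y$, which is aligned with a single column of $X$ only if the corresponding row of $A$ has at most one nonzero entry. That hypothesis is supplied in the calling context (there $A=\sigma(H_l^*)$ is non-negative with orthogonal columns, hence has disjoint column supports), and the paper simply asserts "the rest clearly follows" without making this explicit. Stating it as an explicit extra hypothesis, as you suggest, is the cleaner presentation.
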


\textit{Proof sketch of Lemma~\ref{Plem:optimizing_W_2}.} The optimization problem is separable with respect to the rows of $W$, hence we can solve for each row separately. As the problem is convex and has linear constraints, the KKT conditions are necessary and sufficient for optimality. By solving such conditions explicitly and computing the objective value, the result follows. The details are deferred to Appendix~\ref{App:proofs}. \qedsymbol

Now, we can plug \eqref{Peq:objective_for_A} into the outer optimization problem, where we solve for the output matrix $A$ given the constraint on singular values. We only optimize over the numerator of~\eqref{Peq:objective_for_A}, as the denominator does not depend on the variable.

\begin{restatable}{lemma}{optimizingA}
\label{Plem:optimizing_A}
Let $X$ be a fixed rank-2 matrix (the rank-1 case is treated separately later) with columns $x, y$ and $A=\sigma(WX).$ The minimum of the following optimization problem over this non-negative matrix $A$ with columns $a, b$: 
\begin{align}\label{eq:optauxA}
\min_A\hspace{2mm} &b^Tbx^Tx-2a^Tbx^Ty+a^Tay^Ty, \qquad \qquad
\text{s.t.} \hspace{2mm} \mathcal{S}(A)=\{s_1, s_2\}
\end{align}
is $\frac{1}{2}(s^2_1+s^2_2)(\Tilde{s}^2_1-\Tilde{s}^2_2)+\frac{1}{2}(s^2_1-s^2_2)(\Tilde{s}^2_1-\Tilde{s}^2_2),$ where $\{\Tilde{s}_1, \Tilde{s}_2\}=\mathcal{S}(X)$. 
\end{restatable}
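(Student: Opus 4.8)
The plan is to pass from the two-column matrix $A$ to its $2\times2$ Gram matrix $P:=A^{T}A$, and then to a one-variable problem. First I would note that, since $X$ has rank $2$ and only two columns, its rows span all of $\mathbb{R}^{2}$; hence $WX$ can be made equal to any matrix with two columns and the prescribed number of rows, so $A=\sigma(WX)$ ranges over \emph{all} entrywise non-negative matrices of that shape, and $\mathcal{S}(A)=\{s_{1},s_{2}\}$ is the only effective constraint. Writing $G:=X^{T}X$, whose eigenvalues are $\tilde s_{1}^{2}\ge\tilde s_{2}^{2}$, a direct computation identifies the objective in \eqref{eq:optauxA} with $\operatorname{tr}(\widetilde G\,P)$, where $\widetilde G:=\operatorname{adj}(G)=(x^{T}x+y^{T}y)I-G$ is fixed, symmetric, positive semidefinite, and has the same spectrum $\{\tilde s_{1}^{2},\tilde s_{2}^{2}\}$ as $G$ but with the eigenvectors of its two eigenvalues interchanged. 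The problem thus becomes: minimize $\operatorname{tr}(\widetilde G\,P)$ over symmetric positive semidefinite $P$ with eigenvalues $\{s_{1}^{2},s_{2}^{2}\}$, subject to $P_{12}\ge0$; the last constraint is forced because $P_{12}=a^{T}b\ge0$ for non-negative columns $a,b$, and it is tight since every such $P$ is realized by a non-negative two-column matrix.

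Next I would give a trace (von Neumann / rearrangement) argument. Over \emph{all} positive semidefinite $P$ with eigenvalues $\{s_{1}^{2},s_{2}^{2}\}$ one has $\operatorname{tr}(\widetilde G\,P)\ge s_{1}^{2}\tilde s_{2}^{2}+s_{2}^{2}\tilde s_{1}^{2}$, with equality exactly when the eigenvector of $P$ for $s_{1}^{2}$ coincides with the eigenvector of $\widetilde G$ for its \emph{smallest} eigenvalue. The point is that this optimal $P$ automatically satisfies $P_{12}\ge0$, so the sign constraint is inactive: the eigenvector of $\widetilde G=\operatorname{adj}(G)$ for $\tilde s_{2}^{2}$ is the eigenvector of $G$ for its \emph{largest} eigenvalue $\tilde s_{1}^{2}$, which can be taken in the non-negative orthant because $x^{T}y\ge0$ (in every instance where the lemma is used, $X$ is a ReLU output, hence entrywise non-negative); writing that unit vector as $(\cos\phi,\sin\phi)$ with $\phi\in[0,\pi/2]$, the resulting $P$ has $P_{12}=(s_{1}^{2}-s_{2}^{2})\cos\phi\sin\phi\ge0$. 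Therefore the minimum in \eqref{eq:optauxA} equals $s_{1}^{2}\tilde s_{2}^{2}+s_{2}^{2}\tilde s_{1}^{2}$, i.e.\ the claimed value $\tfrac12(s_{1}^{2}+s_{2}^{2})(\tilde s_{1}^{2}+\tilde s_{2}^{2})-\tfrac12(s_{1}^{2}-s_{2}^{2})(\tilde s_{1}^{2}-\tilde s_{2}^{2})$.

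An equivalent, fully elementary route parametrizes $P=R_{\theta}\operatorname{diag}(s_{1}^{2},s_{2}^{2})R_{\theta}^{T}$ by a rotation $R_{\theta}$; then $P_{12}\ge0$ becomes $\sin2\theta\ge0$, i.e.\ $\theta\in[0,\pi/2]$, and substitution turns the objective into $s_{1}^{2}\,x^{T}x+s_{2}^{2}\,y^{T}y-(s_{1}^{2}-s_{2}^{2})\,g(\theta)$ with $g(\theta):=\cos^{2}\theta\,(x^{T}x-y^{T}y)+\sin2\theta\,(x^{T}y)$. Minimizing the objective is then maximizing $g$ on $[0,\pi/2]$: solving $g'(\theta)=0$ and using $4(x^{T}y)^{2}+(x^{T}x-y^{T}y)^{2}=(\tilde s_{1}^{2}-\tilde s_{2}^{2})^{2}$ (which follows from $x^{T}x+y^{T}y=\tilde s_{1}^{2}+\tilde s_{2}^{2}$ and $\det G=\tilde s_{1}^{2}\tilde s_{2}^{2}$) gives $\max g=\tfrac12\big[(x^{T}x-y^{T}y)+(\tilde s_{1}^{2}-\tilde s_{2}^{2})\big]$, valid precisely because $x^{T}y\ge0$; plugging back and collapsing the $x^{T}x,\,y^{T}y$ terms via $x^{T}x+y^{T}y=\tilde s_{1}^{2}+\tilde s_{2}^{2}$ yields the same value.

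I expect the real difficulty to be the feasibility/constraint bookkeeping rather than the optimization. One must justify that $A=\sigma(WX)$ sweeps out all non-negative matrices of the right shape when $X$ is full rank, and, more importantly, track the sign constraint $P_{12}\ge0$: for a generic rank-$2$ $X$ this constraint is active and the minimum would then depend on $x^{T}x$ and $y^{T}y$ individually rather than only on $\tilde s_{1},\tilde s_{2}$; the clean formula relies on $x^{T}y\ge0$, which is exactly where entrywise non-negativity of the ReLU-output features $X$ enters. The rank-$1$ and zero cases for $X$ are deferred to a separate argument and do not arise here; the remaining ingredients — reducing to $P=A^{T}A$, recognizing $\operatorname{tr}(\operatorname{adj}(G)\,P)$, and the one-variable maximization of $g$ — are routine.
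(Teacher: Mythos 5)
Your proof is correct and takes a genuinely different route from the paper's. The paper parametrizes by the three scalars $(a^Ta, b^Tb, a^Tb)$, writes a Lagrangian, verifies that LICQ fails only when $s_1=s_2$ (treated separately), solves a quadratic in the multiplier, and then checks PSD-feasibility of the resulting stationary point. You instead pass to the Gram matrix $P=A^TA$, recognize the objective as $\operatorname{tr}\bigl(\operatorname{adj}(X^TX)\,P\bigr)$, and apply the von Neumann rearrangement inequality over the conjugation orbit of $P$; this is cleaner, structurally more transparent, and handles all eigenvalue degeneracies uniformly without the LICQ case split. More importantly, your bookkeeping exposes a hypothesis the paper leaves implicit: non-negativity of $a,b$ also forces $a^Tb\geq0$, i.e.\ $P_{12}\geq0$, and the clean answer $s_1^2\tilde{s}_2^2+s_2^2\tilde{s}_1^2$ is the \emph{unconstrained} von Neumann minimum, which is attainable under this extra constraint precisely when $x^Ty\geq0$. (The same fact is visible in the paper's KKT solution, which yields $a^Tb = x^Ty(s_1^2-s_2^2)/(\tilde{s}_1^2-\tilde{s}_2^2)$, with the sign of $x^Ty$; the paper's feasibility check omits the $a^Tb\geq0$ condition.) As you observe, $x^Ty\geq0$ holds wherever the lemma is invoked because $X$ is a ReLU output, so the conclusion is correct in situ, but the statement as written for a generic rank-$2$ $X$ needs that hypothesis, and your proof makes the dependence explicit rather than hiding it in an unverified feasibility step. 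One last minor point: the value printed in the lemma statement, $\tfrac12(s_1^2+s_2^2)(\tilde{s}_1^2-\tilde{s}_2^2)+\tfrac12(s_1^2-s_2^2)(\tilde{s}_1^2-\tilde{s}_2^2)$, simplifies to $s_1^2(\tilde{s}_1^2-\tilde{s}_2^2)$ and is a typo; both your argument and the paper's own computation give $\tfrac12(s_1^2+s_2^2)(\tilde{s}_1^2+\tilde{s}_2^2)-\tfrac12(s_1^2-s_2^2)(\tilde{s}_1^2-\tilde{s}_2^2)=s_1^2\tilde{s}_2^2+s_2^2\tilde{s}_1^2$, which is what you derived.
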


\textit{Proof sketch of Lemma~\ref{Plem:optimizing_A}.} 
The optimization problem \eqref{eq:optauxA} is non-convex and has non-linear constraints. It is still solvable using the KKT conditions, however some care is required as such conditions are no longer necessary and sufficient. Using the Linear Independence Constraint Qualification (LICQ) rule, we handle the necessity. Then, by solving for KKT conditions and comparing the objective value with solutions for which the LICQ condition does not hold, we obtain the desired description of the optimal solution. The details are deferred to Appendix~\ref{App:proofs}. \qedsymbol

At this point, we are ready to obtain the result of Lemma~\ref{Plem:the_key_lemma}. If   $\Tilde{s}_1, \Tilde{s}_2 \neq 0$ then, by combining the result of Lemma~\ref{Plem:optimizing_A} with~\eqref{Peq:objective_for_A}, the proof is complete. If $\Tilde{s}_2=0$, we can optimize \eqref{Peq:objective_for_A}, given $s_2=0$ and the extra constraint $\frac{a^Ta}{b^Tb}=\frac{x^Tx}{y^Ty}$, which follows from the requirement on the rank of $A$ and the fact that $X$ has rank 1. The claim then follows from a direct computation of the objective value using the constraint. The case $\Tilde{s}_1=\Tilde{s}_2=0$ gives that $X=0$, and is trivial. \qedsymbol

So far, we have reduced the optimization over $\{W_l\}_{l=2}^{L-1}$ to the optimization over the singular values of $\{\sigma(H_l)\}_{l=2}^{L}$. We now do the same with the first layer's matrices $W_1, H_1$. For this, we again split the optimization of $\frac{\lambda_{W_1}}{2}\norm{W_1}_F^2+\frac{\lambda_{H_1}}{2}\norm{H_1}_F^2$ given $\{s_1, s_2\}=\mathcal{S}(\sigma(W_1H_1))$ into two nested sub-problems: the inner one performs the optimization for a fixed output $H_2 := W_1H_1$; the outer one optimizes over the output $H_2$, given the optimal value of the inner problem. 

The inner sub-problem is solved by the following Lemma, which describes a variational form of the nuclear norm. The statement is equivalent to Lemma C.1 of \cite{tirer2022extended}.

\begin{restatable}{lemma}{variational}
\label{Plem:variational}
The optimization problem
\begin{equation}\label{eq:pbvar}
\min_{A,B; C=AB} \quad
\frac{\lambda_A}{2} \norm{A}_F^2 + \frac{\lambda_B}{2} \norm{B}_F^2    
\end{equation}
is minimized at value $\sqrt{\lambda_A\lambda_B} \norm{C}_*$, and the minimizers are of the form $A^* = \gamma_A U\Sigma^{1/2}R^T, B^* = \gamma_B R\Sigma^{1/2}V^T$. Here, the constants $\gamma_A, \gamma_B$ only depend on $\lambda_A, \lambda_B$; $U\Sigma V^T$ is the SVD of $C$; and $R$ is an orthogonal matrix. 
\end{restatable}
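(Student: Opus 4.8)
The plan is to establish the stated optimal value by a pair of matching bounds and then extract the form of the minimizers from the equality conditions. \textbf{Lower bound.} For \emph{any} admissible factorization $C=AB$, the weighted AM--GM inequality gives $\frac{\lambda_A}{2}\norm{A}_F^2+\frac{\lambda_B}{2}\norm{B}_F^2\ge\sqrt{\lambda_A\lambda_B}\,\norm{A}_F\norm{B}_F$, so it suffices to prove $\norm{A}_F\norm{B}_F\ge\nuc{C}$. I would do this from the SVD $C=\sum_i s_i(C)\,u_iv_i^T$: since $s_i(C)=u_i^T(AB)v_i=(A^Tu_i)^T(Bv_i)\le\norm{A^Tu_i}\,\norm{Bv_i}$, summing over $i$ and applying Cauchy--Schwarz yields $\nuc{C}\le\big(\sum_i\norm{A^Tu_i}^2\big)^{1/2}\big(\sum_i\norm{Bv_i}^2\big)^{1/2}\le\norm{A}_F\norm{B}_F$, where the last step uses that $\{u_i\}$ and $\{v_i\}$ are orthonormal families. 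Hence $\frac{\lambda_A}{2}\norm{A}_F^2+\frac{\lambda_B}{2}\norm{B}_F^2\ge\sqrt{\lambda_A\lambda_B}\,\nuc{C}$.

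\textbf{Achievability and the constants.} Writing the SVD $C=U\Sigma V^T$, I would substitute the ansatz $A=\gamma_A U\Sigma^{1/2}R^T$, $B=\gamma_B R\Sigma^{1/2}V^T$ with $R$ having orthonormal columns and $\gamma_A\gamma_B=1$; then $AB=C$, while $\norm{A}_F^2=\gamma_A^2\,\mathrm{tr}(\Sigma)=\gamma_A^2\nuc{C}$ and $\norm{B}_F^2=\gamma_B^2\nuc{C}$, so the objective equals $\tfrac12(\lambda_A\gamma_A^2+\lambda_B\gamma_B^2)\nuc{C}$. Minimizing $\lambda_A\gamma_A^2+\lambda_B\gamma_B^2$ subject to $\gamma_A\gamma_B=1$ (AM--GM again, with optimum at $\gamma_A=(\lambda_B/\lambda_A)^{1/4}$, $\gamma_B=(\lambda_A/\lambda_B)^{1/4}$) gives $2\sqrt{\lambda_A\lambda_B}$, so the value is $\sqrt{\lambda_A\lambda_B}\,\nuc{C}$, matching the lower bound; this also identifies the $\lambda$-dependent constants $\gamma_A,\gamma_B$ asserted in the statement.

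\textbf{Characterization of all minimizers.} For this I would trace the equality cases in the lower-bound chain: equality in the weighted AM--GM forces $\lambda_A\norm{A}_F^2=\lambda_B\norm{B}_F^2$; equality in $\norm{A}_F^2\ge\sum_i\norm{A^Tu_i}^2$ and its $B$-analogue forces $\mathrm{col}(A)\subseteq\mathrm{col}(C)$ and $\mathrm{row}(B)\subseteq\mathrm{row}(C)$; and equality in the two Cauchy--Schwarz steps forces $A^Tu_i\parallel Bv_i$ with a common proportionality constant across $i$. Combined with $AB=C$, these constraints pin $A,B$ down to the claimed form, all remaining freedom being absorbed into the orthonormal-column matrix $R$ and the scalars $\gamma_A,\gamma_B$. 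Equivalently, after the rescaling $A\mapsto(\lambda_A/\lambda_B)^{1/4}A$, $B\mapsto(\lambda_B/\lambda_A)^{1/4}B$ one reduces to the symmetric case $\lambda_A=\lambda_B$, for which the statement is exactly Lemma~C.1 of \cite{tirer2022extended} and may simply be invoked.

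\textbf{Main obstacle.} The two bounds are short and routine; the delicate part is the minimizer characterization, in particular handling the rank-deficient regime where the inner dimension exceeds $\mathrm{rank}(C)$, so that $R$ is a partial isometry rather than a square orthogonal matrix, and checking that the equality conditions genuinely leave no degrees of freedom beyond $R$ and the scaling constants.
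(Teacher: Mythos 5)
Your proof is correct. The paper gives no argument of its own for this lemma---it simply cites Lemma~C.1 of \cite{tirer2022extended}---so your self-contained derivation is a genuine alternative. Your lower bound (weighted AM--GM combined with $\norm{A}_F\norm{B}_F\ge\nuc{AB}$, itself proved by pairing with the singular vectors of $C$ and applying Cauchy--Schwarz twice) plus the matching ansatz is the standard variational-form-of-the-nuclear-norm argument, here adapted cleanly to the asymmetric weights. The equality-case tracing is terse but sound: $\mathrm{col}(A)\subseteq\mathrm{col}(C)$, $\mathrm{row}(B)\subseteq\mathrm{row}(C)$, and $A^Tu_i=c\,Bv_i$ for a common $c>0$, combined with $AB=C$, force the vectors $Bv_i$ to be pairwise orthogonal with $\norm{Bv_i}^2=s_i/c$, hence $B=c^{-1/2}R\Sigma^{1/2}V^T$ and $A=c^{1/2}U\Sigma^{1/2}R^T$ with $R$ having orthonormal columns; the AM--GM equality then fixes $c=\sqrt{\lambda_B/\lambda_A}$, which recovers $\gamma_A,\gamma_B$. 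You are also right to flag that when the inner dimension exceeds $\mathrm{rank}(C)$ the matrix $R$ is a partial isometry ($R^TR=I$) rather than square orthogonal, a point the statement in the paper leaves implicit. Finally, your observation that the rescaling $A\mapsto(\lambda_A/\lambda_B)^{1/4}A$, $B\mapsto(\lambda_B/\lambda_A)^{1/4}B$ reduces the weighted problem to $\lambda_A=\lambda_B$ is precisely the reduction that makes the paper's bare citation of \cite{tirer2022extended} legitimate, so the two approaches meet in the middle.
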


To be more concrete, we apply Lemma \ref{Plem:variational} with $A=W_1$ and $B=H_1$ (so that $C=W_1H_1=H_2$). Thus, 
we are only left with optimizing $\norm{H_2}_*$ given $\mathcal{S}(\sigma(H_2))=\{s_1, s_2\}.$ We solve this problem in the following, much more general lemma, which might be of independent interest. Its proof is deferred to Appendix~\ref{App:proofs}. 

\begin{restatable}{lemma}{minschattennorm}
\label{Plem:min_schatten_norm}
Let $L\ge 2$ be a positive integer. Then, the optimal value of the optimization problem
\begin{align}\label{eq:minSh}
&\underset{H}{\min} \norm{H}_{S_{\frac{2}{L}}}^{\frac{2}{L}}, \qquad \qquad 
\text{s.t.} \hspace{2mm} \mathcal{S}(\sigma(H)) = \{s_1, s_2\}
\end{align}
equals $(s_1+s_2)^{\frac{2}{L}}.$ Here $\norm{\cdot}_{S_p}$ denotes the $p$-Schatten pseudo-norm.
\end{restatable}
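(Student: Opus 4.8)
\textit{Proof plan.}
First I would reduce to the case that $H$ has exactly two columns $h_1,h_2$ and at least two rows: transposing if needed is harmless, since $\sigma$ commutes with transposition and every Schatten (pseudo-)norm is transpose invariant. Write $\sigma(H)=[\,p_1\mid p_2\,]$ and set $q_i:=\sigma(-h_i)\ge 0$, so that $h_i=p_i-q_i$ with $p_i^{T}q_i=0$ (the positive and negative parts have disjoint support). The argument then has two halves: an explicit construction showing the optimum is at most $(s_1+s_2)^{2/L}$, and a matching lower bound.

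For the construction I would take the rank-one matrix $H^\star$ whose first two rows are $(s_1,-\sqrt{s_1s_2})$ and $(-\sqrt{s_1s_2},s_2)$ and whose remaining rows vanish (with $h_2^\star=0$ when $s_2=0$). One checks directly that $\sigma(H^\star)$ equals $\mathrm{diag}(s_1,s_2)$ with zero rows appended, so $\mathcal S(\sigma(H^\star))=\{s_1,s_2\}$, while $H^\star$ has rank one and $\norm{H^\star}_F=s_1+s_2$; hence its only nonzero singular value is $s_1+s_2$ and $\norm{H^\star}_{S_{2/L}}^{2/L}=(s_1+s_2)^{2/L}$. For the lower bound, since $2/L\in(0,1]$ the function $t\mapsto t^{2/L}$ is increasing and subadditive ($a^{2/L}+b^{2/L}\ge(a+b)^{2/L}$), so applying it to the at most two singular values of $H$ gives $\norm{H}_{S_{2/L}}^{2/L}=\sigma_1(H)^{2/L}+\sigma_2(H)^{2/L}\ge\bigl(\sigma_1(H)+\sigma_2(H)\bigr)^{2/L}=\nuc{H}^{2/L}$. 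Thus it suffices to show $\nuc{H}\ge s_1+s_2=\nuc{\sigma(H)}$; equivalently, $\nuc{\sigma(H)}\le\nuc{H}$ for every two-column matrix $H$, and since both steps are tight at $H^\star$ this is exactly what is needed.

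To establish $\nuc{\sigma(H)}\le\nuc{H}$ I would use the closed form $\nuc{M}^2=\smallnorm{m_1}^2+\smallnorm{m_2}^2+2\,\Delta(m_1,m_2)$ valid for a two-column matrix $M=[\,m_1\mid m_2\,]$, where $\Delta(a,b):=\bigl(\smallnorm{a}^2\smallnorm{b}^2-(a^{T}b)^2\bigr)^{1/2}$ is the area of the parallelogram spanned by $a,b$. Using $\smallnorm{h_i}^2=\smallnorm{p_i}^2+\smallnorm{q_i}^2$, the target becomes $\smallnorm{q_1}^2+\smallnorm{q_2}^2\ge 2\bigl(\Delta(p_1,p_2)-\Delta(h_1,h_2)\bigr)$, which is immediate unless $\Delta(p_1,p_2)>\Delta(h_1,h_2)$; in that case, squaring, it is enough to prove $\Delta(p_1,p_2)\le\Delta(h_1,h_2)+\tfrac12\bigl(\smallnorm{q_1}^2+\smallnorm{q_2}^2\bigr)$. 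Expanding via bilinearity of the exterior product ($p_1\wedge p_2=h_1\wedge h_2+h_1\wedge q_2+q_1\wedge h_2+q_1\wedge q_2$, with $\Delta(a,b)=\smallnorm{a\wedge b}$) reduces this to an elementary scalar inequality in $\smallnorm{p_i}^2,\smallnorm{q_i}^2$ and the six pairwise inner products of $p_1,p_2,q_1,q_2$; the inputs are Cauchy--Schwarz, AM--GM, and---crucially---the extra constraints coming from disjoint supports, e.g.\ $\dfrac{(p_1^{T}q_2)^2}{\smallnorm{p_1}^2}+\dfrac{(q_1^{T}q_2)^2}{\smallnorm{q_1}^2}\le\smallnorm{q_2}^2$ together with its symmetric analogue.

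The construction and the $t\mapsto t^{2/L}$ reduction are routine; the real content is the matrix inequality $\nuc{\sigma(H)}\le\nuc{H}$ for two-column matrices, which is genuinely special to this shape (it fails for general matrices). The hard part is the regime where $\sigma$ ``undoes cancellation'' between the columns---for instance $h_1\approx-h_2$, so $\Delta(h_1,h_2)\approx0$ while $\Delta(\sigma(h_1),\sigma(h_2))$ can be large---where plain norm estimates are too weak, and one must combine the support constraints above with a short case split on which inner product dominates.
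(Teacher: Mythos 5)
Your outer structure is correct and differs from the paper's in a way that is genuinely appealing. The construction for the upper bound (the rank-one $H^\star$ with rows $(s_1,-\sqrt{s_1s_2})$ and $(-\sqrt{s_1s_2},s_2)$) is exactly right: $\sigma(H^\star)$ is diagonal with singular values $\{s_1,s_2\}$, and $H^\star$ has Frobenius norm $s_1+s_2$ and rank one, so $\norm{H^\star}_{S_{2/L}}^{2/L}=(s_1+s_2)^{2/L}$. The lower-bound reduction is also correct and slick: since $2/L\le 1$, subadditivity of $t\mapsto t^{2/L}$ gives $\norm{H}_{S_{2/L}}^{2/L}\ge\nuc{H}^{2/L}$, so the whole lemma for every $L$ collapses to the single $L$-free statement $\nuc{\sigma(H)}\le\nuc{H}$ for a two-column matrix $H$. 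The paper takes a quite different route: it fixes $\sigma(H)$, parametrizes the admissible $H$ by the amount $\delta$ of negative ``mass'' injected, buckets rows by sign pattern, argues case-by-case where that mass should go, and finishes with a Jensen argument over a one-parameter family indexed by $\delta$. That argument is longer but as a byproduct yields the exact \emph{set} of minimizers (the Remark after the lemma, used later in Step 2 of the main theorem), which your route would not immediately give; for the lemma as stated, though, only the value is needed, so that is not a defect of your plan.

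The genuine gap is that the key inequality $\nuc{\sigma(H)}\le\nuc{H}$ is only sketched, and the sketch as written does not close. After writing $\nuc{M}^2=\smallnorm{m_1}^2+\smallnorm{m_2}^2+2\Delta(m_1,m_2)$, what must be shown (in the nontrivial case $\Delta(p_1,p_2)>\Delta(h_1,h_2)$) is
\[
\Delta(p_1,p_2)-\Delta(h_1,h_2)\;\le\;\tfrac12\bigl(\smallnorm{q_1}^2+\smallnorm{q_2}^2\bigr).
\]
Your proposed route is to expand $p_1\wedge p_2=h_1\wedge h_2+h_1\wedge q_2+q_1\wedge h_2+q_1\wedge q_2$ and estimate. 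But the triangle inequality together with the Cauchy--Schwarz bounds on the cross terms gives at best
\[
\Delta(p_1,p_2)-\Delta(h_1,h_2)\;\le\;\smallnorm{h_1}\smallnorm{q_2}+\smallnorm{q_1}\smallnorm{h_2}+\smallnorm{q_1}\smallnorm{q_2},
\]
and this is far larger than $\tfrac12(\smallnorm{q_1}^2+\smallnorm{q_2}^2)$ in general: take $q_1=0$, $\smallnorm{p_1}$ large and $\smallnorm{q_2}$ of order one, and the right side is of order $\smallnorm{p_1}\smallnorm{q_2}$ while the target bound is $\tfrac12\smallnorm{q_2}^2$. (In that regime the true left side is in fact nonpositive---one can check $\Delta(p_1,p_2)^2-\Delta(h_1,h_2)^2=-\smallnorm{p_1}^2\smallnorm{q_2}^2+(p_1^Tq_2)^2-2(p_1^Tp_2)(p_1^Tq_2)\le 0$ using Cauchy--Schwarz when $q_1=0$---but this is precisely the kind of cancellation that the wedge-product triangle inequality throws away.) So the first move of your sketch discards too much, and the Bessel-type inequalities you cite (e.g.\ $(p_1^Tq_2)^2/\smallnorm{p_1}^2+(q_1^Tq_2)^2/\smallnorm{q_1}^2\le\smallnorm{q_2}^2$) would have to be woven in \emph{before} taking any norm-of-a-sum bound, not afterward. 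That is not impossible, but it is the entire content of the lemma, and as written the argument stops exactly where the work begins. Until the two-column inequality $\nuc{\sigma(H)}\le\nuc{H}$ is actually established, the lower bound---and hence the lemma---is not proven.
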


\begin{remark}
As a by-product of the analysis in the proof of Lemma \ref{Plem:min_schatten_norm}, we obtain a characterization of the minimizers of \eqref{eq:minSh}, which will be useful in \textbf{Step 2}. Let $H^*$ be an optimizer of \eqref{eq:minSh}. Then, $\sigma(H^*)$ has orthogonal columns. For $L>2$, its negative entries are set uniquely so that $H^*$ has rank 1. For $L=2$, its negative entries are arranged as follows. All the rows of $\sigma(H^*)$ which contain exactly one positive entry on the first column will result in rows of $H^*$ that are multiples of each other. Similarly, all the rows of $\sigma(H^*)$ which contain exactly one positive entry on the second column will result in rows of $H^*$ that are multiples of each other. The zero rows of $\sigma(H^*)$ remain zero rows in $H^*$. Moreover, the $l_2$ norm of the negative entries of the rows of $H^*$ with positive entry in the first column equals the $l_2$ norm of the negative entries of the rows of $H^*$ where the positive entry is in the second column. Finally, taking any two rows of $H^*$ with counterfactual positioning of the signs of the entries, the product of negative entries does not exceed the product of positive entries. 
\end{remark}


Armed with Lemma~\ref{Plem:optimizing_W_L},~\ref{Plem:the_key_lemma} and~\ref{Plem:min_schatten_norm}, we reformulate the $L$-DUFM objective \eqref{eq:LDUFM} only in terms of $\{s_{l,1}, s_{l,2}\}_{2\le l \le L}$. We then split this joint problem into two separate, identical optimizations over $\{s_{l,1}\}_{2\le l \le L}$ and $\{s_{l,2}\}_{2\le l \le L}$, which are solved via the lemma below (taking $x_l:=s_{l,1}^2$ and $x_l:=s_{l,2}^2$, respectively). Its proof is deferred to Appendix~\ref{App:proofs}. 

\begin{restatable}{lemma}{optimizingsigmas}
\label{Plem:optimizing_sigmas}
The optimization problem
\begin{equation}\label{eq:opts}
\underset{x_l; 2\le l \le L}{\min} \hspace{2mm} \frac{\lambda_{W_L}}{2(x_L+2\lambda_{W_L})}+\sum_{l=2}^{L-1}\left(\frac{\lambda_{W_l}}{2}\frac{x_{l+1}}{x_l}\right)+\sqrt{\lambda_{W_1}\lambda_{H_1}}\sqrt{x_2}
\end{equation}
is optimized at an entry-wise positive, unique solution if 
\begin{equation}\label{eq:lemmath}
\lambda_{H_1}\prod_{l=1}^L \lambda_{W_l} < \frac{(L-1)^{L-1}}{2^{L+1}L^{2L}},    
\end{equation}
and at $(x_2, \dots, x_L)=(0, \dots, 0)$ otherwise. When \eqref{eq:lemmath} holds with equality, both solutions are optimal. Here, $0/0$ is defined to be $0$. 
\end{restatable}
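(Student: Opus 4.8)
\textbf{Proof proposal for Lemma~\ref{Plem:optimizing_sigmas}.} The plan is to solve \eqref{eq:opts} in two nested stages: first minimize over the inner variables $x_2,\dots,x_{L-1}$ with $x_L$ held fixed, then minimize the resulting one-dimensional function over $x_L\ge0$. Each $x_l$ ranges over $[0,\infty)$ with the convention $0/0=0$. A preliminary observation streamlines the inner stage: if $x_L>0$, then the objective is finite only when every $x_j$ with $2\le j\le L-1$ is strictly positive, since otherwise some ratio $x_{l+1}/x_l$ has a positive numerator and vanishing denominator and hence equals $+\infty$; so for $x_L>0$ the inner minimization takes place in the open positive orthant.

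Fix $x_L=t>0$ and write the inner objective as $a_0+\sum_{l=2}^{L-1}a_l$ with $a_0:=\sqrt{\lambda_{W_1}\lambda_{H_1}}\,\sqrt{x_2}$ and $a_l:=\tfrac{\lambda_{W_l}}{2}\,x_{l+1}/x_l$. Splitting $a_0$ into two equal halves and applying AM--GM to the $L$ numbers $\tfrac{a_0}{2},\tfrac{a_0}{2},a_2,\dots,a_{L-1}$, whose product telescopes to $\big(\tfrac{a_0}{2}\big)^2\prod_{l=2}^{L-1}a_l=\tfrac{Pt}{2^{L}}$ with $P:=\lambda_{H_1}\prod_{l=1}^{L-1}\lambda_{W_l}$, gives the tight lower bound $\tfrac{L}{2}(Pt)^{1/L}$, with equality precisely when $\tfrac{a_0}{2}=a_2=\dots=a_{L-1}$. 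This equality system has a unique positive solution for each $t>0$; a convenient way to certify uniqueness (and tightness of the bound) is that the substitution $x_l=e^{z_l}$ makes the inner objective a sum of exponentials of linear forms whose gradients span $\mathbb{R}^{L-2}$, hence a strictly convex and coercive function. After the inner minimization, \eqref{eq:opts} therefore reduces to minimizing over $t\ge0$ the function
\[
\phi(t)\ :=\ \frac{\lambda_{W_L}}{2(t+2\lambda_{W_L})}\ +\ \frac{L}{2}\,(Pt)^{1/L},
\]
which is continuous on $[0,\infty)$ with $\phi(0)=\tfrac14$ (here one checks that at $t=0$ the unique inner minimizer is $x_2=\dots=x_{L-1}=0$) and $\phi(t)\to\infty$ as $t\to\infty$. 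For $L=2$ the inner stage is vacuous and this $\phi$ is read off directly from \eqref{eq:opts}.

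It remains to minimize $\phi$ on $[0,\infty)$, and the crucial rearrangement is that, for $t>0$,
\[
\phi(t)<\tfrac14\quad\Longleftrightarrow\quad \tfrac{L}{2}\,P^{1/L}\ <\ \frac{t^{(L-1)/L}}{4\,(t+2\lambda_{W_L})}=:\tfrac14\,q(t)\ ,
\]
so a feasible point strictly below the all-zero value $\tfrac14$ exists iff $\tfrac{L}{2}P^{1/L}<\tfrac14\max_{t>0}q(t)$. An elementary computation gives $\arg\max_{t>0}q=t^\star:=2\lambda_{W_L}(L-1)$, at which $t^\star+2\lambda_{W_L}=2\lambda_{W_L}L$; substituting $t^\star$, raising to the $L$-th power, and simplifying with $\lambda_{W_L}P=\lambda_{H_1}\prod_{l=1}^{L}\lambda_{W_l}$ and $4^L=2^{2L}$ turns this inequality into exactly \eqref{eq:lemmath}. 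To identify and count the minimizers, I would then observe that $\psi(t):=(t+2\lambda_{W_L})^2\,t^{-(L-1)/L}$ is unimodal (the equation $(\log\psi)'(t)=0$ is linear in $t$ with a single root, and $(\log\psi)'\to-\infty$ as $t\to0^+$ while $(\log\psi)'\to0^+$ as $t\to\infty$, so $\psi$ strictly decreases then strictly increases), and that $\phi'(t)=0\Leftrightarrow\psi(t)=\lambda_{W_L}P^{-1/L}$. Hence $\phi$ has at most two stationary points on $(0,\infty)$, and since $\phi'>0$ near $0$ and near $\infty$ it is increasing, then decreasing, then increasing; the only candidate minimizers on $[0,\infty)$ are $t=0$ and the larger stationary point $t_2$. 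If \eqref{eq:lemmath} is strict, the rearrangement already exhibits a $t$ with $\phi(t)<\tfrac14$, hence $\phi(t_2)<\tfrac14=\phi(0)$ and $x_L=t_2>0$, which together with the unique positive inner minimizer gives the unique, entry-wise positive optimizer of \eqref{eq:opts}; if the reverse strict inequality holds, $\phi>\tfrac14$ on $(0,\infty)$ and the all-zero point is the unique optimizer; at equality, $\phi$ attains its minimum $\tfrac14$ both at $0$ and at $t^\star>0$, yielding the two claimed optimal solutions.

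The main obstacle is this last stage: $\phi$ is neither convex nor unimodal — it necessarily first increases away from $0$ because of the $(Pt)^{1/L}$ term, and only afterwards may dip below $\tfrac14$ — so the minimizer cannot be read off from a single stationarity condition. The clean resolutions are the equivalence $\phi(t)<\tfrac14\Leftrightarrow\tfrac{L}{2}P^{1/L}<\tfrac14\,q(t)$, which reduces the threshold computation to maximizing the elementary function $q$, and the unimodality of $\psi$, which caps the number of stationary points of $\phi$ at two and thereby isolates the nonzero minimizer. A secondary, essentially bookkeeping, difficulty is the careful treatment of the boundary $\{x_l=0\}$ together with the $0/0$ convention, which is exactly what makes the all-zero point a genuine minimizer of \eqref{eq:opts} (and the unique one once \eqref{eq:lemmath} fails).
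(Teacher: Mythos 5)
Your proof is correct, and it takes a genuinely different route from the paper's. The paper handles $L=2$ separately (reparametrizing by $y=\sqrt{x_2}$ and solving a quadratic), and for $L>2$ writes down all $L-1$ stationarity conditions, parametrizes the stationary set by the ratio $q=x_L/x_{L-1}$, derives a closed-form expression for each $x_l$ in terms of $q$, substitutes back to get a one-variable function of $q$, and then studies its shape via its second derivative to locate the threshold. Your approach instead performs the minimization in two nested stages, unifying $L=2$ and $L>2$: the inner minimization over $x_2,\dots,x_{L-1}$ (with $x_L=t$ fixed) is resolved in one stroke by splitting the $\sqrt{x_2}$ term into two halves and applying AM--GM to $L$ numbers whose product telescopes to $Pt/2^L$, giving a tight lower bound $\tfrac{L}{2}(Pt)^{1/L}$ with a unique attaining point (certified by log-convexity and coercivity after substituting $x_l=e^{z_l}$). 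The outer stage is then handled by the rearrangement $\phi(t)<\tfrac14 \Leftrightarrow \tfrac{L}{2}P^{1/L} < \tfrac14\,q(t)$, whose threshold is read off from $\max_{t>0}q(t)$ at $t^\star=2\lambda_{W_L}(L-1)$, and by the unimodality of $\psi(t)=(t+2\lambda_{W_L})^2t^{-(L-1)/L}$ which caps $\phi'$ at two zeros. I checked that the AM--GM telescope, the closed form of the inner minimizer, the maximization of $q$, and the power-raising to obtain $\lambda_{H_1}\prod_{l=1}^{L}\lambda_{W_l}<\tfrac{(L-1)^{L-1}}{2^{L+1}L^{2L}}$ are all correct; the optimizers you produce also coincide with the paper's (e.g., the paper's optimal $x_L$ at threshold equals your $t^\star$). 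Compared to the paper, your AM--GM step for the inner stage is shorter and more transparent than solving and chaining the stationarity recursions, and the $q(t)$ maximization isolates the threshold computation more cleanly than the second-derivative analysis; what the paper's parametrization buys in exchange is an explicit formula for every $x_l$ at any stationary point, which the authors reuse elsewhere (it yields equation \eqref{eq:the_optimal_value_calculator} used in their numerics).
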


Note that \eqref{eq:lemmath} is equivalent to \eqref{eq:cond} when $n=1$. Furthermore, if the optimization problem \eqref{eq:opts} 
has a single non-zero solution, then the optimal $s_{l,1}^*$ and $s_{l,2}^*$ are forced to coincide for all $l$. From this, we obtain the orthogonality of $\sigma(H_l^*)$ for $l\ge2.$ Lemma~\ref{Plem:optimizing_W_2} then gives that the optimal $H_l^*$ for $l\ge3$ must be orthogonal as well, since it is non-negative and equal to $\sigma(H_l^*)$. Finally, the statement about $W_l^*$ for $l\ge2$ also follows from Lemma \ref{Plem:optimizing_W_2}. This concludes the proof of DNC2-3 for $n=1$. As a by-product of Lemma~\ref{Plem:variational}, we additionally obtain a precise characterization of the optimal $W_1^*, H_1^*.$

\paragraph{Step 2: DNC1 for $n>1$.} 
Now, we assume $n>1$ and the conclusion of \textbf{Step 1}. We show that DNC1 holds for the optimizer of \eqref{eq:LDUFM}. We give a sketch here, deferring the proof to Appendix~\ref{App:proofs}.

We proceed by contradiction. Assume there exist $\Tilde{H}_1^*, W_1^*, \dots, W_L^*$ which achieve the minimum of \eqref{eq:LDUFM}, but $\Tilde{H}_1^*$ does \textit{not} exhibit DNC1. Note that the objective in \eqref{eq:LDUFM} is separable \wrt the columns of $H_1$. Thus, for both classes $c\in \{1, 2\}$, the partial objectives corresponding to the columns $\{h_{c,i}^{(1)}\}_{i=1}^n$ of $\Tilde{H}_1^*$ must be equal for all $i$. Indeed, if that is not the case, we can exchange all the columns of $\Tilde{H}_1^*$ with the ones that achieve the smallest partial objective value and obtain a contradiction. Now, we can construct an alternative $\Bar{H}_1^*$ such that, for each class $c$, we pick any $h_{c,i}^{(1)}$ and place it instead of $\{h_{c,j}^{(1)}\}_{j\neq i}$. By construction, $\Bar{H}_1^*$ exhibits DNC1 and it is still an optimal solution. At this point, let us construct $H_1^*$ from $\Bar{H}_1^*$ by taking a single sample from both classes. We claim that $(H_1^*, W_1^*, \dots, W_L^*)$ is an optimal solution of \eqref{eq:LDUFM} with $n=1$ and  $n\lambda_{H_1}$ as the regularization term for $H_1$, while keeping $d_1, \dots, d_L$ and $\lambda_{W_1}, \dots, \lambda_{W_L}$ the same. To see why this is the case, consider an alternative $G_1^*$ achieving a smaller loss. Then, $(G_1^* \otimes \mathbf{1}_n^T, W_1^*, \dots, W_L^*)$ would achieve a smaller loss than $(\Bar{H}_1^*, W_1^*, \dots, W_L^*)$, which  contradicts its optimality.

Recall that $H_1^*$ is an optimal solution of \eqref{eq:LDUFM} with $n=1$.
By \textbf{Step 1}, the columns $x, y$ of $\sigma(H_2^*)$ are such that $x, y \ge 0$, $x^Ty=0$, and $\norm{x}=\norm{y}$ (here, by the notation $x\ge0$ we mean an entry-wise inequality). Assume $\sigma(H_2^*) \neq 0.$ By Lemma~\ref{Plem:min_schatten_norm}, all the matrices $A_2$ such that $\sigma(A_2)=\sigma(H_2^*)$ and $\nuc{A_2}=\nuc{H_2^*}$ have columns of the form $x-ty, y-tx$ for $t \in [0,1]$. 
Let $A_2^t$ be a matrix of that form for a particular $t.$ A series of steps which include computing the SVD of $A_2^t$ and the usage of Lemma~\ref{Plem:variational} reveal that, among $0\le t\le 1,$ there is a single $t^*$ and $A_2^*=A_2^{t^*}$ such that $W_1^*$ solves the optimization problem \eqref{eq:pbvar} in Lemma~\ref{Plem:variational} if $H_2=A_2^t$. Thus, having fixed $A_2^*$ that can be the output of the first layer, this in turn gives the uniqueness of the unconstrained features $H_1$ solving \eqref{eq:pbvar} while $W_1=W_1^*, H_2=H_2^*=A_2^*$.  
However, as $\Tilde{H}_1^*$ is not DNC1 collapsed by assumption, there are multiple possible choices for $\Bar{H}_1^*$ and then $H_1^*$, which gives a contradiction with the uniqueness of $H_1$ solving \eqref{eq:pbvar}.

\paragraph{Step 3: DNC2-3 for $n>1$.} 
Having established DNC1, we obtain DNC2-3 by reducing again to the case $n=1$, which allows us to conclude the proof of the main result. We finally remark that 
$W_1^*$ and $H_1^*$ do not need to be orthogonal, due to the $\Sigma^{1/2}$ multiplication in the statement of Lemma~\ref{Plem:variational}. This means that 
$W_1^*$ and $H_1^*$ may not exhibit DNC2-3 (as stated in Theorem~\ref{Pthm:full_dufm}).


\subsection{Numerical results}\label{ssec:numerics}

\paragraph{DUFM training.} To support the validity of our theoretical result and to demonstrate that the global optimum in the DUFM can be found efficiently, we conduct numerical experiments in which we solve \eqref{eq:LDUFM} with randomly initialized weights using gradient descent. In Figure~\ref{fig:dufm_main_body}, we plot the training progression of the DNC metrics for two choices of the depth: $L=3$ (first row) and $L=6$ (second row). In both cases, we use $n=50$ training samples and constant layer width of $64.$ The learning rate is $0.5$, though smaller learning rates produce the same results. We run full gradient descent for $10^5$ steps, and the weight decay is set to $5\cdot 10^{-4}$ for all the layers. The plots are based on 10 experiments and the error bands are computed as one empirical standard deviation to both sides.
To measure the DNC occurrence, we plot the following metrics as a function of training. For DNC1, we define $\Sigma_W^l = \frac{1}{N}\sum_{c=1}^2 \sum_{i=1}^n (h_{c,i}^{(l)}-\mu^{(l)}_c)(h_{c,i}^{(l)}-\mu^{(l)}_c)^T$ as the \emph{within-class} variability at layer $l$ and $\Sigma_B^l = \frac{1}{2}\sum_{c=1}^2 (\mu^{(l)}_c-\mu^{(l)}_G)(\mu^{(l)}_c-\mu^{(l)}_G)^T$ as the \emph{between-class} variability, where $\mu^{(l)}_G$ is the global feature mean at layer $l$. Then, the DNC1 metric at layer $l$ is $\smallnorm{\Sigma_W \Sigma_B^\dagger}_F^2,$ with $\dagger$ being the pseudo-inverse operator. This is an upper bound on the trace of the same matrix, which is sometimes used in the literature for measuring NC1. An ideal DNC1 would show 0 on this metric. For DNC2, our metric at layer $l$ is given by $\frac{s_{l,1}}{s_{l,2}}$ (recall that these are the singular values of $\sigma(H_l)$), and we consider the feature matrices both after and before the ReLU activation. An ideal DNC2 would show 1 on this metric. For DNC3, we consider 
the weighted average sine of the angles between each row of $W_l$ and the closest column of $\sigma(H_l)$; the weights are the $l_2$ norms of the rows of $W_l$. To avoid numerical issues, we exclude rows of $W_l$ whose $l_2$ norm is smaller than $10^{-6}$. An ideal DNC3 would show 0 on this metric. 

\begin{figure}
    \centering
    \includegraphics[width=0.32\textwidth]{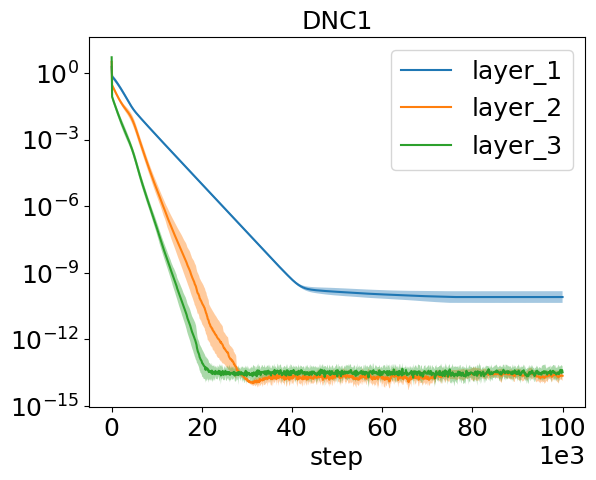}
    \includegraphics[width=0.32\textwidth]{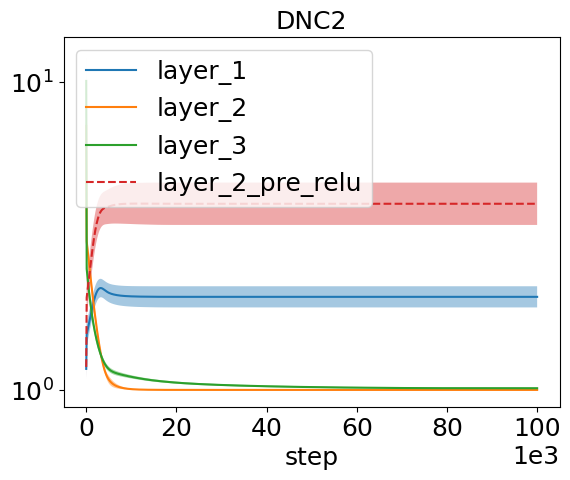}
    \includegraphics[width=0.32\textwidth]{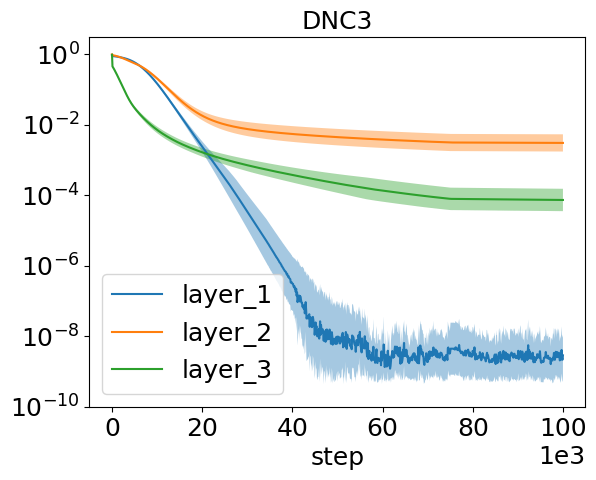}
    \includegraphics[width=0.32\textwidth]{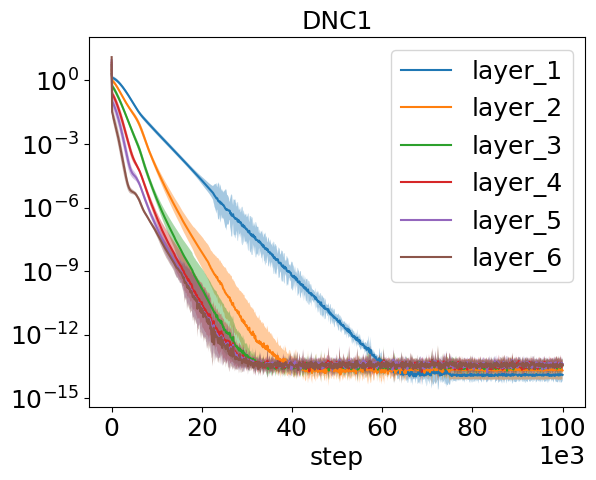}
    \includegraphics[width=0.32\textwidth]{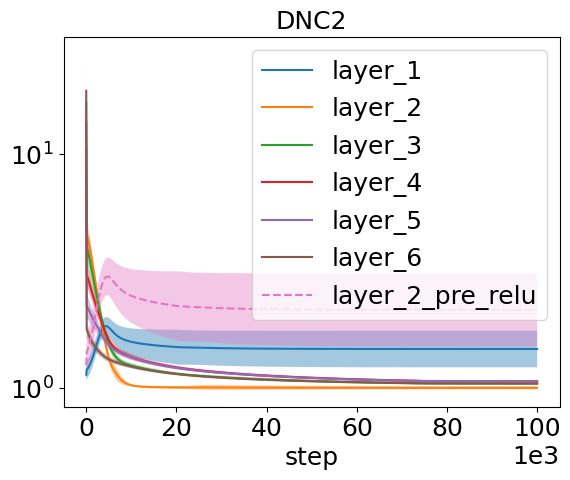}
    \includegraphics[width=0.32\textwidth]{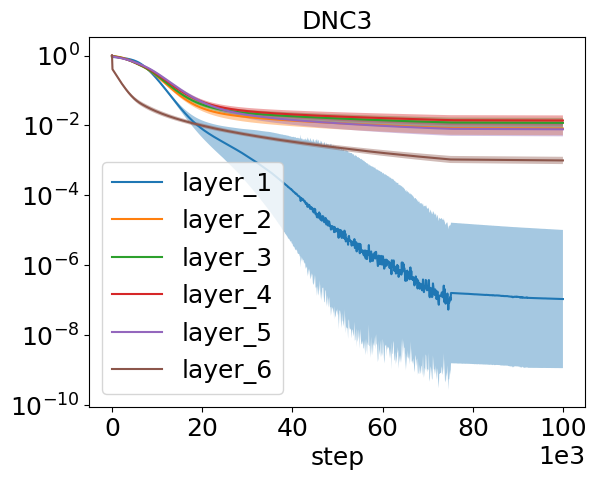}
    \caption{Deep neural collapse metrics as a function of training for the DUFM objective \eqref{fig:dufm_main_body}. The first (second) row corresponds to $L=3$ ($L=6$). The metrics corresponding to DNC1 and DNC3 are very small, and the metric corresponding to DNC2 is close to 1, which means that the solution found by gradient descent exhibits neural collapse in all layers, as predicted by Theorem \ref{Pthm:full_dufm}. 
    }
    \label{fig:dufm_main_body}
\end{figure}

The plots in Figure \ref{fig:dufm_main_body} closely follow the predictions of Theorem \ref{Pthm:full_dufm} and, taken as a whole, show that DUFM training via gradient descent clearly exhibits deep neural collapse, for all the three metrics considered, and for both depths. More specifically, DNC1 is progressively achieved in all layers, with the layers closer to the output being faster. DNC2 is achieved for all $\sigma(H_l)$ with $l \ge 2$ and $H_l$ for $l \ge 3$ (though we didn't include this in plots to avoid too many lines) and, as predicted by our theory, it is not achieved on $H_2$ (the dotted line in the middle plot) and on $H_1$ (the blue line). Interestingly, for these numbers of layers, SGD does not find solutions which would exhibit DNC2 on $H_1$, even though such solutions exist. Hence, gradient descent is not implicitly biased towards them, see Appendix~\ref{App:numerics} for a more detailed discussion. DNC3 is achieved on all layers -- even on the first, which is not covered by our theory and where the effect is the most pronounced; the deeper layers achieve values of DNC3 of order $10^{-2}$, which translates into an angle of less than one degree between weight rows and feature columns. Finally, we highlight that the training loss matches the optimal value computed numerically via \eqref{eq:the_optimal_value_calculator} (the corresponding plot is reported in Appendix~\ref{App:numerics}), and hence the solution found by gradient descent is globally optimal. 
Additional ablation studies concerning the number of layers, weight decay and width of the layers are reported in Appendix \ref{App:numerics}. 

\paragraph{ResNet20 training with pre-trained DUFM.} We support the validity of DUFM as a modeling principle for deep neural collapse via the following experiment. We first train a $3$-DUFM with $K=2$ and $n=5000$ to full optimality, as in the previous paragraph. Then, we replace $H_1^*$ with the output of a randomly initialized ResNet20, we fix the weight matrices in the DUFM, and we train end-to-end to convergence on the classes 0 and 1 of CIFAR-10. In Figure~\ref{fig:pretrained_dufm}, we report the DNC metrics of our DUFM head, as the training of the ResNet20 progresses. While at the start of the training the features $H_1$ are random and, therefore, the model does not exhibit DNC, the ResNet20 recovers well the DNC properties after training. This suggests that the crucial assumption of DUFM -- the unconstrained features, indeed holds in practice. 
The plots are based on 7 experiments with confidence bands of one empirical standard deviation to both sides. Additional details are discussed in Appendix~\ref{App:real_data}. 


\section{Conclusion}

In this work, we propose the deep unconstrained features model (DUFM) -- a multi-layered version of the popular unconstrained features model -- as a tool to theoretically understand the phenomenon of deep neural collapse (DNC). For binary classification, we show that DNC is globally optimal in the DUFM. This provides the first rigorous evidence of neural collapse occurring for arbitrarily many layers. Our numerical results show that gradient descent efficiently finds a solution in agreement with our theory -- one which displays neural collapse in multiple layers. Finally, we provide evidence in favor of DUFM as a modeling principle, by showing that standard DNNs can learn the unconstrained features that exhibit all DNC properties.


Interesting open problems include \emph{(i)} the generalization to multiple classes ($K>2$), \emph{(ii)} the analysis of the gradient dynamics in the DUFM, and \emph{(iii)} understanding the impact of biases on the collapsed solution, as well as that of the cross-entropy loss. As for the first point, a crucial issue is to control $\norm{W_1H_1}_*$, since the inequality $\norm{\sigma(M)}_*\le\norm{M}_*$ does not hold for general matrices $M$. In addition, while Lemma~\ref{Plem:the_key_lemma} still appears to be valid for $K>2$ (and it would be an interesting stand-alone result), its extension likely requires a new argument.

\section*{Acknowledgements}

M. M. is partially supported by the 2019 Lopez-Loreta Prize. The authors would like to thank Eugenia Iofinova, Bernd Prach and Simone Bombari for valuable feedback on the manuscript. 


\bibliographystyle{plain}
\bibliography{references}  

\begin{thebibliography}{10}

\bibitem{ben2022nearest}
Ido Ben-Shaul and Shai Dekel.
\newblock Nearest class-center simplification through intermediate layers.
\newblock In {\em Topological, Algebraic and Geometric Learning Workshops},
  2022.

\bibitem{dang2023neural}
Hien Dang, Tan Nguyen, Tho Tran, Hung Tran, and Nhat Ho.
\newblock Neural collapse in deep linear network: From balanced to imbalanced
  data.
\newblock {\em arXiv preprint arXiv:2301.00437}, 2023.

\bibitem{fang2021exploring}
Cong Fang, Hangfeng He, Qi~Long, and Weijie~J Su.
\newblock Exploring deep neural networks via layer-peeled model: Minority
  collapse in imbalanced training.
\newblock In {\em Proceedings of the National Academy of Sciences (PNAS)},
  volume 118, 2021.

\bibitem{galanti2023implicit}
Tomer Galanti, Liane Galanti, and Ido Ben-Shaul.
\newblock On the implicit bias towards minimal depth of deep neural networks.
\newblock {\em arXiv preprint arXiv:2202.09028}, 2022.

\bibitem{galanti2022improved}
Tomer Galanti, Andr{\'a}s Gy{\"o}rgy, and Marcus Hutter.
\newblock Improved generalization bounds for transfer learning via neural
  collapse.
\newblock In {\em First Workshop on Pre-training: Perspectives, Pitfalls, and
  Paths Forward at ICML}, 2022.

\bibitem{haas2022linking}
Jarrod Haas, William Yolland, and Bernhard~T Rabus.
\newblock Linking neural collapse and l2 normalization with improved
  out-of-distribution detection in deep neural networks.
\newblock {\em Transactions on Machine Learning Research (TMLR)}, 2022.

\bibitem{han2021neural}
X.~Y. Han, Vardan Papyan, and David~L Donoho.
\newblock Neural collapse under mse loss: Proximity to and dynamics on the
  central path.
\newblock In {\em International Conference on Learning Representations (ICLR)},
  2022.

\bibitem{he2022law}
Hangfeng He and Weijie~J Su.
\newblock A law of data separation in deep learning.
\newblock {\em arXiv preprint arXiv:2210.17020}, 2022.

\bibitem{he2015delving}
Kaiming He, Xiangyu Zhang, Shaoqing Ren, and Jian Sun.
\newblock Delving deep into rectifiers: Surpassing human-level performance on
  imagenet classification.
\newblock In {\em Proceedings of the IEEE international conference on computer
  vision (ICCV)}, 2015.

\bibitem{hoerl1970ridge}
Arthur~E Hoerl and Robert~W Kennard.
\newblock Ridge regression: Biased estimation for nonorthogonal problems.
\newblock In {\em Technometrics}, volume~12, 1970.

\bibitem{hui2022limitations}
Like Hui, Mikhail Belkin, and Preetum Nakkiran.
\newblock Limitations of neural collapse for understanding generalization in
  deep learning.
\newblock {\em arXiv preprint arXiv:2202.08384}, 2022.

\bibitem{ji2021unconstrained}
Wenlong Ji, Yiping Lu, Yiliang Zhang, Zhun Deng, and Weijie~J Su.
\newblock An unconstrained layer-peeled perspective on neural collapse.
\newblock In {\em International Conference on Learning Representations (ICLR)},
  2022.

\bibitem{kothapalli2022neural}
Vignesh Kothapalli.
\newblock Neural collapse: A review on modelling principles and generalization.
\newblock In {\em Transactions on Machine Learning Research (TMLR)}, 2023.

\bibitem{kunin2022asymmetric}
Daniel Kunin, Atsushi Yamamura, Chao Ma, and Surya Ganguli.
\newblock The asymmetric maximum margin bias of quasi-homogeneous neural
  networks.
\newblock {\em arXiv preprint arXiv:2210.03820}, 2022.

\bibitem{li2022principled}
Xiao Li, Sheng Liu, Jinxin Zhou, Xinyu Lu, Carlos Fernandez-Granda, Zhihui Zhu,
  and Qing Qu.
\newblock Principled and efficient transfer learning of deep models via neural
  collapse.
\newblock {\em arXiv preprint arXiv:2212.12206}, 2022.

\bibitem{li2023no}
Zexi Li, Xinyi Shang, Rui He, Tao Lin, and Chao Wu.
\newblock No fear of classifier biases: Neural collapse inspired federated
  learning with synthetic and fixed classifier.
\newblock {\em arXiv preprint arXiv:2303.10058}, 2023.

\bibitem{liang2023inducing}
Tong Liang and Jim Davis.
\newblock Inducing neural collapse to a fixed hierarchy-aware frame for
  reducing mistake severity.
\newblock {\em arXiv preprint arXiv:2303.05689}, 2023.

\bibitem{liu2023generalizing}
Weiyang Liu, Longhui Yu, Adrian Weller, and Bernhard Sch{\"o}lkopf.
\newblock Generalizing and decoupling neural collapse via hyperspherical
  uniformity gap.
\newblock In {\em International Conference on Learning Representations (ICLR)},
  2023.

\bibitem{lu2020neural}
Jianfeng Lu and Stefan Steinerberger.
\newblock Neural collapse under cross-entropy loss.
\newblock In {\em Applied and Computational Harmonic Analysis}, volume~59,
  2022.

\bibitem{ma2023do}
Jiawei Ma, Chong You, Sashank~J Reddi, Sadeep Jayasumana, Himanshu Jain, Felix
  Yu, Shih-Fu Chang, and Sanjiv Kumar.
\newblock Do we need neural collapse? {L}earning diverse features for
  fine-grained and long-tail classification.
\newblock {\em Openreview preprint}, 2023.

\bibitem{mixon2020neural}
Dustin~G Mixon, Hans Parshall, and Jianzong Pi.
\newblock Neural collapse with unconstrained features.
\newblock {\em arXiv preprint arXiv:2011.11619}, 2020.

\bibitem{papyan2020prevalence}
Vardan Papyan, X.~Y. Han, and David~L Donoho.
\newblock Prevalence of neural collapse during the terminal phase of deep
  learning training.
\newblock In {\em Proceedings of the National Academy of Sciences (PNAS)},
  volume 117, 2020.

\bibitem{poggio2020explicit}
Tomaso Poggio and Qianli Liao.
\newblock Explicit regularization and implicit bias in deep network classifiers
  trained with the square loss.
\newblock {\em arXiv preprint arXiv:2101.00072}, 2020.

\bibitem{poggio2020implicit}
Tomaso Poggio and Qianli Liao.
\newblock Implicit dynamic regularization in deep networks.
\newblock Technical report, Center for Brains, Minds and Machines (CBMM), 2020.

\bibitem{rangamani2022neural}
Akshay Rangamani and Andrzej Banburski-Fahey.
\newblock Neural collapse in deep homogeneous classifiers and the role of
  weight decay.
\newblock In {\em IEEE International Conference on Acoustics, Speech and Signal
  Processing (ICASSP)}, 2022.

\bibitem{rangamani2023feature}
Akshay Rangamani, Marius Lindegaard, Tomer Galanti, and Tomaso Poggio.
\newblock Feature learning in deep classifiers through intermediate neural
  collapse.
\newblock {\em Technical Report}, 2023.

\bibitem{thrampoulidis2022imbalance}
Christos Thrampoulidis, Ganesh~Ramachandra Kini, Vala Vakilian, and Tina
  Behnia.
\newblock Imbalance trouble: Revisiting neural-collapse geometry.
\newblock In {\em Conference on Neural Information Processing Systems
  (NeurIPS)}, 2022.

\bibitem{tirer2022extended}
Tom Tirer and Joan Bruna.
\newblock Extended unconstrained features model for exploring deep neural
  collapse.
\newblock In {\em International Conference on Machine Learning (ICML)}, 2022.

\bibitem{tirer2022perturbation}
Tom Tirer, Haoxiang Huang, and Jonathan Niles-Weed.
\newblock Perturbation analysis of neural collapse.
\newblock {\em arXiv preprint arXiv:2210.16658}, 2022.

\bibitem{wang2022linear}
Peng Wang, Huikang Liu, Can Yaras, Laura Balzano, and Qing Qu.
\newblock Linear convergence analysis of neural collapse with unconstrained
  features.
\newblock In {\em OPT 2022: Optimization for Machine Learning (NeurIPS 2022
  Workshop)}, 2022.

\bibitem{wojtowytsch2020emergence}
E~Weinan and Stephan Wojtowytsch.
\newblock On the emergence of simplex symmetry in the final and penultimate
  layers of neural network classifiers.
\newblock In {\em Mathematical and Scientific Machine Learning}, 2022.

\bibitem{xie2023neural}
Liang Xie, Yibo Yang, Deng Cai, and Xiaofei He.
\newblock Neural collapse inspired attraction-repulsion-balanced loss for
  imbalanced learning.
\newblock In {\em Neurocomputing}, 2023.

\bibitem{xu2023dynamics}
Mengjia Xu, Akshay Rangamani, Qianli Liao, Tomer Galanti, and Tomaso Poggio.
\newblock Dynamics in deep classifiers trained with the square loss:
  Normalization, low rank, neural collapse, and generalization bounds.
\newblock In {\em Research}, volume~6, 2023.

\bibitem{yang2022we}
Yibo Yang, Liang Xie, Shixiang Chen, Xiangtai Li, Zhouchen Lin, and Dacheng
  Tao.
\newblock Do we really need a learnable classifier at the end of deep neural
  network?
\newblock In {\em Conference on Neural Information Processing Systems
  (NeurIPS)}, 2022.

\bibitem{yang2022neurons}
Yongyi Yang, Jacob Steinhardt, and Wei Hu.
\newblock Are neurons actually collapsed? on the fine-grained structure in
  neural representations.
\newblock In {\em I Can't Believe It's Not Better Workshop: Understanding Deep
  Learning Through Empirical Falsification (NeurIPS 2022 Workshop)}, 2022.

\bibitem{yaras2022neural}
Can Yaras, Peng Wang, Zhihui Zhu, Laura Balzano, and Qing Qu.
\newblock Neural collapse with normalized features: A geometric analysis over
  the riemannian manifold.
\newblock {\em arXiv preprint arXiv:2209.09211}, 2022.

\bibitem{zarka2020separation}
John Zarka, Florentin Guth, and St{\'e}phane Mallat.
\newblock Separation and concentration in deep networks.
\newblock In {\em International Conference on Learning Representations (ICLR)},
  2021.

\bibitem{zhang2021imitating}
Jiayao Zhang, Hua Wang, and Weijie Su.
\newblock Imitating deep learning dynamics via locally elastic stochastic
  differential equations.
\newblock In {\em Conference on Neural Information Processing Systems
  (NeurIPS)}, 2021.

\bibitem{zhong2023understanding}
Zhisheng Zhong, Jiequan Cui, Yibo Yang, Xiaoyang Wu, Xiaojuan Qi, Xiangyu
  Zhang, and Jiaya Jia.
\newblock Understanding imbalanced semantic segmentation through neural
  collapse.
\newblock {\em arXiv preprint arXiv:2301.01100}, 2023.

\bibitem{zhou2022optimization}
Jinxin Zhou, Xiao Li, Tianyu Ding, Chong You, Qing Qu, and Zhihui Zhu.
\newblock On the optimization landscape of neural collapse under mse loss:
  Global optimality with unconstrained features.
\newblock In {\em International Conference on Machine Learning (ICML)}, 2022.

\bibitem{zhou2022all}
Jinxin Zhou, Chong You, Xiao Li, Kangning Liu, Sheng Liu, Qing Qu, and Zhihui
  Zhu.
\newblock Are all losses created equal: A neural collapse perspective.
\newblock In {\em Conference on Neural Information Processing Systems
  (NeurIPS)}, 2022.

\bibitem{zhu2021geometric}
Zhihui Zhu, Tianyu Ding, Jinxin Zhou, Xiao Li, Chong You, Jeremias Sulam, and
  Qing Qu.
\newblock A geometric analysis of neural collapse with unconstrained features.
\newblock In {\em Conference on Neural Information Processing Systems
  (NeurIPS)}, 2021.

\end{thebibliography}

\newpage

\appendix

\section{Proofs}\label{App:proofs}

\fulldufm*
\begin{proof}
We will split the proof into two distinct cases: $n=1$ and $n>1$. The proof for $n>1$ will require the statement to hold for $n=1,$ so we start with this case. The explicit form of the problem goes as: 
\begin{equation}\label{eq:the_full_dufm}
\underset{H_1, W_1, \dots, W_L}{\min} \frac{1}{2N}\norm{W_L\sigma(W_{L-1}(\dots W_2\sigma(W_1H_1))-I_2}_F^2+\sum_{l=1}^L\frac{\lambda_{W_l}}{2}\norm{W_l}_F^2+\frac{\lambda_{H_1}}{2}\norm{H_1}_F^2.
\end{equation}
We will solve this problem using a sequence of $L$ sub-problems, where in sub-problem $l$, we only optimize over matrices with index $L-l+1$ and condition on all the other matrices, as well as singular values of all $\sigma(H_l)$ for $l\ge2$ (keeping them fixed). Throughout the process, we will strongly rely on the crucial phenomenon that the optimal solutions of all the sub-problems will only depend on the singular values of the matrices $\sigma(H_l)$. We will solve the problem starting from the last weight matrix $W_L$, through $W_{L-1}$ and so on, and finishing with the joint optimization over $W_1, H_1$.
We divide this procedure into 4 lemmas. The first lemma will only treat $W_L$. The second lemma will treat $W_l$ for $2\le l \le L-1.$ This lemma will only be used if $L\ge3.$ The third lemma will treat $W_1, H_1$ jointly and the fourth, final lemma will then treat the only free variables left -- the singular values of $\sigma(H_l)$ for all $l\ge2.$ Let us denote by $\mathcal{S}(\cdot)$ an operator which returns a set of singular values of any given matrix (in full SVD, \ie the SVD with number of singular values equal to the smaller dimension of the matrix). Moreover, let $s_{l,i}$ be the $i$-th singular value of $\sigma(H_l).$

\optimizingWL*
\begin{proof}
This is a well-known fact from the theory of ridge regression \cite{hoerl1970ridge}. However, for readers' convenience, we do the proof here too. If we denote the objective of this as $\mathcal{L},$ then we get:
\[\frac{\partial \mathcal{L}}{\partial W_L}=\frac{1}{2}(W_L\sigma(H_L)-I_2)\sigma(H_L)^T+\lambda_{W_L}W_L.\]
Since the objective is strongly convex in $W_L$, the unique global optimum is found by setting to $0$ the derivative. Solving for it, we have 
\begin{align*}
W_L &= \sigma(H_L)^T(\sigma(H_L)\sigma(H_L)^T+2\lambda_{W_L}I_{d_L})^{-1}.
\end{align*}
Let us denote $\sigma(H_L)=U\Sigma V^T$ the rectangular SVD of $\sigma(H_L)$ where $U \in \mathbb{R}^{d_L \times d_L}, \Sigma \in \mathbb{R}^{d_L \times 2}, V \in \mathbb{R}^{2 \times 2}.$ 
Writing the optimal $W_L$ in terms of SVD of $\sigma(H_L)$ we get: 
\begin{align*}
W_L &= V\Sigma^TU^T(U\Sigma\Sigma^TU^T+2\lambda_{W_L}I_{d_L})^{-1} \\
&= V\Sigma^TU^TU(\Sigma\Sigma^T+2\lambda_{W_L}I_{d_L})^{-1}U^T \\
&= V\Sigma^T(\Sigma\Sigma^T+2\lambda_{W_L}I_{d_L})^{-1}U^T.
\end{align*}
Therefore, we obtain the SVD of $W_L$ and that $(\Sigma\Sigma^T+2\lambda_{W_L}I_{d_L})^{-1}$ is a matrix of singular values of $W_L$, from which we obtain: \[\norm{W_L}_F^2=\sum_{i=1}^2\frac{s_{L,i}^2}{(s_{L,i}^2+2\lambda_{W_L})^2}.\]
Furthermore, the first summand of the objective is
\begin{align*}
\norm{W_L\sigma(H_L)-I_2}_F^2&=\norm{V\Sigma^T(\Sigma\Sigma^T+2\lambda_{W_L}I_{d_L})^{-1}U^TU\Sigma V^T-I_2}_F^2 \\
&=\norm{V(\Sigma^T(\Sigma\Sigma^T+2\lambda_{W_L}I_{d_L})^{-1}U^TU\Sigma)V^T-I_2}_F^2 \\
&= \sum_{i=1}^2 \left(\frac{s_{L,i}^2}{s_{L,i}^2+2\lambda_{W_L}}-1\right)^2.
\end{align*}
Putting everything together, we get the following optimal value of the objective:
\begin{align*}
&\frac{1}{4}\sum_{i=1}^2 \left(\frac{s_{L,i}^2}{s_{L,i}^2+2\lambda_{W_L}}-1\right)^2+\frac{\lambda_{W_L}}{2}\sum_{i=1}^2\frac{s_{L,i}^2}{(s_{L,i}^2+2\lambda_{W_L})^2} \\
= &\sum_{i=1}^2 \frac{2\lambda_{W_L}^2}{2(s_{L,i}^2+2\lambda_{W_L})^2}+\frac{\lambda_{W_L}\sigma_1^2}{2(s_{L,i}^2+2\lambda_{W_L})^2} \\
= &\sum_{i=1}^2\frac{\lambda_{W_L}}{2(s_{L,i}^2+2\lambda_{W_L})},
\end{align*}
which gives the desired result. 
\end{proof}
Note that the term we optimized for is the only part of the objective in~\eqref{eq:the_full_dufm} which depends on $W_L$. Now we are ready to state the crucial lemma (necessary when $L\ge 3$), which is presented 
in an abstract way so it can be easily applied outside of the scope of this work. 

\thekeylemma*
\begin{proof}
Notice that, for any fixed value of the matrix $A := \sigma(WX),$ there is at least one $W$ which achieves such a value of $A$, unless the rank of $A$ is bigger than the rank of $X$, however this is a special case explicitly mentioned in the statement of the lemma for which the problem is non-feasible. Therefore, we can split the optimization into solving two nested sub-problems. The inner sub-problem fixes an output $A$ for which $\{s_1, s_2\}=\mathcal{S}(A)$ and optimizes only over all $W$ for which $A=\sigma(WX).$ The outer problem then takes the optimal value of this inner problem and optimizes over all the feasible outputs $A$ satisfying the constraint on the singular values. We will phrase both of the nested sub-problems as separate lemmas:

\optimizingWtwo*
\begin{proof}
Note that this optimization problem is separable in rows of $W$. Thus it suffices to solve the problem for a single row $w_j$ and then sum up over all rows. We will, moreover, split the analysis into two cases. 

\textbf{Case 1:} Here we will assume that the row of $A$ we are optimizing for is $(a_j, b_j),$ where $a_j, b_j > 0$. In this case, we are facing the following (simple) convex optimization problem:
\begin{align*}
\underset{w_j}{\min} &\norm{w_j}_2^2 \\
\text{s.t.} \hspace{2mm} &w_j^Tx = a_j \\
&w_j^Ty = b_j
\end{align*}
This has a known solution -- the pseudoinverse as it is the minimum interpolating solution of a two-datapoint linear regression. However, using the formula would not simplify the expressions in the slightest and so we will solve this problem manually. 
The Lagrange function for this problem is \[\mathcal{L}(w_j, u, v)=\sum_{i=1}^{d}w_{ji}^2+u\left(\sum_{i=1}^{d}w_{ji}x_i-a_j\right)+v\left(\sum_{i=1}^{d} w_{ji} y_i -b_j\right),\] where $d$ is the dimension of the columns of $X$ and rows of $W.$
We know that solving the KKT conditions for a convex problem with linear constraints yields sufficient and necessary conditions for the optimality. Let us write:
\[\frac{\partial\mathcal{L}}{\partial w_{ji}}=2w_{ji}+ux_i+vy_i\overset{!}{=}0.\]
Together with constraints, these are the only KKT conditions for this problem. Solving for $w_{ji}$ we get:
\begin{equation}\label{eq:kkt_cond_for_w}
w_{ji}=\frac{-ux_i-vy_i}{2}.
\end{equation} 
Plugging in the constraints we get:
\begin{align*}
&\sum_{i=1}^d vy_i^2+ux_iy_i = vy^Ty+ux^Ty=-2b_j,  \\
&\sum_{i=1}^d ux_i^2+vx_iy_i = ux^Tx+vx^Ty=-2a_j.
\end{align*}
We can solve for $v$ from the first equation and plug in to the second equation. Since $y^Ty$ must be non-zero, otherwise the constraint $w_j^Ty = b_j > 0$ could not be satsfied, we know $v$ and $u$ are well-defined. We get:
\begin{align}\label{eq:equation_on_u}
&u=\frac{2b_jx^Ty-2a_jy^Ty}{x^Txy^Ty-(x^Ty)^2}, 
\end{align}
unless $x=\alpha y$, which we will deal with later. Now we can plug $u$ in the expression on $v$ and after some computations we get 
\[v=\frac{2a_jx^Ty-2b_jx^Tx}{x^Txy^Ty-(x^Ty)^2}.\] Combining, we get the optimal $w_j:$
\[\frac{b_jx^Tx-a_jx^Ty}{x^Txy^Ty-(x^Ty)^2}y+\frac{a_jy^Ty-b_jx^Ty}{x^Txy^Ty-(x^Ty)^2}x.\]

If $x=\alpha y$, then it is clear from~\eqref{eq:equation_on_u} that $b_jx^Ty=a_jy^Ty,$ from which we can deduce $\alpha = a_j/b_j$. From the expression~\eqref{eq:kkt_cond_for_w} for $w_{ji}$, it is clear that the only solution in this case is $w_j = \frac{b_j}{y^Ty}y$ and the same must hold for all other $j \in \{1, \dots, \Tilde{d}\}$, where $\Tilde{d}$ is the number of rows of $W$.  

\textbf{Case 2:} Here we will assume that the row of $A$ we are optimizing for is, w.l.o.g. $(a_j, 0),$ where $a_j > 0$. In this case, we are facing the following (simple) convex optimization problem:
\begin{align*}
\underset{w_j}{\min} &\norm{w_j}_2^2 \\
\text{s.t.} \hspace{2mm} &w_j^Tx = a_j \\
&w_j^Ty \le 0
\end{align*}
Again, we can write the Lagrange function for this problem and again, solving for KKT conditions will provide full description of the optimal solution:
\[\mathcal{L}(w_j, u, v)=\sum_{i=1}^{d}w_{ji}^2+u\sum_{i=1}^{d} w_{ji} y_i+v\left(\sum_{i=1}^{d}w_{ji}x_i-a_j\right).\]
We obtain the same condition for $w_{ji}$ as in case 1. However, since one of our constraints is now an inequality, we will deal with complementarity conditions. Let us first assume that $u=0$. In that case $w_{ji}=-vx_i/2$ and from the first constraint \[v=\frac{-2a_j}{\sum x_i^2}.\] Thus, $w_j=\frac{a_j}{x^Tx}x$. However judging from this, to satisfy the first constraint, we must have $x_i y_i = 0$ for all $i$ (remember that $\sum w_{ji} y_i$ must be non-positive and that $x, y$ have non-negative entries). Therefore necessarily $x^Ty=0$. However, looking at the solution of case 1, we see that after plugging in $x^Ty=0$ and $b_j=0,$ we get the same expression. Thus, this is a special case of case 1. The second option for complementarity condition is that $\sum w_{ji}y_i=0.$ Then, we can proceed in the same way as in case 1 to obtain the same formula after plugging in $b=0.$ Then again, this is a special case of the formula in case 1. 

All in all, we have fully solved the optimization problem in question. Now it remains to compute the optimal value. In the special case that $x=\alpha y$, we must have $bx^Ty=ay^Ty$, otherwise there is no feasible solution. In this case this is equivalent to $\alpha=a/b, w_j=b_j/(y^Ty)y$ and $w_j^Tw_j=b_j^2/(y^Ty).$ Therefore, summing over all rows of $W$ (and keeping in mind that once $x=\alpha y$, the same characterization holds for every row) we get $\norm{W}_F^2=\frac{b^Tb}{y^Ty}.$ 

In the general case, let us compute: 
\begin{align*}
w_j^Tw_j&=\left\langle\frac{b_jx^Tx-a_jx^Ty}{x^Txy^Ty-(x^Ty)^2}y+\frac{a_jy^Ty-b_jx^Ty}{x^Txy^Ty-(x^Ty)^2}x, \frac{b_jx^Tx-a_jx^Ty}{x^Txy^Ty-(x^Ty)^2}y+\frac{a_jy^Ty-b_jx^Ty}{x^Txy^Ty-(x^Ty)^2}x\right\rangle \\
&=\frac{(b_jx^Tx-a_jx^Ty)^2y^Ty+2(b_jx^Tx-a_jx^Ty)(a_jy^Ty-b_jx^Ty)x^Ty+(a_jy^Ty-b_jx^Ty)^2x^Tx}{(x^Txy^Ty-(x^Ty)^2)^2} \\
&=\frac{b_j^2(x^Tx)^2y^Ty-2b_ja_jx^Txy^Tyx^Ty+a_j^2(x^Ty)^2y^Ty+2b_ja_jx^Txy^Tyx^Ty-2b_j^2x^Tx(x^Ty)^2}{(x^Txy^Ty-(x^Ty)^2)^2} \\ 
&+\frac{-2a_j^2y^Ty(x^Ty)^2+2b_ja_j(x^Ty)^3+a_j^2(y^Ty)^2x^Tx-2b_ja_jx^Txy^Tyx^Ty+b_j^2(x^Ty)^2x^Tx}{(x^Txy^Ty-(x^Ty)^2)^2} \\
&=\frac{b_j^2(x^Tx)^2y^Ty-b_j^2x^Tx(x^Ty)^2+a_j^2(y^Ty)^2x^Tx-a_j^2y^Ty(x^Ty)^2-2a_jb_jx^Txy^Ty+2a_jb_j(x^Ty)^3}{(x^Txy^Ty-(x^Ty)^2)^2} \\
&=\frac{(b_j^2x^Tx+a_j^2y^Ty-2a_jb_jx^Ty)(x^Txy^Ty-(x^Ty)^2)}{(x^Txy^Ty-(x^Ty)^2)^2} \\
&=\frac{b_j^2x^Tx-2a_jb_jx^Ty+a_j^2y^Ty}{x^Txy^Ty-(x^Ty)^2}.
\end{align*}
Summing up over rows of $W$, we get the desired expression:
\[\norm{W}_F^2=\frac{b^Tbx^Tx-2a^Tbx^Ty+a^Tay^Ty}{x^Txy^Ty-(x^Ty)^2}.\]
The rest of the statement of the lemma clearly follows from the formula on optimal rows of $W.$
\end{proof}
Now, as promised, we will optimize over the output matrix $A$ given that we know what the partial optimal value for each $A$ is.  

\optimizingA*
\begin{proof}
First note that since $X$ is assumed to be rank 2, $x \neq \alpha y$ and so the expression in the objective is exactly the numerator of the optimal value of the minimization problem over $W$ as stated in Lemma~\ref{Plem:optimizing_W_2} (the denominator is constant within the scope of this lemma). Now, using the well-known formulas for the sum and product of squared singular values, we can easily rephrase our constraint in the following way: 
\begin{align*}
&a^Ta+b^Tb=s_1^2+s_2^2 \hspace{4mm} \wedge \\
&a^Tab^Tb-(a^Tb)^2=s_1^2s_2^2.
\end{align*}
First, noticing that both the constraints as well as the objective only depend on $a^Ta, b^Tb, a^Tb$, we can reduce the problem to one, where we treat these quantities as free variables that we are optimizing for. The only thing we need to remember will be implicit constraints that $a^Ta, b^Tb \ge 0$ and $(a^Tb)^2 \le a^Tab^Tb$, which is necessary from Cauchy-Schwarz inequality. We will solve this optimization problem using, again, KKT conditions. This time, we will use a stronger rule for deciding on the necessity of KKT conditions -- the linear independence constraint qualification (LICQ) rule. According to this rule, any optimal solution has to fulfill the KKT conditions if the gradients of the equality constraints (in this case only two) are linearly independent. In our case, if we order the variables as $(a^Ta, b^Tb, a^Tb)$ and writing the gradients as: $(1,1,0)$ and $(b^Tb, a^Ta, -2a^Tb)$ we see that they can only be dependent if $a^Tb=0$ and $a^Ta=b^Tb$. This is only a feasible solution if $s_1=s_2$. We will return to this case later, showing that it achieves the same objective value as the one coming from KKT conditions. 

Now, let us write down the Lagrange function to this problem:
\begin{align*}
\mathcal{L}(a^Ta, b^Tb, a^Tb, u, v)=&a^Tay^Ty-2a^Tbx^Ty+b^Tbx^Tx+u(a^Ta+b^Tb-s_1^2-s_2^2)\\
&+v(a^Tab^Tb-(a^Tb)^2-s_1^2s_2^2).    
\end{align*}
Computing the partial derivatives we get: 
\begin{align*}
\frac{\partial \mathcal{L}}{\partial a^Ta} &= y^Ty+u+vb^Tb \overset{!}{=} 0, \\
\frac{\partial \mathcal{L}}{\partial b^Tb} &= x^Tx+u+va^Ta \overset{!}{=} 0, \\
\frac{\partial \mathcal{L}}{\partial a^Tb} &= -2x^Ty-2va^Tb \overset{!}{=} 0.
\end{align*}
First consider what would happen if $v=0.$ In this case, $x^Ty=0$ from the third equation and $x^Tx=y^Ty$ from the first two. Then, however, the objective function is constant on the feasible set and equals $1/2(s_1^2+s_2^2)(\Tilde{s}_1^2+\Tilde{s}_2^2),$ which agrees with the statement of the Lemma. 

Next, assume $v\neq 0$. Then we can express all the variables as follows: 
\begin{align*}
a^Ta &= \frac{-x^Tx-u}{v}, \\ 
b^Tb &= \frac{-y^Ty-u}{v}, \\ 
a^Tb &= -\frac{x^Ty}{v}.
\end{align*}
Plugging this back into the constraints, we can solve for $v, u$. We get: 
\[v=-\frac{x^Tx+y^Ty+2u}{s_1^2+s_2^2}.\] Plugging this into the second constraint together with the expressions for $a^Ta, b^Tb$ and then simplifying we get:
\begin{align}\label{eq:quadratic}
&\frac{(y^Ty+u)(x^Tx+u)(s_1^2+s_2^2)^2}{(x^Tx+y^Ty+2u)^2}-\frac{(x^Ty)^2(s_1^2+s_2^2)^2}{(x^Tx+y^Ty+2u)^2}=s_1^2s_2^2 \nonumber\\ 
&y^Tyx^Tx-(x^Ty)^2+(y^Ty+x^Tx)u+u^2=\left(\frac{s_1s_2}{s_1^2+s_2^2}\right)^2(x^Tx+y^Ty)^2 \nonumber\\
&+\left(\frac{2s_1s_2}{s_1^2+s_2^2}\right)^2(x^Tx+y^Ty)u+\left(\frac{2s_1s_2}{s_1^2+s_2^2}\right)^2u^2 \nonumber\\
&\frac{(s_1^2-s_2^2)^2}{(s_1^2+s_2^2)^2}u^2+\frac{(s_1^2-s_2^2)^2}{(s_1^2+s_2^2)^2}(x^Tx+y^Ty)u+x^Txy^Ty-(x^Ty)^2-\left(\frac{s_1s_2}{s_1^2+s_2^2 }\right)^2(x^Tx+y^Ty)^2=0 \nonumber\\
&u^2+(x^Tx+y^Ty)u+\frac{(s_1^2+s_2^2)^2}{(s_1^2-s_2^2)^2}(x^Txy^Ty-(x^Ty)^2)-\left(\frac{s_1s_2}{s_1^2-s_2^2}\right)^2(x^Tx+y^Ty)^2=0.
\end{align}
Recall that the case $s_1=s_2$ is treated separately, so we do not divide by zero. Now we can rewrite this expression \wrt $\{\Tilde{s}_1,\Tilde{s}_2\}:$
\begin{align*}
u^2+(\Tilde{s}_1^2+\Tilde{s}_2^2)u+\frac{(s_1^2+s_2^2)^2}{(s_1^2-s_2^2)^2}\Tilde{s}_1^2\Tilde{s}_2^2-\frac{s_1^2s_2^2}{(s_1^2-s_2^2)^2}(\Tilde{s}_1^2+\Tilde{s}_2^2)^2=0
\end{align*}
We can now simply solve for this to obtain the roots:
\[u_\pm = \frac{-(\Tilde{s}_1^2+\Tilde{s}_2^2)\pm\sqrt{(\Tilde{s}_1^2+\Tilde{s}_2^2)^2+\frac{4s_1^2s_2^2}{(s_1^2-s_2^2)^2}(\Tilde{s}_1^2+\Tilde{s}_2^2)^2-4\frac{(s_1^2+s_2^2)^2}{(s_1^2-s_2^2)^2}\Tilde{s}_1^2\Tilde{s}_2^2}}{2},\] which after some simplifications yields:
\[u_\pm = \frac{-(\Tilde{s}_1^2+\Tilde{s}_2^2)\pm\frac{s_1^2+s_2^2}{s_1^2-s_2^2}(\Tilde{s}_1^2-\Tilde{s}_2^2)}{2}.\]
The formula for $v$ can now be obtained:
\[v_\pm=\mp\frac{\Tilde{s}_1^2-\Tilde{s}_2^2}{s_1^2-s_2^2}\]
Having this, we can now compute the objective value using these quantities. We write it down for $u_+, v_+:$
\begin{align*}
&b^Tbx^Tx+a^Tay^Ty-2a^Tbx^Ty=\frac{-(x^Tx+y^Ty)u-2(x^Txy^Ty-(x^Ty)^2)}{v} \\
=&\frac{-(\Tilde{s}_1^2+\Tilde{s}_2^2)u-2\Tilde{s}_1^2\Tilde{s}_2^2}{v}=\frac{(\Tilde{s}_1^2+\Tilde{s}_2^2)^2-\frac{s_1^2+s_2^2}{s_1^2-s_2^2}(\Tilde{s}_1^2-\Tilde{s}_2^2)(\Tilde{s}_1^2+\Tilde{s}_2^2)-4\Tilde{s}_1^2\Tilde{s}_2^2}{2v} \\
=&\frac{-\frac{(\Tilde{s}_1^2+\Tilde{s}_2^2)^2}{\Tilde{s}_1^2-\Tilde{s}_2^2}(s_1^2-s_2^2)+(s_1^2+s_2^2)(\Tilde{s}_1^2+\Tilde{s}_2^2)+4\Tilde{s}_1^2\Tilde{s}_2^2\frac{s_1^2-s_2^2}{\Tilde{s}_1^2-\Tilde{s}_2^2}}{2}=\frac{(s_1^2+s_2^2)(\Tilde{s}_1^2+\Tilde{s}_2^2)-(s_1^2-s_2^2)(\Tilde{s}_1^2-\Tilde{s}_2^2)}{2},
\end{align*}
which is the expression we wanted to prove. Had we done the same computation, but for $u_-, v_-,$ we would get \[\frac{(s_1^2+s_2^2)(\Tilde{s}_1^2+\Tilde{s}_2^2)+(s_1^2-s_2^2)(\Tilde{s}_1^2-\Tilde{s}_2^2)}{2},\] which is at least as big as the previous expression, so we can drop this one. However, we still need to check, whether this solution of the KKT system is indeed feasible, because we imposed implicit constraints into the problem. Thus, we need to check: $a^Ta, b^Tb \ge 0$ and $a^Tab^Tb \ge (a^Tb)^2$. The first two are analogous. We have: $a^Ta=-\frac{x^Tx+u}{v}$, but since the optimal $v_+$ is strictly negative, we can only analyze the sign of $2x^Tx+2u$ which is:
\[2x^Tx+2u=x^Tx-y^Ty+\frac{s_1^2+s_2^2}{s_1^2-s_2^2}(\Tilde{s}_1^2-\Tilde{s}_2^2) \ge \Tilde{s}_1^2-\Tilde{s}_2^2-|x^Tx-y^Ty|\overset{?}{\ge}0.\] However, using the same formulas as the constraints in this optimization problem on $\Tilde{s}_1, \Tilde{s}_2$ instead, one can, using very simple computations, check that this indeed holds. For the next condition, consider that $a^Tab^Tb-(a^Tb)^2$ must have the same sign as $(x^Tx+u)(y^Ty+u)-(x^Ty)^2=x^Txy^Ty-(x^Ty)^2+u^2+u(x^Tx+y^Ty).$ This resembles the quadratic function in~\eqref{eq:quadratic}. $u_+$ is a root of that quadratic function. Both this new quadratic function, as well as the one in~\eqref{eq:quadratic} have the same coefficients in front of $u^2$ and $u$. Therefore, to test for the sign of the expression above, it only suffices to check, whether this alternative quadratic function has at least as big absolute constant:
\begin{align*}
&x^Txy^Ty-(x^Ty)^2-\frac{(s_1^2+s_2^2)^2}{(s_1^2-s_2^2)^2}(x^Txy^Ty-(x^Ty)^2)+\left(\frac{s_1s_2}{s_1^2-s_2^2}\right)^2(x^Tx+y^Ty)^2 \ge 0 \iff \\
&-\frac{s_1^2s_2^2}{(s_1^2-s_2^2)^2}4\Tilde{s}_1^2\Tilde{s}_2^2+\frac{s_1^2s_2^2}{(s_1^2-s_2^2)^2}(\Tilde{s}_1^2+\Tilde{s}_2^2)^2 \ge 0 \iff \frac{s_1^2s_2^2}{(s_1^2-s_2^2)^2}(\Tilde{s}_1^2-\Tilde{s}_2^2)^2 \ge 0.
\end{align*}
The last inequality trivially holds. Therefore, the solution of KKT conditions is feasible. 

Now, let us consider the special case when $a^Tb=0$, $a^Ta=b^Tb$ and $s_1=s_2$ for which the LICQ criterion is not satisfied. A very simple computation then yields that any feasible solution achieves the same objective value as the one above. This means that if the problem does have a global solution, it must be this one, otherwise the optimal solution would neither satisfy a KKT condition, nor be in the region where LICQ does not hold. Therefore, we must, finally, reason that this is indeed a globally optimal solution, which is not guaranteed by KKT conditions themselves. For this, it is only necessary to note that the feasible set is obviously compact. Since the objective value is continuous, we can use the well-known fact that such an optimization problem indeed has a global solution. 
\end{proof}

Now we are ready to finish the proof of Lemma~\ref{Plem:the_key_lemma}. Let us first still assume $\Tilde{s}_1, \Tilde{s}_2 >0$. Then, combining Lemma~\ref{Plem:optimizing_W_2} with Lemma~\ref{Plem:optimizing_A} we get:
\begin{align*}
&\underset{W}{\min} \hspace{2mm} \norm{W}_F^2 \\
&\text{s.t.} \hspace{2mm} \mathcal{S}(\sigma(WX))=\{s_1, s_2\}
\end{align*}
equals \[\frac{(s_1^2+s_2^2)(\Tilde{s}_1^2+\Tilde{s}_2^2)-(s_1^2-s_2^2)(\Tilde{s}_1^2-\Tilde{s}_2^2)}{2\Tilde{s}_1^2\Tilde{s}_2^2}=\frac{s_1^2}{\Tilde{s}_1^2}+\frac{s_2^2}{\Tilde{s}_2^2}.\]
Now consider the case $\Tilde{s}_2=0$. According to Lemma~\ref{Plem:optimizing_W_2} this implies $s_2=0.$ When optimizing over the matrix $A$ we are facing the following optimization problem:
\begin{align*}
\underset{A}{\min} \hspace{2mm} &\frac{b^Tb}{y^Ty} \\
\text{s.t.} \hspace{2mm} &\mathcal{S}(A)=\{s_1, 0\}, \\
&\frac{a^Ta}{b^Tb}=\frac{x^Tx}{y^Ty},
\end{align*}
where the second constraint is clear from the considerations done in Lemma~\ref{Plem:optimizing_W_2}. Combining these two constraints, one simply gets \[b^Tb=\frac{y^Ty}{x^Tx+y^Ty}s_1^2\] and plugging this in the objective value we have $\frac{s_1^2}{\Tilde{s}_1^2},$ which is consistent with the statement of Lemma~\ref{Plem:optimizing_A}. The case when $X$ is 0 is trivial. This proves Lemma~\ref{Plem:the_key_lemma}.
\end{proof}

Notice that, for all indices but $l=1$, we managed to reduce the optimization over full matrices $W_l$ to the optimization over only the singular values of $\sigma(H_k)$ matrices. We will do the same with the first layer's matrices $W_1, H_1$. For this, we again split the optimization of $\frac{\lambda_{W_1}}{2}\norm{W_1}_F^2+\frac{\lambda_{H_1}}{2}\norm{H_1}_F^2$ given $\{s_1, s_2\}=\mathcal{S}(\sigma(W_1H_1))$ into two nested sub-problems, the inner one optimizing the quantity only for a fixed output $H_2 := W_1H_1$ and the outer one optimizing over the output $H_2$ given the optimal value of the inner problem. We can solve the inner subproblem by a direct use of the following lemma, which describes a variational form of a nuclear norm.

\variational*
\begin{proof}
See Lemma C.1 of \cite{tirer2022extended}. 
\end{proof}
Using this Lemma~\ref{Plem:variational}, we know that the optimal value of the inner optimization problem is $\sqrt{\lambda_{W_1}\lambda_{H_1}}\norm{H_2}_*.$ Therefore we only need to solve the outer problem: 
\begin{align*}
&\underset{H_2}{\min} \norm{H_2}_* \\ 
&\text{s.t.} \hspace{2mm} \mathcal{S}(\sigma(H_2))=\{s_1, s_2\}.
\end{align*} 
To solve this problem, we will state a much more general statement, which can be of independent interest, in the following lemma.

\minschattennorm*
\begin{proof}
First note that this optimization problem has two free components: \emph{(i)} $\sigma(H)$ so that it satisfies the constraint, and \emph{(ii)}  $H$ with $\sigma(H)$ being fixed. This enables us to split the problem into inner and outer minimization. However, the complexity of the problem will prevent us from doing this conditioning in a straightforward way. Further, let us also implicitly note that within the proof we implicitly assume the matrix in question has at least three rows. The case with two rows is very simple and follows the proof with more rows. 

We start by explicitly computing the objective value as a function of important statistics of $H$. Let $\sigma(H)$ have columns $r, s$. Then, \[\sigma(H)^T\sigma(H)=\begin{pmatrix}r^Tr & r^Ts\\ r^Ts & s^Ts \end{pmatrix}.\] Using the well-known formula on the eigenvalues of such matrix, we know $\lambda_1+\lambda_2=r^Tr+s^Ts$ and $\lambda_1\lambda_2=r^Trs^Ts-(r^Ts)^2.$ Solving this system we get: \begin{equation}\label{eq:min_schatten_norm_eigenvalues}
s_{1,2}^2=\lambda_{1,2}=\frac{r^Tr+s^Ts\pm\sqrt{(r^Tr+s^Ts)^2-4(r^Trs^Ts-(r^Ts)^2)}}{2}.
\end{equation}
Let us, for now, fix this $\sigma(H)$. For a while we will only optimize over $H$ s.t. $\sigma(H)$ is fixed. We will think about the optimization problem in the following way: consider any $\delta \ge 0$. Assume we are allowed to ``inject'' negative values in the place of zero entries of a fixed $\sigma(H)$ so that the sum of squares of those negative entries is exactly $\delta$. What is the minimal value of the objective value given this $\delta$ and what is the optimal injection? Let us denote any quantity which was subject to such injection (or perturbation) with an extra lower-index $P$. 

To answer this question, first have a closer look at \eqref{eq:min_schatten_norm_eigenvalues}. It is clear that after injecting the negative noise of total squared size $\delta$, the $r_P^Tr_P+s_P^Ts_P$ in a perturbed matrix will increase by exactly $\delta$ compared to $r^Tr+s^Ts$. Therefore, if we condition on $\delta$, the only degree of freedom lies in the $-4(r_P^Tr_Ps_P^Ts_P-(r_P^Ts_P)^2)$ in the perturbed matrix. Crucially, since the objective value is $\lambda_{1,P}^{1/L}+\lambda_{2,P}^{1/L}$, by concavity of $1/L$-th power, for any fixed sum of $\lambda_{1,P}+\lambda_{2,P}=r^Tr+s^Ts+\delta$ we want the $\lambda_{i,P}$ to be as far apart as possible. This is obtainable by maximizing the quantity $-4(r_P^Tr_Ps_P^Ts_P-(r_P^Ts_P)^2)$ which is equivalent to minimizing $r_P^Tr_Ps_P^Ts_P-(r_P^Ts_P)^2.$ Importantly, as it will be clear from the forthcoming reasoning, the minimal value of $\lambda_{2,P}$ is decreasing with $\delta$. Therefore, it only makes sense to consider all the $\delta$ values for which the minimal $\lambda_{2,P}$ is still bigger than 0 up to the $\delta$ value for which the minimal $\lambda_{2,P}$ hits 0 for the first time. From that $\delta$ on, every optimal $\lambda_{2,P}$ is 0, but the $\lambda_{1,P}$ increases and thus becomes sub-optimal. Now let us split the $\delta$ into $\delta_1$ and $\delta_2$, denoting the total squared size of perturbations per column. If we also fix $\delta_1$ and $\delta_2$ and condition on them, even the quantity $r_P^Tr_Ps_P^Ts_P$ is fixed and we are left with maximizing $(r_P^Ts_P)^2$. Denote $p=r^Ts$. To make the next argument clear, let us split the matrix $\sigma(H)$ into four blocks of rows, where in each block there is one of the four following types of rows: $(+,+),(+,0),(0,+),(0,0)$. We will now define variables $a, b, c, d$ as sum of squared entries corresponding to one column of one block. Specifically: 
\begin{align*}
&a=\sum_{i; \sigma(H)_{i1}>0, \sigma(H)_{i2}>0} \sigma(H)_{i1}^2; \hspace{5mm} b=\sum_{i; \sigma(H)_{i1}>0, \sigma(H)_{i2}>0} \sigma(H)_{i2}^2  \\
&c=\sum_{i; \sigma(H)_{i1}>0, \sigma(H)_{i2}=0} \sigma(H)_{i1}^2; \hspace{5mm} d=\sum_{i; \sigma(H)_{i1}=0, \sigma(H)_{i2}>0} \sigma(H)_{i2}^2.
\end{align*}
The value we are trying to minimize equals, for an unperturbed $\sigma(H)$ written in this way, $(a+c)(b+d)-p^2.$ Of course, $\sigma(H)$ might not have all the types of blocks, in which case the empty sum is just defined as 0. First, we argue that whatever is the optimal assignment of the negative values given $\delta_1, \delta_2$, they do not appear in both the $(0,0)$ block and $(+, 0), (0, +)$ blocks. This is because the only variable we are optimizing for in this conditioning is $(r_P^Ts_P)^2$. Assume that, given the perturbation, the quantity $r_P^Ts_P$ is negative. Then, any negative values present in the $(0,0)$ block can be moved to $(+, 0), (0, +)$ blocks strictly decreasing $r_P^Ts_P$ and thus increasing $(r_P^Ts_P)^2$. On the other hand, if $r_P^Ts_P$ happens to be positive, then any negative values present in the $(+, 0), (0, +)$ blocks can be moved to the $(0,0)$ block, strictly increasing the $r_P^Ts_P$ value and thus $(r_P^Ts_P)^2$ too. Therefore, we only need to consider cases when we inject the negative values to either the $(0,0)$ block or to $(+,0), (0,+)$ blocks, but not both at once. Let us start with the second case and analyze it. Clearly, if $\sigma(H)$ did not have either the $(0,0)$ block or $(+,0), (0,+)$ blocks, then we do not even need this reasoning anymore. Also it is clear, as we will see later, that a matrix $\sigma(H)$ which would completely lack 0 entries is suboptimal and thus there is always space to inject negative entries.

\textbf{Case 1: Injecting negative values to $(+,0), (0,+)$ blocks.}
For the same definition of $a,b,c,d$ as above, we further define the squared sizes of total injected negative values in the perturbed $H_P$ per block as $x, y$, i.e.,
\[x=\sum_{i; \sigma(H)_{i1}>0, \sigma(H)_{i2}=0} H_{P, i2}^2; \hspace{5mm} y=\sum_{i; \sigma(H)_{i1}=0, \sigma(H)_{i2}>0} H_{P,i1}^2,\]
and thus $x=\delta_2, y=\delta_1, x+y=\delta.$ Trivially, such a construction only makes sense if it is possible to make $r_P^Ts_P$ negative, otherwise it is strictly better to construct an alternative matrix $\overline{\sigma(H)}$ with $a+c=\overline{a}+\overline{c}; b+d=\overline{b}+\overline{d}; p=\overline{p}$, but which includes the $(0,0)$ block and put all the negative value there (this is always possible for matrices with at least 3 rows, since we only need two rows to construct $\overline{\sigma(H)}$ which satisfies the outlined constraints). This alternative matrix together with negative values in the $(0, 0)$ block would be better, because obviously $(\overline{r_P}^T\overline{s_P})^2 > (r_P^Ts_P)^2.$ 

Once $r_P^Ts_P$ is negative, though, it is clear that it is optimal, given the resources $x, y$, to produce as negative value as possible so as to maximize $(r_P^Ts_P)^2$. By the use of Cauchy-Schwartz inequality, this is possible exactly when the injected negative vectors in the blocks are aligned with their positive counterparts of that block. Then, the inner product between the negative and positive part of a block will equal exactly $\sqrt{cx}$ or $\sqrt{yd}$. Furthermore, if it is possible to construct $\overline{\sigma(H)}$ for which $\overline{a}+\overline{b}<a+b; \overline{a}+\overline{c}=a+c; \overline{b}+\overline{d}=b+d; \overline{p}=p$ (which is only possible if there are at least two rows in the $(+,+)$ block and if $p^2<ab$ -- thanks to the fact that then the angle between columns in the $(+,+)$ block is a free variable and we can decrease it while decreasing $a, b$ to keep $p$ unchanged using the law of cosine), then our objective value strictly decreases, since $\sqrt{\overline{cx}}+\sqrt{\overline{yd}}>\sqrt{cx}+\sqrt{dy}$. Therefore, it is always optimal for the first and second column of the $(+,+)$ block to be aligned, which also yields $ab=p^2$. This value of $p$ is necessary also if the $(+,+)$ block has only one row. Thus, we have that the original value of $r^Trs^Ts-(r^Ts)^2$ is $(a+c)(b+d)-ab$. The value after perturbation, using all the knowledge we acquired so far is $(a+c+y)(b+d+x)-(\sqrt{ab}-\sqrt{cx}-\sqrt{dy})^2$. Subtracting, we get a difference: $by+ax+xy+2\sqrt{abc}\sqrt{x}+2\sqrt{abd}\sqrt{y}-2\sqrt{cd}\sqrt{xy}$. Consider now an alternative matrix $\overline{H_P}$ such that $x, y$ remain unchanged and moreover $\overline{a}=\overline{b}=0$, $\overline{c}+\overline{d}=a+b+c+d$ and $\overline{c}\overline{d}=(a+c)(b+d)-ab$, meaning that the $\sigma(\overline{H_P})$ has the same singular values as $\sigma(H_P)$. However, computing the same difference of the same quantities one gets: $xy-2\sqrt{\overline{c}\overline{d}}\sqrt{xy}.$ This value is strictly smaller than the previous one for any fixed pair of $x, y$. Therefore, a matrix $\sigma(H)$ which \textit{does not} contain the $(+,+)$ block is the only possible optimal solution for the injection which goes into $(+,0), (0,+)$ blocks. Thus, we can reduce the previous problem into considering a matrix $H_P$ with $a=b=0$ and minimizing the difference in objective values $xy-2\sqrt{cd}\sqrt{xy},$ given $x+y=\delta$. After plugging in $y=\delta-x$, differentiating for $x$ and putting equal to $0$, we get $\delta-2x-\frac{\sqrt{cd}}{\sqrt{x(\delta-x)}}(\delta-2x)=0$, which is equivalent to $(\delta-2x)(1-\frac{\sqrt{cd}}{\sqrt{x(\delta-x)}})=0.$ From here, we see that either $x=\delta/2$ or $cd=x(\delta-x)$, which after solving yields $x_{1,2}=\frac{\delta\pm\sqrt{\delta^2-4cd}}{2}.$ If $\delta^2\le4cd$ then the only stationary point is $\delta/2$ and this is clearly an optimal solution. For $\delta^2=4cd$, the optimal solution yields a rank $1$ matrix so that $\lambda_{2,P}=0.$ This means we do not need to care about $\delta^2>4cd$, as this would not achieve smaller value of $\lambda_{2,P}$, but would increase the value of $\lambda_{1,P}$. So, the solution where $x=y=\delta/2$ is optimal and the objective value is $\delta^2/4-s_1s_2\delta.$ This is clearly smaller than 0 so we actually achieved a decrease in objective value. 

\textbf{Case 2: Injecting negative values to $(0,0)$ block.} For this, it is not necessary to distinguish $(+,+), (+, 0), (0,+))$ blocks, so we treat them as one. We can define:
\begin{align*}
&e=\sum_{i; \sigma(H)_{i1}+\sigma(H)_{i2}>0} \sigma(H)_{i1}^2+; \hspace{5mm} f=\sum_{i; \sigma(H)_{i1}+\sigma(H)_{i2}>0} \sigma(H)_{i2}^2  \\
&x=\sum_{i; \sigma(H)_{i1}=0, \sigma(H)_{i2}=0} \sigma(H)_{i1}^2; \hspace{5mm} y=\sum_{i; \sigma(H)_{i1}=0, \sigma(H)_{i2}=0} \sigma(H)_{i2}^2.
\end{align*}
The original objective value is $ef-p^2$, while the perturbed one is $(e+x)(f+y)-(p+\sqrt{xy})^2$ and so the difference is $xf+ye-2p\sqrt{xy}.$ However, using Cauchy-Schwartz inequality, this is bigger or equal to $xf+ye-2\sqrt{fexy}\ge0.$ Thus, it is not possible to improve the objective value in this case, unlike the previous case where it was. Therefore, this case is clearly sub-optimal and we do not need to deal with it. 

The only thing that could potentially go wrong with this proof is that there is not enough rows to build our necessary blocks. This can only happen if $\sigma(H)$ is a $2\times2$ matrix. This case is, however, very simple and we will not explicitly treat it, but the statement and the proof path are the same for it. 

We solved our optimization problem for every fixed allowed perturbation size $\delta$. Now we just need to optimize over $0\le\delta\le2s_1s_2.$ So we face the following optimization problem:
\begin{align*}
\underset{0\le\delta\le2s_1s_2}{\min}\left(\frac{s_1^2+s_2^2+\delta}{2}+\frac{1}{2}\sqrt{(s_1^2+s_2^2+\delta)^2-4(s_1^2s_2^2+\delta^2/4-s_1s_2\delta})\right)^{\frac{1}{L}}\\+\left(\frac{s_1^2+s_2^2+\delta}{2}-\frac{1}{2}\sqrt{(s_1^2+s_2^2+\delta)^2-4(s_1^2s_2^2+\delta^2/4-s_1s_2\delta})\right)^{\frac{1}{L}}.
\end{align*}
Denote $\lambda_{\delta,1}$ the value of the first summand of the above objective value (without $1/L$-th power) and similarly $\lambda_{\delta,2}$ the value of the second summand (so those are the eigenvalues of optimal $H_P^TH_P$ given exactly $\delta$ allowed negative injection). The derivative of the objective \wrt $\delta$ equals:
\begin{align*}
\frac{1}{n\lambda_{\delta,1}^{\frac{L-1}{L}}}\left(\frac{1}{2}+\frac{(s_1+s_2)^2}{2D_\delta}\right)+\frac{1}{n\lambda_{\delta,2}^{\frac{L-1}{L}}}\left(\frac{1}{2}-\frac{(s_1+s_2)^2}{2D_\delta}\right),
\end{align*}
where $D_\delta:=\sqrt{(s_1^2+s_2^2+\delta)^2-4(s_1^2s_2^2+\delta^2/4-s_1s_2\delta}$ denotes the square root part of the $\lambda_{\delta;1,2}$ expression. We will prove that this derivative is either 0 for $L=2$ or strictly negative on $[0, 2s_1s_2]$ everywhere except a set of measure 0 for $L\ge3$. This guarantees for $L=2$ the optimal value being equal to the unperturbed matrix while also defines the set of optimal solutions. For $L\ge3$ the optimal value is attained at $\delta=2s_1s_2$, the objective value is easily expressible and the optimal solution is characterized by a single perturbation size. To test whether this expression is smaller than 0, nothing happens if we multiply it with $2L\lambda_{\delta, 1}^{\frac{L-1}{L}}\lambda_{\delta, 2}^{\frac{2L-2}{L}},$ so we get:
\[\lambda_{\delta, 1}^{\frac{2L-2}{L}}\left(1+\frac{(s_1+s_2)^2}{D_\delta}\right)+\lambda_{\delta, 1}^{\frac{L-1}{L}}\lambda_{\delta, 2}^{\frac{L-1}{L}}\left(1-\frac{(s_1+s_2)^2}{D_\delta}\right).\] We can now compute
\[\lambda_{\delta, 1}\lambda_{\delta, 2}=\frac{(s_1^2+s_2^2+\delta)^2-(s_1^2-s_2^2)^2-2(s_1+s_2)\delta}{4}=\left(\frac{2s_1s_2-\delta}{2}\right)^2\] to get
\[\lambda_{\delta, 2}^{\frac{2L-2}{L}}\left(1+\frac{(s_1+s_2)^2}{D_\delta}\right)+\left(\frac{2s_1s_2-\delta}{2}\right)^{\frac{2L-2}{L}}\left(1-\frac{(s_1+s_2)^2}{D_\delta}\right).\]
This expression resembles a form for which the use of Jensen inequality would be convenient. However, there are two issues. First, the coefficients, though summing up to 1 are not a convex combination since $\frac{(s_1+s_2)^2}{D_\delta}\ge1.$ So this is a combination outside of the convex hull of the inputs. However, Jensen inequality outside the convex hull is applicable with opposite sign. The bigger issue is that the function to use the Jensen inequality on, $f(x)=x^{\frac{2L-2}{L}}$ is only easily defined on $\mathbb{R}_+$. To make sure that we can safely use Jensen inequality, we first need to check that the linear combination, though outside of the convex hull of the inputs, still lies in $\mathbb{R}_+$. We are therefore asking whether
\[\lambda_{\delta, 2}\left(1+\frac{(s_1+s_2)^2}{D_\delta}\right)+\frac{2s_1s_2-\delta}{2}\left(1-\frac{(s_1+s_2)^2}{D_\delta}\right)\ge 0.\]
Now, we can multiply out and simplify this expression (after multiplying by 2):
\begin{align*}
&s_1^2+s_2^2+\delta-D_\delta+(s_1^2+s_2^2)\frac{(s_1+s_2)^2}{D_\delta}+\delta\frac{(s_1+s_2)^2}{D_\delta}-(s_1+s_2)^2+2s_1s_2\\ 
&-\delta-\frac{2s_1s_2(s_1+s_2)^2}{D_\delta}+\frac{\delta(s_1+s_2)^2}{D_\delta}\ge 0 \iff \\&((s_1-s_2)^2+2\delta)(s_1+s_2)^2 \ge (s_1^2-s_2^2)^2+2(s_1+s_2)^2\delta,
\end{align*}
but this holds even with equality. So we proved that the combination of inputs we are considering here is exactly $0$ for which $x^{\frac{2L-2}{L}}$ is well-defined. So we can freely use Jensen inequality:
\begin{align*}
&\lambda_{\delta, 2}^{\frac{2L-2}{L}}\left(1+\frac{(s_1+s_2)^2}{D_\delta}\right)+(2s_1s_2-\delta)^{\frac{2L-2}{L}}\left(1-\frac{(s_1+s_2)^2}{D_\delta}\right) \le \\
&\left(\lambda_{\delta, 2}\left(1+\frac{(s_1+s_2)^2}{D_\delta}\right)+\frac{2s_1s_2-\delta}{2}\left(1-\frac{(s_1+s_2)^2}{D_\delta}\right)\right)^{\frac{2L-1}{L}}=0.
\end{align*}
Note that if $L=2,$ the function we applied Jensen inequality on was, in fact, linear and thus we retained equality. If $L\ge 3,$ then since the function in question is strictly convex, the equality can only hold if $\lambda_{\delta, 2}=2s_1s_2-\delta.$ After writing down the expression on $\lambda_{\delta, 2}$ it becomes clear that this is equivalent to a quadratic equation on $\delta,$ which can have at most two solutions. So we proved that the derivative is smaller than $0$ on the whole interval of interest except a set of measure at most 0, where it is 0. Therefore, the only optimum is at $\delta=2s_1s_2.$ Plugging this in the objective value one gets $(s_1+s_2)^{\frac{2}{L}}$ and from the proof we also get the precise description of the optimizers for both $L=2$ and $L>2$ cases. 
\end{proof}

Now we are ready to formally make the reduction of the optimization problem in \eqref{eq:the_full_dufm} into the optimization over only singular values of the matrices $\sigma(H_l)$ for $2\le l \le L.$ We formulate this reduction in a separate lemma for clarity. 

\begin{lemma}
The optimal value of the following optimization problem:
\begin{align*}
\underset{W_L, \dots, W_1, H_1}{\min} \hspace{2mm} &\frac{1}{4}\norm{W_L\sigma(W_{L-1}\dots W_2\sigma(W_1H_1))-I_2}_F^2+\sum_{l=1}^L\frac{\lambda_{W_l}}{2}\norm{W_l}_F^2+\frac{\lambda_{H_1}}{2}\norm{H_1}_F^2 \\ 
\text{s.t.} \hspace{2mm} &\{s_{l,1};s_{l,2}\}=\mathcal{S}(\sigma(H_l)) \hspace{2mm} \forall \hspace{1mm} 2\le l\le L
\end{align*} with $H_1$ having two columns and all the matrices having at least two rows is 
\[\sum_{i=1}^2 \left(\frac{\lambda_{W_L}}{2(s_{L,i}^2+2\lambda_{W_L})}+\sum_{l=2}^{L-1}\left(\frac{\lambda_{W_l}}{2}\frac{s_{l+1,i}^2}{s_{l,i}^2}\right)+\sqrt{\lambda_{W_1}\lambda_{H_1}}s_{2,i}\right),\] where the $\frac{s_{l+1,i}^2}{s_{l,i}^2}$ is defined as $0$ if $s_{l+1,i}^2=s_{l,i}^2=0.$
\begin{proof}
The proof follows easily by applying the Lemmas~\ref{Plem:min_schatten_norm}, \ref{Plem:the_key_lemma} and \ref{Plem:optimizing_W_L}.
\end{proof}
\end{lemma}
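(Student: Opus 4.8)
The plan is to exploit that, with $K=2$ and $H_1$ having a single column, we have $N=2$ and $Y=I_2$ (hence the $\tfrac14$ prefactor), and that the objective splits into blocks involving disjoint sets of weight matrices, each block being minimized \emph{purely in terms of the prescribed singular values} by a lemma already established: Lemma~\ref{Plem:optimizing_W_L} for the $W_L$–block, Lemma~\ref{Plem:the_key_lemma} for each $W_l$ with $2\le l\le L-1$, and Lemma~\ref{Plem:variational} combined with Lemma~\ref{Plem:min_schatten_norm} for the $(W_1,H_1)$–block. I would prove the stated value by two matching inequalities on the constrained problem: a lower bound valid at every feasible tuple, and attainment.

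For the lower bound, fix any feasible tuple. Lemma~\ref{Plem:optimizing_W_L} gives $\tfrac14\norm{W_L\sigma(H_L)-I_2}_F^2+\tfrac{\lambda_{W_L}}{2}\norm{W_L}_F^2\ge\sum_{i=1}^2\tfrac{\lambda_{W_L}}{2(s_{L,i}^2+2\lambda_{W_L})}$, since $\sigma(H_L)$ has singular values $s_{L,1},s_{L,2}$. For each $l$ with $2\le l\le L-1$ (this range is empty when $L=2$), apply Lemma~\ref{Plem:the_key_lemma} with $X=\sigma(H_l)$ — a two-column matrix with $\mathcal{S}(X)=\{s_{l,1},s_{l,2}\}$ — and $W=W_l$: because $H_{l+1}=W_l\sigma(H_l)$ and $\mathcal{S}(\sigma(H_{l+1}))=\{s_{l+1,1},s_{l+1,2}\}$ by feasibility, the lemma yields $\tfrac{\lambda_{W_l}}{2}\norm{W_l}_F^2\ge\tfrac{\lambda_{W_l}}{2}\big(\tfrac{s_{l+1,1}^2}{s_{l,1}^2}+\tfrac{s_{l+1,2}^2}{s_{l,2}^2}\big)$, with the convention $0/0:=0$ (assuming the prescribed values are feasible in the sense of Lemma~\ref{Plem:the_key_lemma}, e.g. strictly positive). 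Finally, Lemma~\ref{Plem:variational} with $A=W_1,B=H_1,C=H_2=W_1H_1$ gives $\tfrac{\lambda_{W_1}}{2}\norm{W_1}_F^2+\tfrac{\lambda_{H_1}}{2}\norm{H_1}_F^2\ge\sqrt{\lambda_{W_1}\lambda_{H_1}}\,\nuc{H_2}$, and Lemma~\ref{Plem:min_schatten_norm} at $L=2$ — where $S_{2/L}$ is the nuclear norm — gives $\nuc{H_2}\ge s_{2,1}+s_{2,2}$. Summing these three bounds and exchanging the order of summation turns $\sum_i\tfrac{\lambda_{W_L}}{2(s_{L,i}^2+2\lambda_{W_L})}+\sum_{l=2}^{L-1}\tfrac{\lambda_{W_l}}{2}\sum_i\tfrac{s_{l+1,i}^2}{s_{l,i}^2}+\sqrt{\lambda_{W_1}\lambda_{H_1}}\sum_i s_{2,i}$ into exactly the claimed $\sum_{i=1}^2(\cdots)$.

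For attainment, I would build an optimal tuple from the bottom up: pick $H_2^\star$ realizing the minimum of Lemma~\ref{Plem:min_schatten_norm} (so $\mathcal{S}(\sigma(H_2^\star))=\{s_{2,1},s_{2,2}\}$ and $\nuc{H_2^\star}=s_{2,1}+s_{2,2}$); factor $H_2^\star=W_1^\star H_1^\star$ through the minimizers of Lemma~\ref{Plem:variational}; then, for $l=2,\dots,L-1$, let $W_l^\star$ be the minimizer from Lemma~\ref{Plem:the_key_lemma} applied to input $\sigma(H_l^\star)$ and put $H_{l+1}^\star:=W_l^\star\sigma(H_l^\star)$, which by that lemma satisfies $\mathcal{S}(\sigma(H_{l+1}^\star))=\{s_{l+1,1},s_{l+1,2}\}$, so feasibility propagates up the chain; finally take $W_L^\star=\sigma(H_L^\star)^T(\sigma(H_L^\star)\sigma(H_L^\star)^T+2\lambda_{W_L}I_{d_L})^{-1}$ as in Lemma~\ref{Plem:optimizing_W_L}. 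Evaluating the objective at this tuple reproduces the displayed value block by block, so it matches the lower bound. The step that needs care — and the only real obstacle I anticipate — is this stitching: each lemma's optimizer must have output with singular values \emph{exactly} the prescribed $s_{l+1,i}$ (true, since this is built into each lemma's constraint), and the intermediate matrix dimensions must admit the required factorizations, which is guaranteed by the standing assumption $d_1,\dots,d_L>1$ after padding SVD factors with zero rows/columns. The lower bound, by contrast, needs no such bookkeeping, since there each block is bounded below on its own and the bound depends on the remaining variables only through the already-fixed singular values.
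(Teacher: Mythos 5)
Your proof is correct and takes essentially the same route as the paper, which establishes this lemma by a one-line appeal to Lemmas~\ref{Plem:min_schatten_norm}, \ref{Plem:the_key_lemma} and \ref{Plem:optimizing_W_L} (with Lemma~\ref{Plem:variational} used implicitly); you simply make that appeal precise via the matching lower bound and attainment argument. One trivial slip: you write at the outset that $H_1$ has a single column, but with $K=2$ and $n=1$ it has two columns, which is in any case how the rest of your argument treats it.
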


Now we are only left with optimizing the objective over all the $s_{l,i}.$ Crucially, the objective function is separable and symmetric in the $i$ index. Therefore we can optimize for each index separately. Importantly, if there is only a single solution to this problem, then the solution must be the same for both indices $i$, which will yield orthogonality of the $\sigma(H_l)$ matrices. The ultimate goal is to show that this is indeed the case modulo some special circumstances. The optimal value of this reduced optimization problem is presented in the following, final lemma with abstract notation of the free variables.

\optimizingsigmas*
\begin{proof}
Here we will need to distinguish the cases $L=2$ and $L>2$. We start with the case $L=2$. Let us reparametrize the problem so that we will work with $y=\sqrt{x_2}.$ We get the following expression to optimize the $y$ over:
\[\frac{\lambda_{W_2}}{2(y^2+2\lambda_{W_2})}+\sqrt{\lambda_{W_1}\lambda_{H_1}}y.\]
This function is simple enough so that we understand its shape. It has precisely one inflection point, below which it is concave, later it is convex. The derivative at $0$ is strictly positive. Judging from this, the function either has 0 as a single global optimum, or it has two global optima, one at 0 and one at a strictly positive local minimum of the strictly convex part of the function, or the global minimum is achieved uniquely at the strictly convex part of the function. To find out, we only need to solve $f(y)=f(0)$ for $y,$ where $f$ stands for the optimized function and determine the number of solutions. The quadratic equation this is equivalent to after excluding the point $y=0$ as a solution is $4y^2\sqrt{\lambda_{W_1}\lambda_{H_1}}-y+8\lambda_{W_2}\sqrt{\lambda_{W_1}\lambda_{H_1}}=0.$ This has a unique solution if and only if $\lambda_{W_2}\lambda_{W_1}\lambda_{H_1}=1/256,$ which is precisely the threshold from the statement of the lemma. The cases outside of this threshold are easily mapped to the system having the global solution 0 or some positive point.

Let us now consider the case $L>2.$ First note that, once for any $l_0$ the optimal $x_{l_0}^*=0,$ then at this particular solution all the $x$'s must be 0. If $l_0=L,$ then this claim is apparent backward-inductively optimizing always over $x$ with one index lower. If $2\le l_0<L$, then the claim is trivial backward-inductively for all $l\le l_0,$ while for all $l>l_0$ this solution is forced by feasibility issues. From now on we will thus work with an implicit strict positiveness assumption. Taking derivatives of this objective \wrt $x_l$ for different $l$ and putting them equal 0 we arrive at three types of equations depending on $l$:
\begin{align}\label{eq:stationarity_of_x_l}
\sqrt{\frac{\lambda_{W_L}}{\lambda_{W_{L-1}}}}\sqrt{x_{L-1}}&=x_L+2\lambda_{W_L}, \nonumber \\ 
x_l^2 &= \frac{\lambda_{W_l}}{\lambda_{W_{l-1}}}x_{l+1}x_{l-1}, \hspace{4mm} \text{for} \hspace{1mm} 3\le l \le L-1, \nonumber \\
x_2^{\frac{3}{2}}&=\frac{\lambda_{W_2}}{\sqrt{\lambda_{W_1}\lambda_{H_1}}}x_3.
\end{align}
Here we can notice two things. First, we have $L-1$ equations for $L-1$ variables, so we should expect a single solution in most of the cases. Second, the equations are rather simple and recursive. Having either a value of $x_2$ or $x_L$, all other values are uniquely determined. Let us denote $q:=\frac{x_L}{x_{L-1}}.$ Based on this, we can arrive at the following formula: 
\begin{equation}\label{eq:formula_on_x_L_minus_k}
x_{L-k}=\frac{\prod_{j=1}^{k-1}\lambda_{W_{L-j-1}}}{\lambda_{W_{L-1}}^{k-1}}q^{-k}x_L    ,
\end{equation}
for all $1\le k \le L-2.$ An empty product is defined as 1. To derive this, for $L=3$ the statement is only about $x_2$ and is trivial. For $L>3,$ the $k=1$ case is trivial, then one can proceed by induction using the recurrent formula $x_l^2 = \frac{\lambda_{W_l}}{\lambda_{W_{l-1}}}x_{l+1}x_{l-1}.$ From formula~\eqref{eq:formula_on_x_L_minus_k}, one can derive another useful equation: \[\frac{x_{k+1}}{x_k}=\frac{\lambda_{W_{L-1}}}{\lambda_{W_k}}q, \hspace{6mm} \mbox{for }\,\,\, 2\le k \le L-1.\] Combining this with~\eqref{eq:stationarity_of_x_l}, we also get $x_2=\frac{\lambda_{W_{L-1}}^2}{\lambda_{W_1}\lambda_{H_1}}q^2.$ Combining now this equation with formula~\eqref{eq:formula_on_x_L_minus_k} for $x_2$ from before one gets: 
\begin{align*}
\frac{\lambda_{W_{L-1}}^2}{\lambda_{W_1}\lambda_{H_1}}q^2=\frac{\prod_{j=1}^{L-3}\lambda_{W_{L-j-1}}}{\lambda_{W_{L-1}}^{L-3}}q^{2-L}x_L \iff
x_L=\frac{\lambda_{W_{L-1}}^{L-1}}{\lambda_{H_1}\prod_{j=1}^{L-2}\lambda_{W_j}}q^L.
\end{align*}
Notice that we managed to express all the terms present in the main objective as simple functions of $q$ and regularization parameters. Plugging all of these expressions back in the objective, we get: 
\begin{equation}\label{eq:the_optimal_value_calculator}
\frac{\lambda_{W_L}}{2\left(\frac{\lambda_{W_{L-1}}^{L-1}}{\lambda_{H_1}\prod_{j=1}^{L-2}\lambda_{W_j}}q^L+2\lambda_{W_L}\right)}+\frac{L\lambda_{W_{L-1}}}{2}q.    
\end{equation}
Since we only care about relative values of this expression, the optimization will be equivalent after multiplying it by $2/\lambda_{W_{L-1}}$ so we get:
\begin{equation}\label{eq:optimization_fcn_over_q}
\frac{1}{\frac{\lambda_{W_{L-1}}^L}{\lambda_{H_1}\lambda_{W_L}\prod_{j=1}^{L-2}\lambda_{W_j}}q^L+2\lambda_{W_{L-1}}}+Lq.
\end{equation}
Differentiating this function twice we can see that the second derivative is negative from 0 up to a single threshold at which it is 0, then it is positive. Therefore, based on the value of the first derivative at this threshold, and taking into account that the first derivative at 0 is strictly positive, we can easily see that the function has either a single global minimum at $0$ or it has two global minima at 0 and at some positive $q$ or it has a single global optimum at some positive $q$. To find out which of the cases we are at, we can simply solve for 
\[\frac{1}{\frac{\lambda_{W_{L-1}}^L}{\lambda_{H_1}\lambda_{W_L}\prod_{j=1}^{L-2}\lambda_{W_j}}q^L+2\lambda_{W_{L-1}}}+Lq=\frac{1}{2\lambda_{W_{L-1}}}.\]
After excluding the trivial solution $q=0$, this is equivalent to:
\[2\lambda_{W_{L-1}}Lq^L-q^{L-1}+\frac{4L\lambda_{H_1}\lambda_{W_L}\prod_{j=1}^{L-2}\lambda_{W_j}}{\lambda_{W_{L-1}}^{L-2}}=0.\]
The case at which we stand \wrt the global solutions of the above objective function is equivalent to the number of solutions of this equation. Since the LHS of this equation is a very simple function, it is apparent that the number of solutions to this equation is equivalent to the sign of the LHS evaluated at the single zero-derivative positive $q$. The derivative is $2\lambda_{W_{L-1}}L^2q^{L-1}-(L-1)q^{L-2}$ which after solving for $q>0$ yields $q=\frac{L-1}{2\lambda_{W_{L-1}}L^2}.$ Plugging this back into the LHS above, we get the expression \[2^{1-L}L^{1-2L}\lambda_{W_{L-1}}^{1-L}(L-1)^L-(L-1)^{L-1}2^{1-L}\lambda_{W_{L-1}}^{1-L}L^{2-2L}+\frac{4L\lambda_{H_1}\lambda_{W_L}\prod_{j=1}^{L-2}\lambda_{W_j}}{\lambda_{W_{L-1}}^{L-2}}.\] Thresholding this for 0 and simplifying, we get the final \[\lambda_{H_1}\prod_{j=1}^{L}\lambda_{W_j}=\frac{(L-1)^{L-1}}{2^{L+1}L^{2L}}.\] If the LHS is smaller than this threshold, the function in~\eqref{eq:optimization_fcn_over_q} has a single non-zero global optimum. If the LHS is at the threshold, the optimization problem has both 0 and some single non-zero point for a global solution. If the LHS is above the threshold, the optimization problem has only 0 as a single solution. 
\end{proof}
With all this, Theorem~\ref{Pthm:full_dufm} follows very easily. We only need to conclude that if the optimization problem in Lemma~\ref{Plem:optimizing_sigmas} has a single non-zero solution, then both the optimal $s_{l,1}^*$ and $s_{l,2}^*$ are forced to be the same for each $l.$ From this, we obtain the orthogonality of all the $\sigma(H_l^*)$ for $l\ge2.$ However, from Lemma~\ref{Plem:optimizing_W_2} we know that even the optimal $H_l^*$ for $l\ge3$ must be orthogonal, since it is non-negative and equal to $\sigma(H_l^*)$. From the same lemma, we conclude the statement about $W_l^*$ for $l\ge2.$ The optimal $W_1^*, H_1^*$ follow easily from Lemma~\ref{Plem:variational}. 

By now, we only considered the case $n=1.$ Now, we proceed to the general $n$ case, where we need to also prove DNC1. We will proceed by contradiction. Assume there exist $\Tilde{H}_1^*, W_1^*, \dots, W_L^*$ which optimize $L$-DUFM, but $\Tilde{H}_1^*$ is \textit{not} DNC1-collapsed. Now, since the objective function in~\ref{eq:the_full_dufm} is separable in the columns of $H_1$, we know 
\begin{align*}
&\frac{1}{2N}\norm{W_L^*\sigma(\dots W_2^*\sigma(W_1^*h_{c,i}^{(1)}))-e_c}_F^2+\sum_{l=1}^L\frac{\lambda_{W_l}}{2}\norm{W_l^*}_F^2+\frac{\lambda_{H_1}}{2}\norm{h_{c,i}^{(1)}}_2^2 \\ =&\frac{1}{2N}\norm{W_L^*\sigma(\dots W_2^*\sigma(W_1^*h_{c,j}^{(1)}))-e_c}_F^2+\sum_{l=1}^L\frac{\lambda_{W_l}}{2}\norm{W_l^*}_F^2+\frac{\lambda_{H_1}}{2}\norm{h_{c,j}^{(1)}}_2^2,
\end{align*}
where $h_{c,i}^{(1)}, h_{c,j}^{(1)}$ are columns of $\Tilde{H_1^*}$, for all $c, i, j$ applicable and $e_c$ is the $c$-th standard basis vector. Otherwise, we could have just exchanged all sub-optimal $h_{c,i}^{(1)}$ with the optimal one and achieve better objective value. However, if the last equality holds, then we can construct an alternative $\Bar{H}_1^*$ where for each class $c$, we pick any $h_{c,i}^{(1)}$ and place it instead of all other $h_{c,j}^{(1)}$ within the class $c$. In this way, the $\Bar{H}_1^*$ will be DNC1-collapsed, while still an optimal solution. However, for $\Bar{H}_1^*$, we now know that if $H_1^*$ is constructed by taking just a single sample from both classes (thus forcing $n=1$), this must be an optimal solution of the $n=1$ $L$-DUFM with $n\lambda_{H_1}$ as the regularization term for $H_1$ and same feature vector dimensions and other regularization terms. This is because if there was a better solution for the corresponding problem, then we could construct a counterexample to optimality in the original $L$-DUFM problem by just simply taking this alternative, better solution and using the feature vectors of this solution for each sample of each individual class.

Therefore, since $H_1^*$ must be optimal in the new $n=1$ problem (with a slight abuse of notation, we re-label the $n\lambda_{H_1}$ back to $\lambda_{H_1}$), we know that $\sigma(H_2^*)$ is collapsed in the way described by the theorem statement for $n=1$. By non-negativity of $\sigma(H_2^*)$ this means that $\sigma(H_2^*)$ has two columns, lets call them $x, y$ for which: $x, y \ge 0; x^Ty=0, \norm{x}=\norm{y}$ (by the notation $x\ge0$ we mean entry-wise inequality). This also implies that there is no such $i$ that $x_i, y_i > 0$. Moreover, let us assume $\sigma(H_2^*) \neq 0$. We can do that as otherwise simple arguments would lead us into the degenerate solution where all $W_l^*=0,$ for which obviously all the columns in the originally optimal $\Tilde{H}_1^*$ would need to be 0 implying simple DNC1 and reducing easily to the case of too big regularization explicitly stated in Theorem~\ref{Pthm:full_dufm}. We can now omit this case and assume non-triviality of the solution. Invoking Lemma~\ref{Plem:min_schatten_norm}, we know that the whole range of matrices $A_2$ are such that $\sigma(A_2)=\sigma(H_2^*)$, while $\nuc{A_2}=\nuc{H_2^*}.$ Namely, by Lemma~\ref{Plem:min_schatten_norm}, all the matrices $A_2$ which satisfy that condition are represented by having columns of the form $x-ty, y-tx$ for $t \in [0,1]$. This can be derived as follows. According to the lemma and the description of $\sigma(H_2^*)$, this matrix is a good candidate for the non-negative part of the optimal solution of the corresponding optimization problem, and so the optimal solution (in the case of $L=2$ a solution for which the nuclear norms of it and its non-negative part are equal) whose non-negative part equals $\sigma(H_2^*)$ exists. The negative values, according to Lemma~\ref{Plem:min_schatten_norm}, must follow a strict, yet still non-unique pattern. The negative values of the $(+, 0)$ rows of $\sigma(H_2^*)$ must be arranged so that they are aligned with the positive values in the corresponding rows, thus being multiple of $x$. The negative values of the $(0, +)$ rows must be, on the other hand, aligned with $y$. The coefficients of homothety between $x$ and the negative part in the $(+, 0)$ rows and between $y$ and the negative part of the $(0, +)$ rows must be equal due to the condition on equal $l_2$ norms of the negative parts corresponding to $(+, 0), (0, +)$ rows and the fact that $\norm{x}=\norm{y}.$ Finally, the coefficients of homothety must be non-positive (yielding non-negative $t$) to retain the negativeness and no smaller than $-1$, due to the final condition of Lemma~\ref{Plem:min_schatten_norm}, stating that the product of any two negative values of counterfactually signed rows must be at most as big as the product of the corresponding positive entries and due to the already established alignment of positive and negative parts. In this way, the only matrices $A_2$ which satisfy these conditions are represented by having columns of the form $x-ty, y-tx$ for $t \in [0,1]$, as stated above (note that $-ty$ is aligned with $x$ on $(+, 0)$ rows of $\sigma(H_2^*)$ and similarly for $-tx$). 

Let us denote by $A_2^t$ a matrix of that form for a particular $t$. Now, the SVD of $A_2^t$ is as follows: $U^t$ is composed of $(x-y)/(\sqrt{2}\norm{x})$ and $(x+y)/(\sqrt{2}\norm{x})$, $\Sigma^t$ is a diagonal matrix with diagonal entries equal to $\norm{x}(1+t)$ and $\norm{x}(1-t)$, and $V^t$ is composed of the vectors $(1/\sqrt{2}, -1/\sqrt{2})^T$ and $(1/\sqrt{2}, 1/\sqrt{2})^T.$ For $t \neq 0$, this is the unique SVD up to $\pm$ multiplications. For $t=0$, $A_2^0$ is orthogonal and the SVD is non-unique. However, as we will see, this will not be relevant in the following argument. Invoking Lemma~\ref{Plem:variational} we see that, for each $t$, the set of possible $W_1^t$ which can be the optimal solution is $\mathcal{S}^t=\{c_\lambda U^t(\Sigma^t)^{1/2}R^T\}$ for all possible orthogonal $R$ of suitable dimensions. From this representation, we see that the non-uniqueness of SVD is ``wrapped'' in the freedom of $R^T$. If $t=0$, $\Sigma^0$ is diagonal and commutative and $U^0R^T$ obviously cover all possible representations of $U$ in the SVD of $A_2^0$. If $t>0$, we can easily cover the freedom in signs by $R$ too. Thus, we can assume a fixed $U$ and $V$ for all $t \in [0, 1]$. The question is, whether for $0 \le t_1 < t_2 \le 1$, $\mathcal{S}^{t_1} \cap \mathcal{S}^{t_2} = \emptyset$. To find out, consider some $R^{t_1}$ and $R^{t_2}$ orthogonal. We want to know whether it is possible that $c_\lambda U^{t_1}(\Sigma^{t_1})^{1/2}(R^{t_1})^T=c_\lambda U^{t_2}(\Sigma^{t_2})^{1/2}(R^{t_2})^T$. First, this obviously does not hold if $t_2=1,$ because of the rank inequality. Assume $t_2<1$ and do the computation:
\begin{align*}
c_\lambda U^{t_1}(\Sigma^{t_1})^{1/2}(R^{t_1})^T&=c_\lambda U^{t_2}(\Sigma^{t_2})^{1/2}(R^{t_2})^T \iff \\
U(\Sigma^{t_1})^{1/2}(R^{t_1})^T&=U(\Sigma^{t_2})^{1/2}(R^{t_2})^T \iff \\
(\Sigma^{t_1})^{1/2}(R^{t_1})^T&=(\Sigma^{t_2})^{1/2}(R^{t_2})^T \iff \\
(\Sigma^{t_1})^{1/2}(\Sigma^{t_2})^{-1/2}&=(R^{t_2})^T(R^{t_1}),
\end{align*}
where in the first equivalence we used the established knowledge that we can use the unique representative $U$ for all $t$. However, the diagonal elements of $(\Sigma^{t_1})^{1/2}(\Sigma^{t_2})^{-1/2}$ are $\sqrt{(1+t_1)/(1+t_2)}$ and $\sqrt{(1-t_1)/(1-t_2)}$. The first one is strictly greater than 1. On the other hand, all the entries of $(R^{t_2})^T(R^{t_1})$ are inner products between columns of orthogonal matrices and thus at most 1 in absolute value. This proves $\mathcal{S}^{t_1} \cap \mathcal{S}^{t_2} = \emptyset$ for $0 \le t_1 < t_2 \le 1$. 

Thus, there is just a single $t^*$ for which $W_1^*$ is optimal and, moreover, there is a single $A_2^*=A_2^{t^*}$ that can be the output of the first layer. Then, if we analyze the two systems of linear equations $A_2^*=W_1^*X$, where $X$ is the $d \times 2$ matrix variable (each system for each column of $X$), we know that there is a unique min-$l_2$ norm solution per system. The min-$l_2$ norm solution must be the only optimal solution, because $H_1$ is regularized using the Frobenius norm. Thus, there is a single $H_1$ which can be the optimal solution. However, this is a contradiction with the assumption that we had a freedom while constructing $\Bar{H}_1^*$ from $\Tilde{H}_1^*$ -- a possibility of choosing different columns from some classes as the representative of that class. Therefore, the $\Tilde{H}_1^*$ must have had a single feature vector per class in the first place, which contradicts the assumption that it is not DNC1 collapsed.

Now that we have the DNC1 collapse, we can obtain the DNC2-3 collapse easily by just again going from the full $\Tilde{H}^*$ to the reduced $n=1$ version of it. Moreover, we know what are the forms of $W_1^*$ and $H_1^*$ -- orthogonal transformations of the SVD of $H_2^*$. However, it must be said that these optimal matrices do not need to be orthogonal themselves, due to the $(\Sigma^t)^{1/2}$ multiplication, which makes the matrix non-orthogonal (consider for instance just taking the identity matrix extended to rectangularity for $R$). This concludes the proof of Theorem~\ref{Pthm:full_dufm}.
\end{proof}

\section{Extension to multiple classes}\label{App:multiclass}
Here, we comment shortly on why it is challenging to formulate an equivalent of Theorem~\ref{Pthm:full_dufm} for $K>2.$ First of all, it is already unclear how to carry out the proof of Lemma~\ref{Plem:the_key_lemma} for $K>2$ and even whether it holds. However, the bigger issue lies in Lemma~\ref{Plem:min_schatten_norm}. Note that $\nuc{H_2}\ge \nuc{\sigma(H_2)}$ only holds for matrices with two rows or columns. A simple counterexample with $3\times 3$ matrix goes as follows: 
\[A=\begin{pmatrix}
-1 & 0 & 1 \\
0 & 1 & 1 \\
1 & 1 & 0
\end{pmatrix},\]
for which $\nuc{A}=3.464$ and $\nuc{\sigma(A)}=3.494.$ However, this does not mean that Theorem~\ref{Pthm:full_dufm} does not hold for $K>2$. For instance, at orthogonal matrices, which are collapsed, one can show that this equation holds by computing the sub-gradient of the nuclear norm and using convexity. It is also hard to come up with counterexamples to the inequality for matrices which are close to orthogonal. Therefore, it is our belief that the cost of losing orthogonality in other loss terms makes the compensation of achieving slightly lower nuclear norm insufficient. However, to make this argument formal, one needs a sophisticated control over this quantity together with all the other results. 

\section{Further numerical experiments on DUFM and ablations}\label{App:numerics}
In this section, we further investigate the effect of significant parameters on the training dynamics of DUFM. We focus on the effect of depth, regularization strength and width on the outcome of DUFM optimization. All the plots (except the one in Appendix~\ref{App:numerics_special}) are averaged over 10 runs with one standard deviation to both sides displayed as a confidence band. First, however, we supplement the numerical results from Section~\ref{ssec:numerics} with the promised plots of the losses. 

\subsection{Extra material to the numerical results in Section~\ref{ssec:numerics}}\label{App:numerics_main_body}
In Figure~\ref{fig:main_body_losses}, we plot the loss functions and theoretical optimum, together with the disentangled loss functions computed separately on the fit term and the regularization terms for $H_1$, $W_l$, with $1\le l\le L.$ As already discussed in Section~\ref{ssec:numerics}, the DUFM finds the global optimum perfectly and exhibits superb DNC.

\begin{figure}
    \centering
    \includegraphics[width=0.45\textwidth]{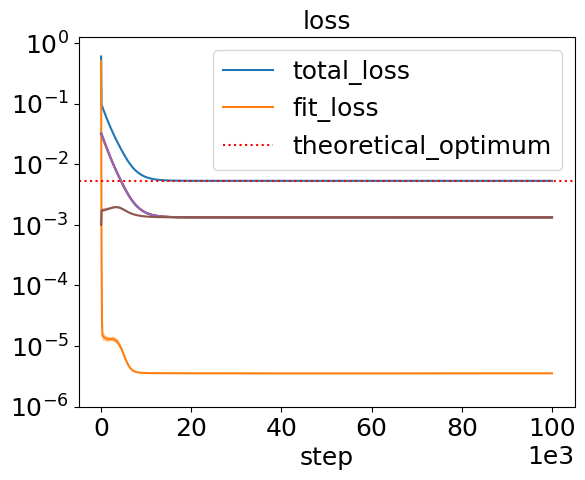}
    \includegraphics[width=0.45\textwidth]{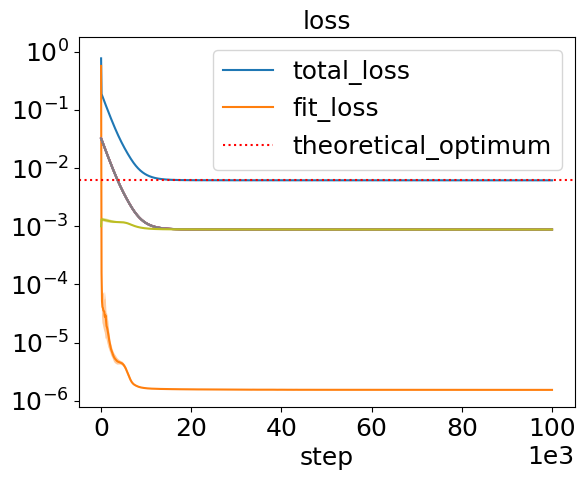}
    \caption{Losses and partial losses for $3$-DUFM (left) and $6$-DUFM (right) as described in Section~\ref{ssec:numerics}. The total training loss is compared to the numerically computed optimum (based on~\eqref{eq:the_optimal_value_calculator}). The unlabeled loss curves correspond to all the regularization losses.}
    \label{fig:main_body_losses} 
\end{figure}

\subsection{Effect of depth on DUFM training}\label{App:numerics_depth}
As already seen by showing 3- and 6-layered DUFM, the depth does not play a significant role on the training of DUFM. However, an implicit bias of GD can be observed for very deep DUFMs. To demonstrate this, we plot the 10-DUFM training results in Figure~\ref{fig:ten_dufm}. 

\begin{figure}
    \centering
    \includegraphics[width=0.45\textwidth]{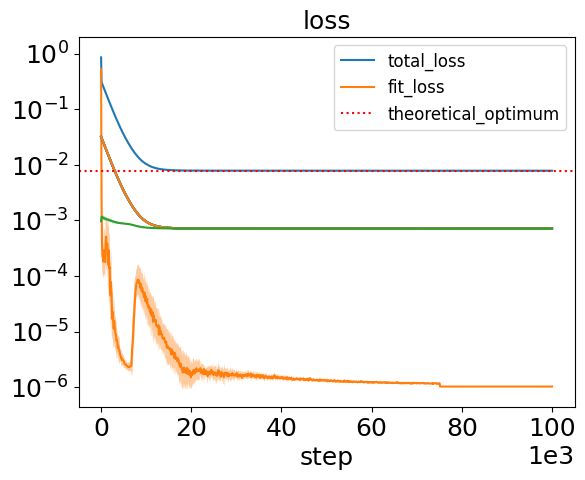}
    \includegraphics[width=0.45\textwidth]{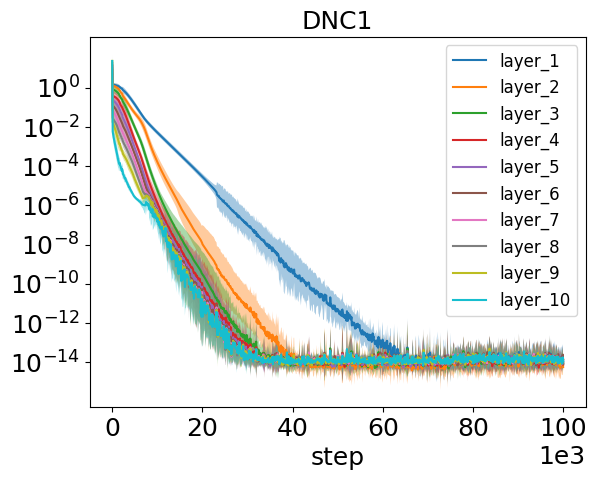}
    \includegraphics[width=0.45\textwidth]{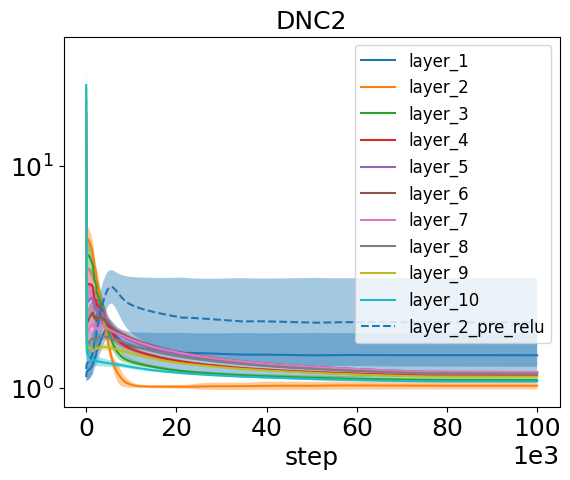}
    \includegraphics[width=0.45\textwidth]{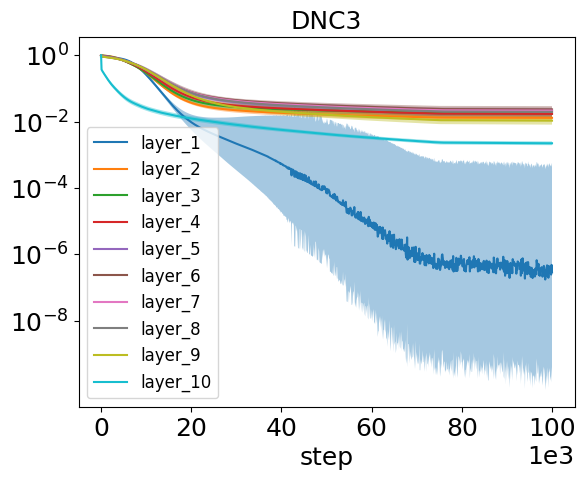}
    \caption{Losses and DNC metric progressions as a function of training time. The unlabeled loss curves correspond to all the regularization losses. Here, the 10-DUFM is trained. The variance in some of the DNC metrics is due to the non-zero chance of qualitatively different behaviors for the first and second layer.}
    \label{fig:ten_dufm}
\end{figure}

Though not well visible on the plots, the DNC2 is \textit{sometimes} achieved also on $H_1, H_2$, which in the shallower DUFMs was an exception. This shows that with increasing depth, GD is more and more implicitly biased towards finding DNC2 representations for these feature matrices too.

\subsection{Effect of width on DUFM training}\label{App:numerics_width}
Unlike depth, width has a significant effect on the DUFM training, especially at small widths. The reason is that the model parameters often get stuck in sub-optimal configurations due to the difficult optimization landscape. To compare, we ran 4-DUFM training with the same hyperparameters except the width. We repeated the experiments for widths of 2, 5, 10, 20, 64, 1000. We plot all the results in Figure~\ref{fig:depth_effect}.

\begin{figure}
    \centering
    \includegraphics[width=0.24\textwidth]{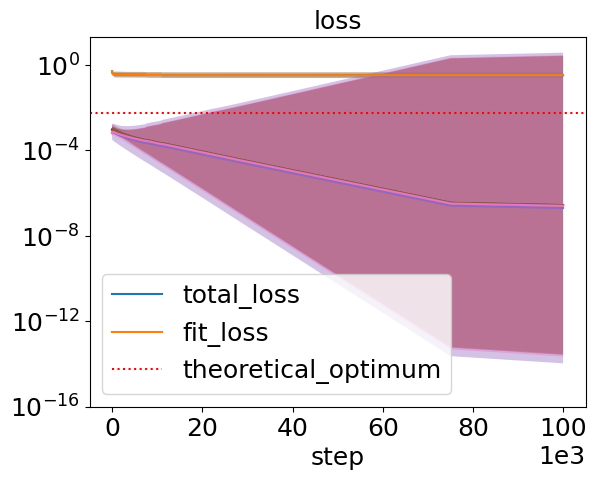}
    \includegraphics[width=0.24\textwidth]{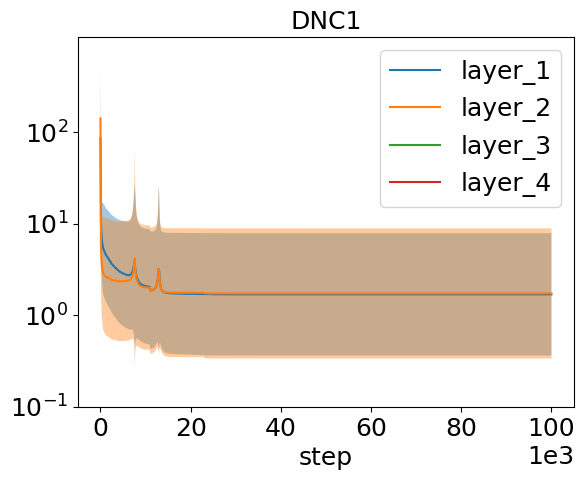}
    \includegraphics[width=0.24\textwidth]{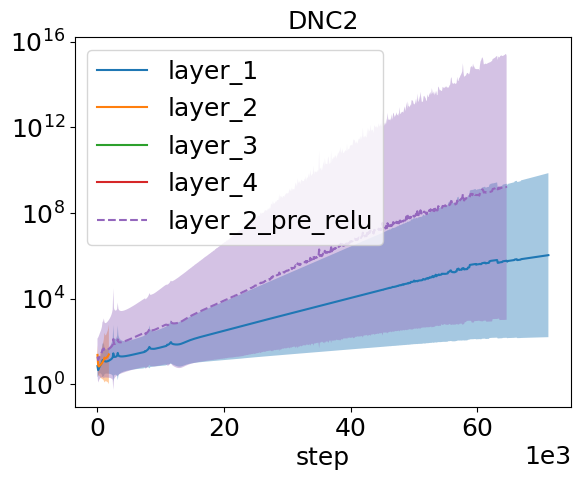}
    \includegraphics[width=0.24\textwidth]{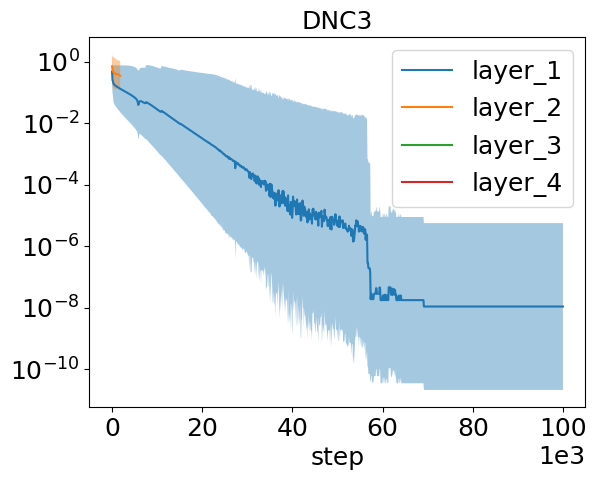}
    \includegraphics[width=0.24\textwidth]{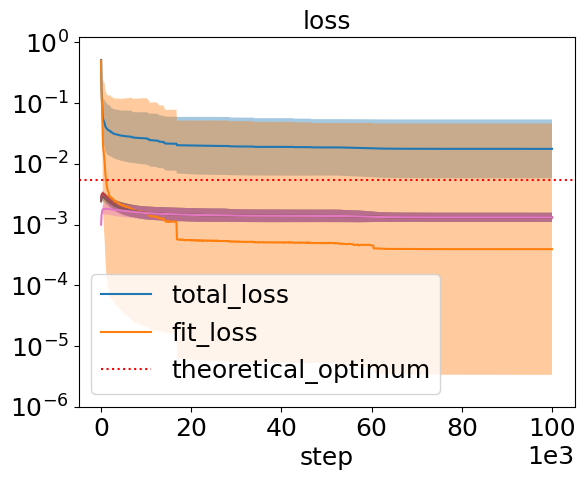}
    \includegraphics[width=0.24\textwidth]{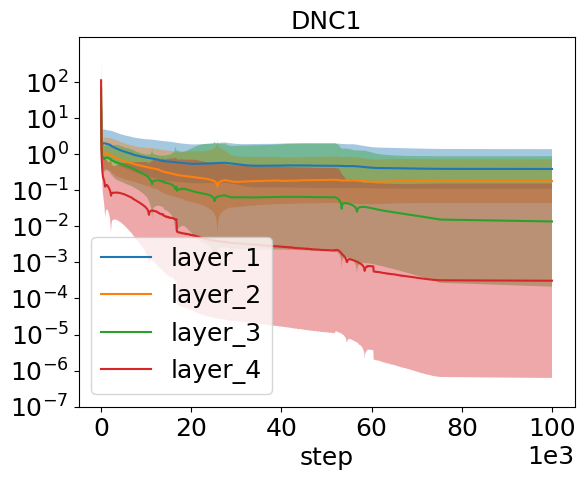}
    \includegraphics[width=0.24\textwidth]{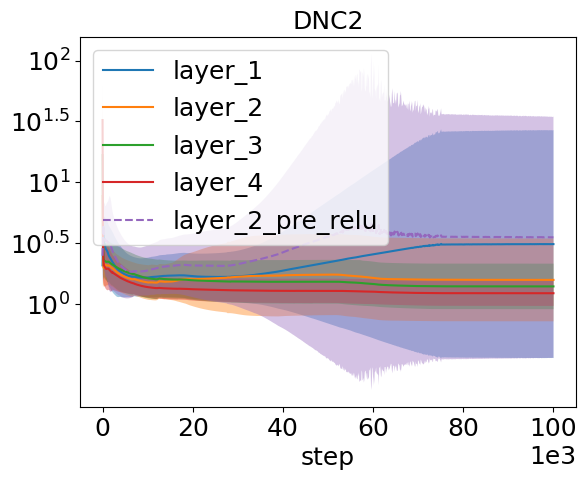}
    \includegraphics[width=0.24\textwidth]{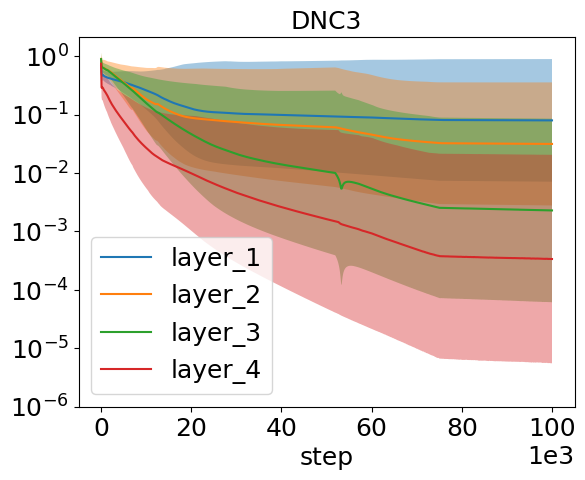}
    \includegraphics[width=0.24\textwidth]{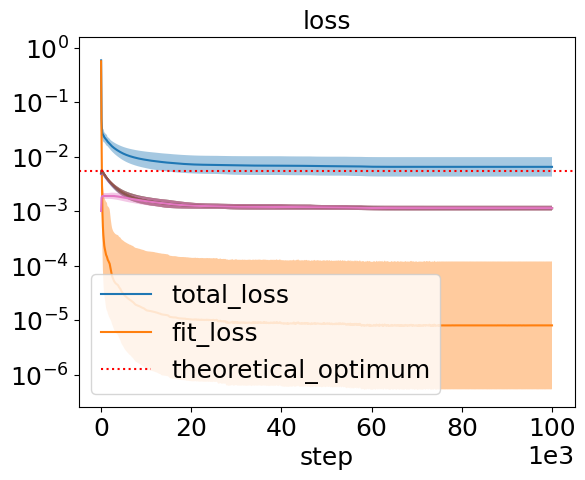}
    \includegraphics[width=0.24\textwidth]{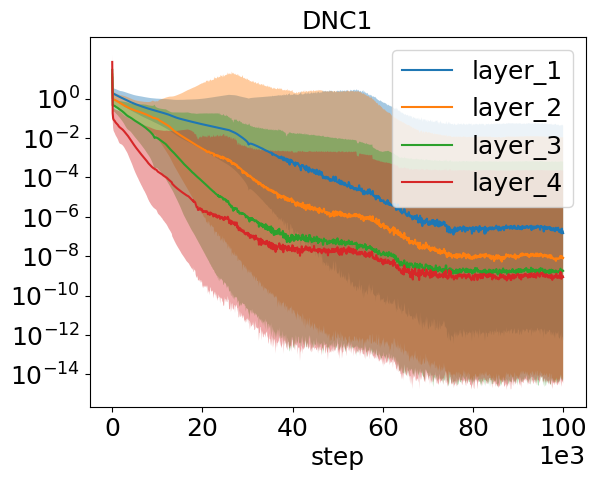}
    \includegraphics[width=0.24\textwidth]{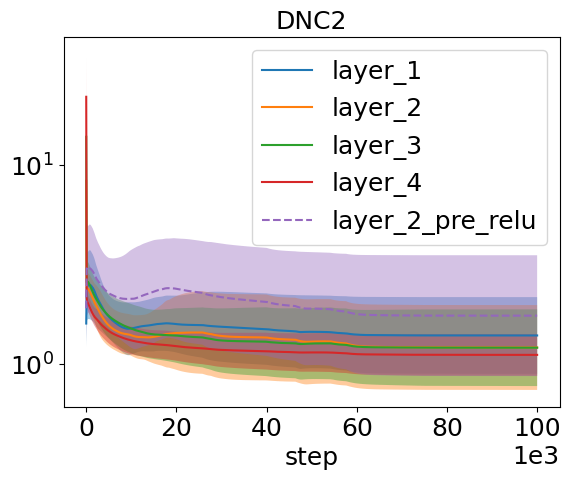}
    \includegraphics[width=0.24\textwidth]{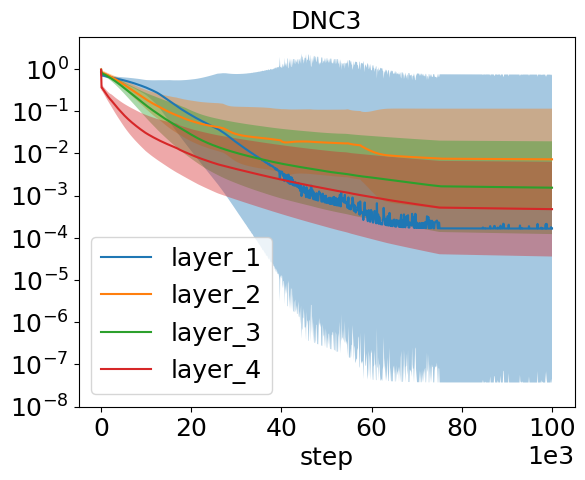}
    \includegraphics[width=0.24\textwidth]{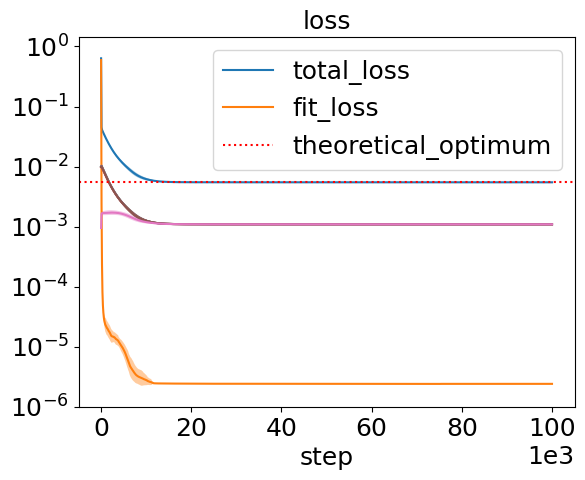}
    \includegraphics[width=0.24\textwidth]{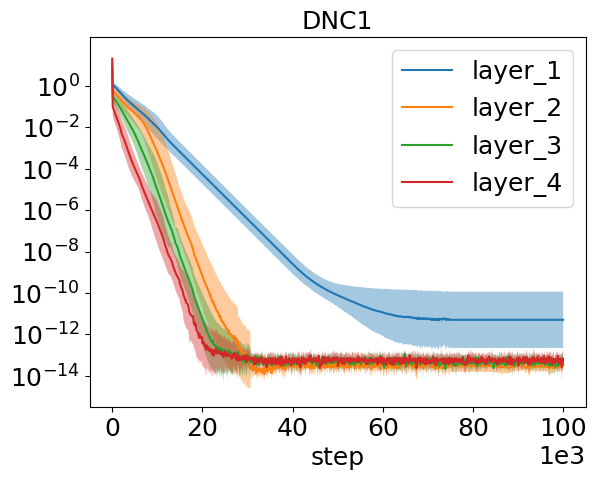}
    \includegraphics[width=0.24\textwidth]{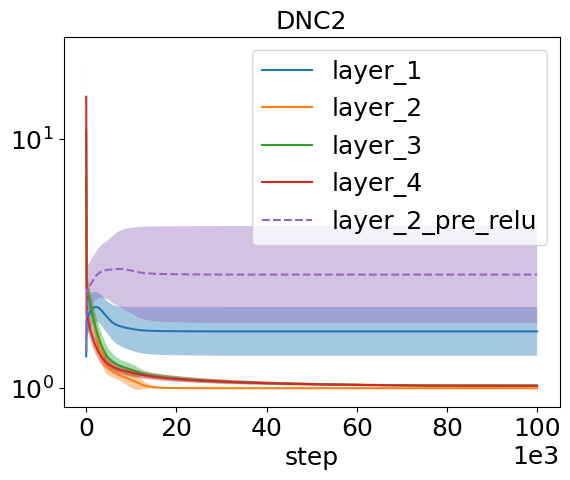}
    \includegraphics[width=0.24\textwidth]{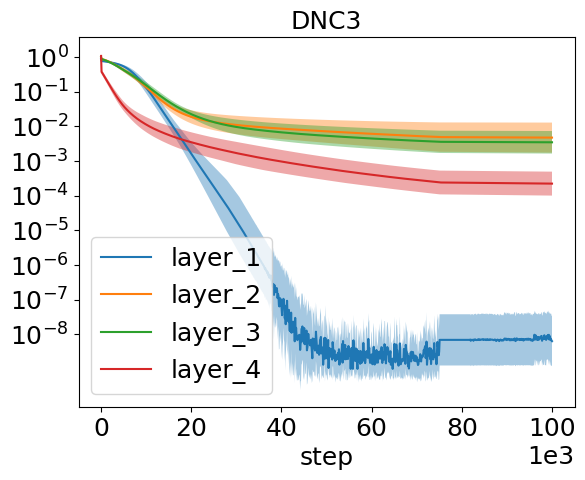}
    \includegraphics[width=0.24\textwidth]{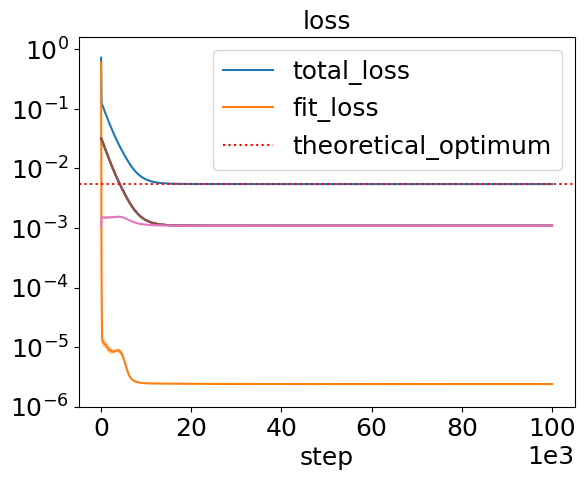}
    \includegraphics[width=0.24\textwidth]{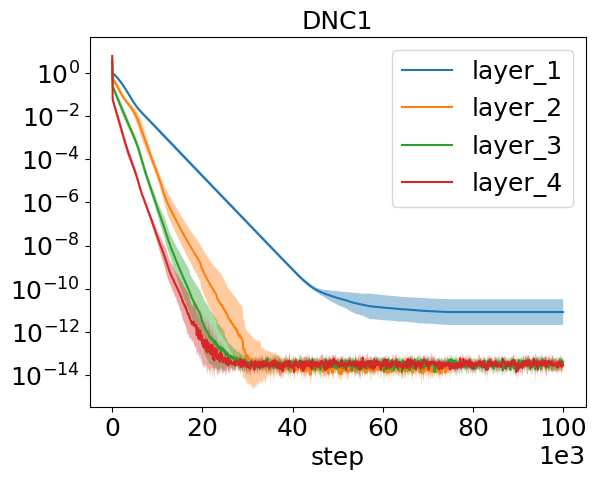}
    \includegraphics[width=0.24\textwidth]{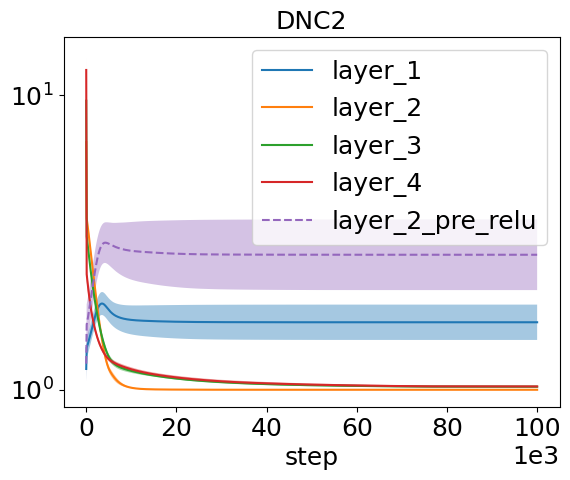}
    \includegraphics[width=0.24\textwidth]{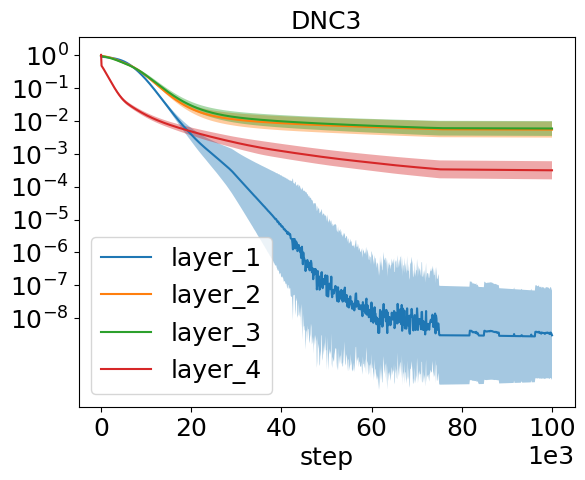}
    \includegraphics[width=0.24\textwidth]{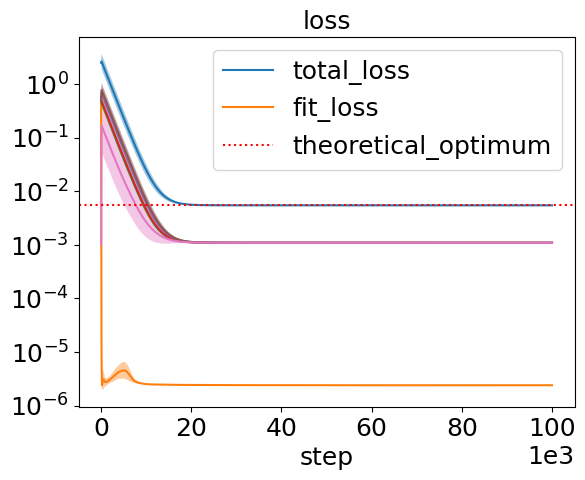}
    \includegraphics[width=0.24\textwidth]{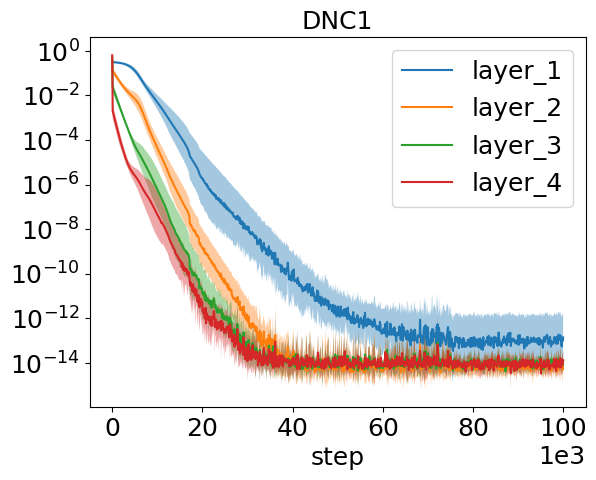}
    \includegraphics[width=0.24\textwidth]{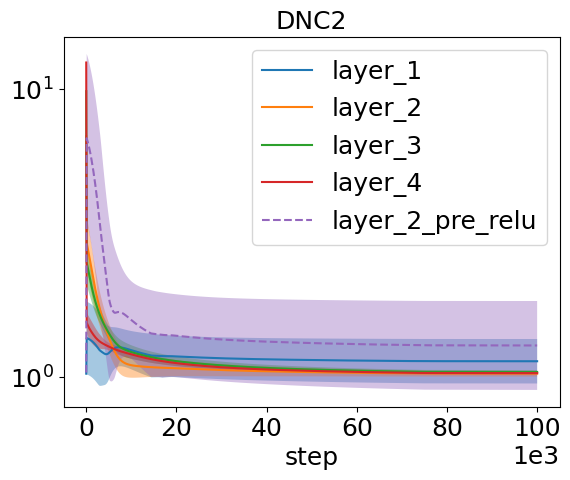}
    \includegraphics[width=0.24\textwidth]{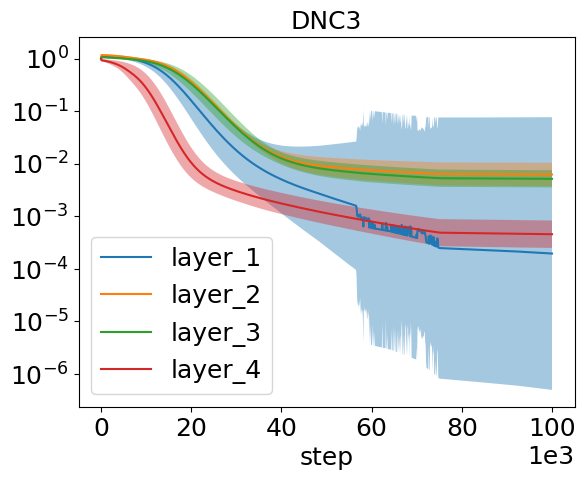}
    \caption{Losses and DNC metric progressions as a function of training time. The unlabeled loss curves correspond to all the regularization losses. Each row represents a different width, in increasing order ranging through $\{2, 5, 10, 20, 64, 1000\}.$ As the width increases, the training gets more stable. For width 2, no training converges to optimum. For width 5, the training only rarely converges to optimum. For width 10, the training usually converges to optimum, but often after encountering saddle points. From width 20, we see a stable training behavior with almost no difference between different high-width trainings.}
    \label{fig:depth_effect}
\end{figure}

The results suggest that the risk of DUFM not reaching the global optimum and thus DNC drastically decreases with the width. Already for width 20, the DUFM did not fail to reach the global optimum once in ten runs. On the other hand, for width of only 2, the DUFM did not reach the global optimum in any run. Interestingly, the smaller widths yielded less expected behaviour. For instance for width 10, we encountered two runs both reaching a perfect optimum and DNC, of which one exhibited DNC2 also for $H_1, H_2,$ while the other did not. Noticeably, for width 1000 we see higher variance in some DNC metrics. This is because some of these specific metrics converged to very low values for some runs and for some other stayed comparably higher. This effect appears to be an implicit bias of SGD. 

\subsection{The effect of weight decay on DUFM training}\label{App:numerics_wd}
As expected for the MSE loss setup, the weight decay is the main driving force for the occurrence of DNC. Therefore, predictably, the weight decay has a direct influence on the speed of convergence towards both DNC and global optimality. This is clearly demonstrated in Figure~\ref{fig:wd_effect}, where we plot the training dynamics of 4-DUFM with width 64 and weight decays of sizes $0.0001, 0.0005, 0.0009, 0.0013, 0.0017.$ By increasing the value of the weight decay, the loss as well as DNC metrics converge much faster. However, there is no qualitative change in the result of the training for different values of weight decay. 

\begin{figure}
    \centering
    \includegraphics[width=0.24\textwidth]{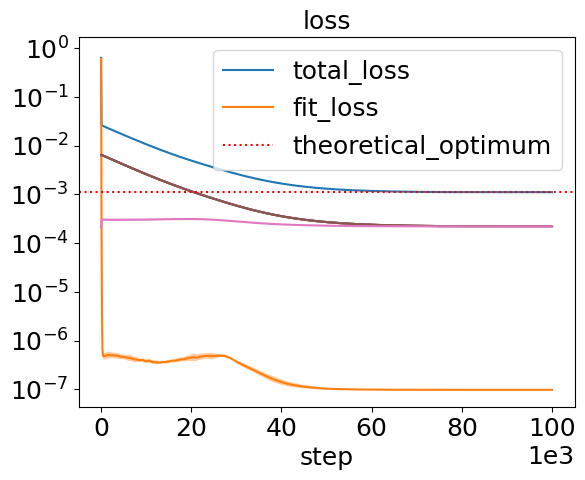}
    \includegraphics[width=0.24\textwidth]{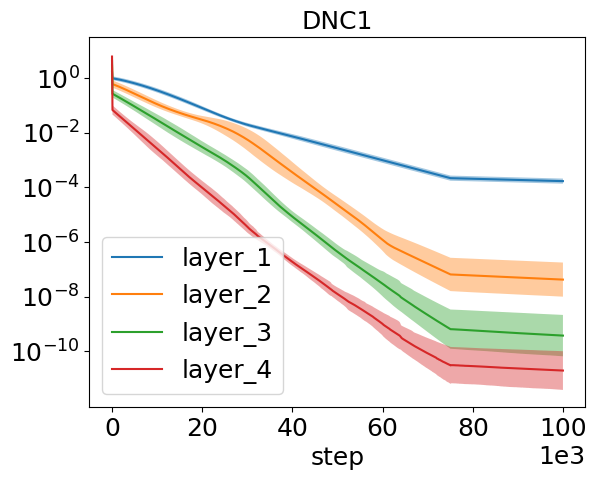}
    \includegraphics[width=0.24\textwidth]{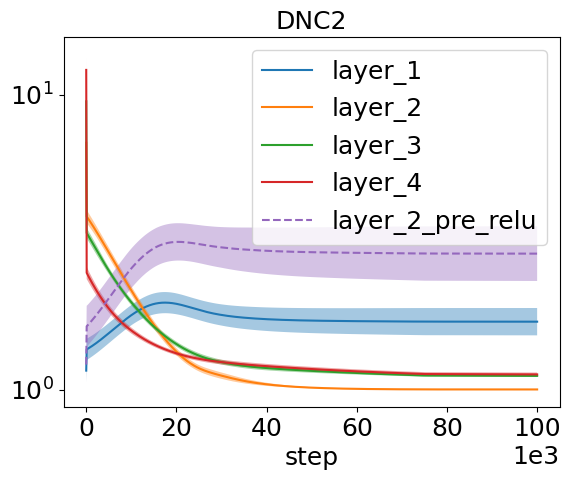}
    \includegraphics[width=0.24\textwidth]{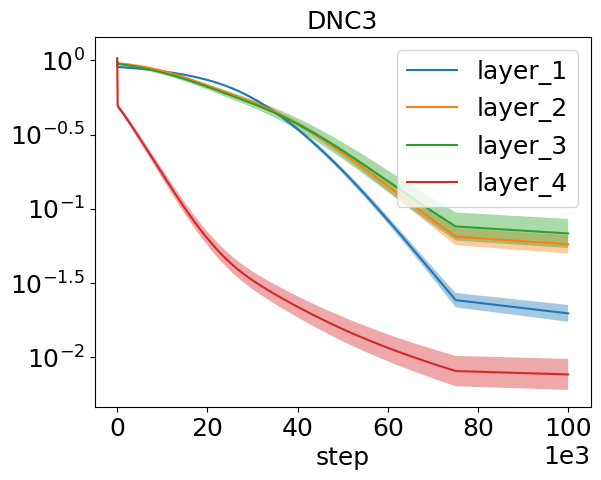}
    \includegraphics[width=0.24\textwidth]{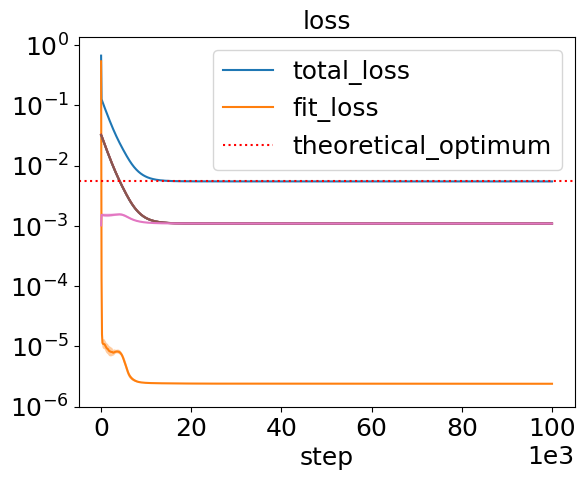}
    \includegraphics[width=0.24\textwidth]{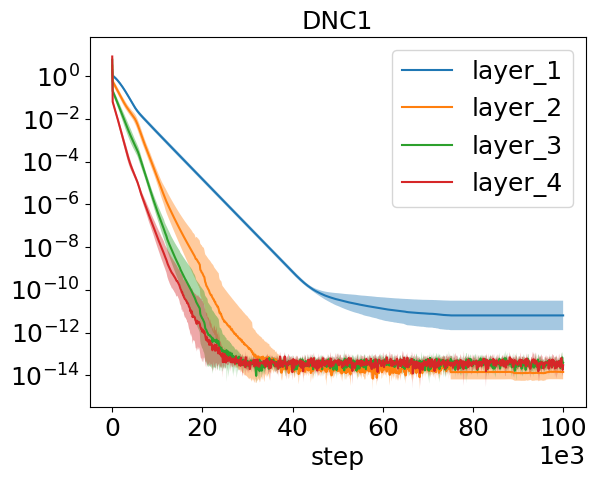}
    \includegraphics[width=0.24\textwidth]{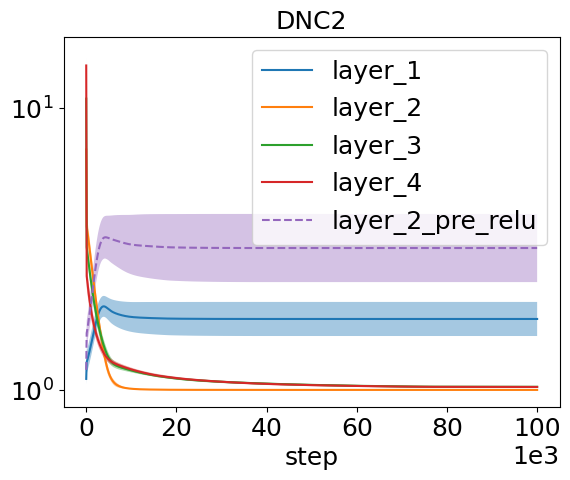}
    \includegraphics[width=0.24\textwidth]{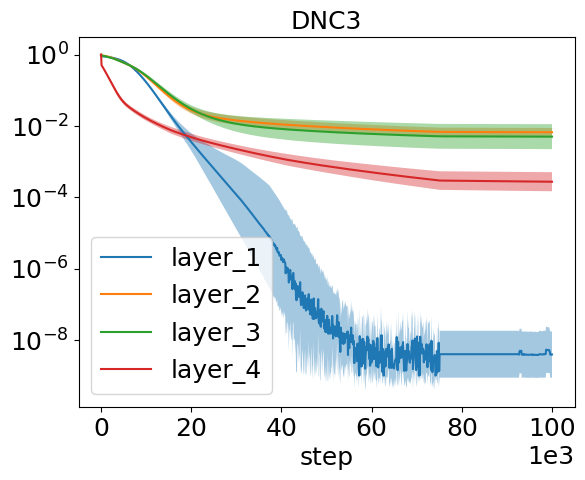}
    \includegraphics[width=0.24\textwidth]{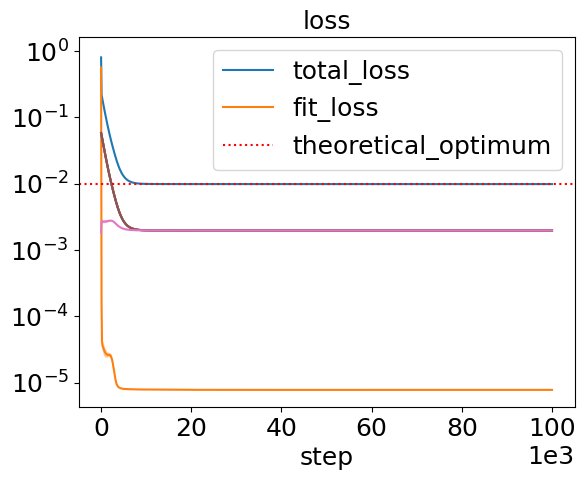}
    \includegraphics[width=0.24\textwidth]{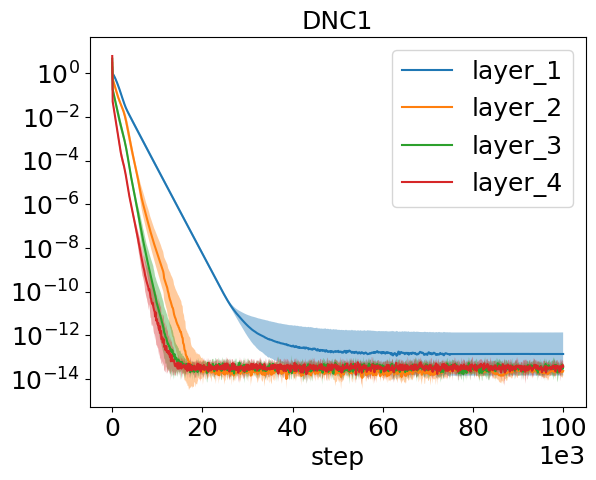}
    \includegraphics[width=0.24\textwidth]{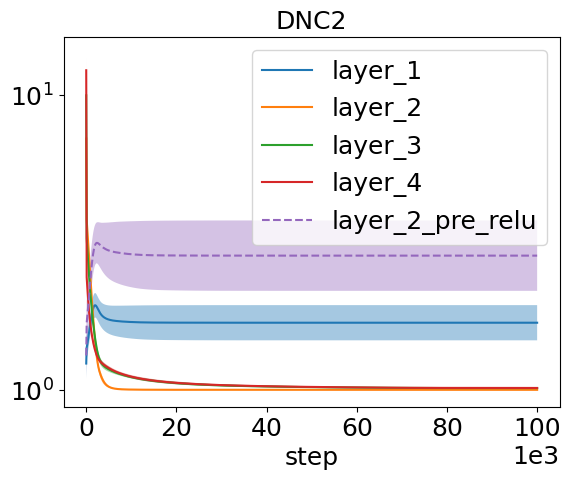}
    \includegraphics[width=0.24\textwidth]{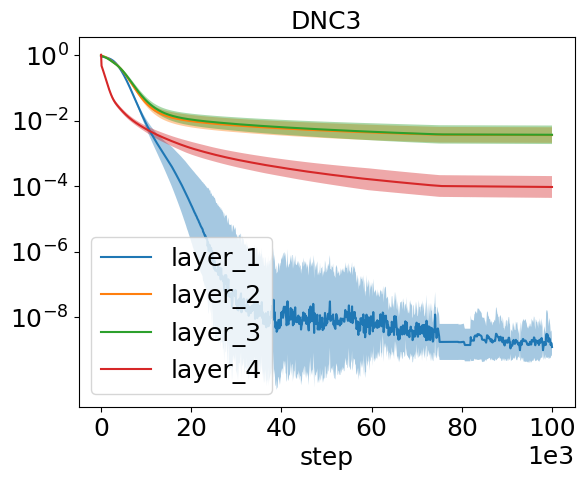}
    \includegraphics[width=0.24\textwidth]{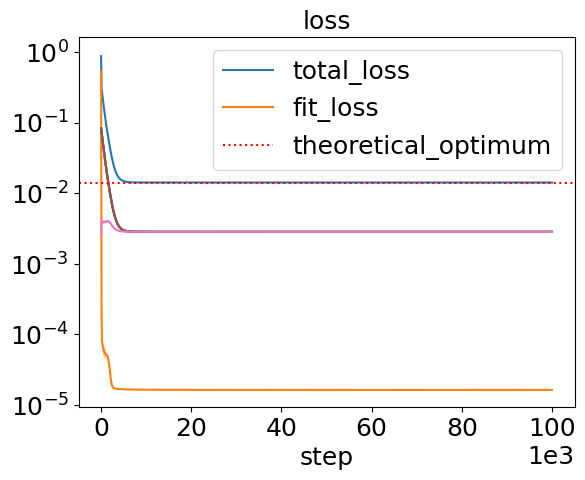}
    \includegraphics[width=0.24\textwidth]{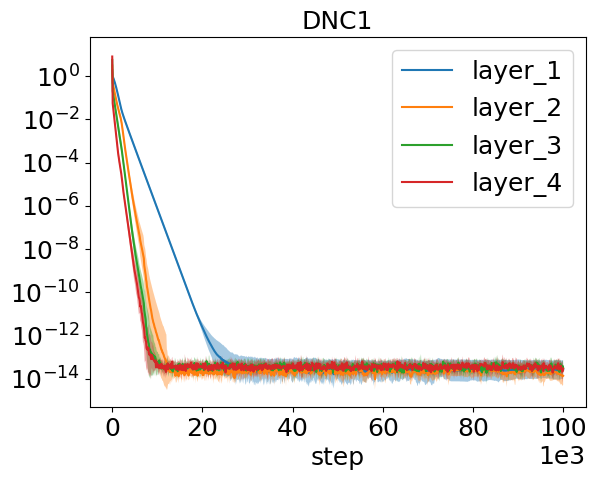}
    \includegraphics[width=0.24\textwidth]{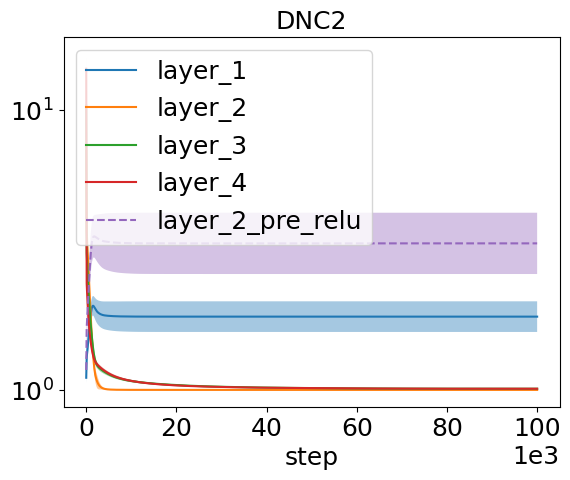}
    \includegraphics[width=0.24\textwidth]{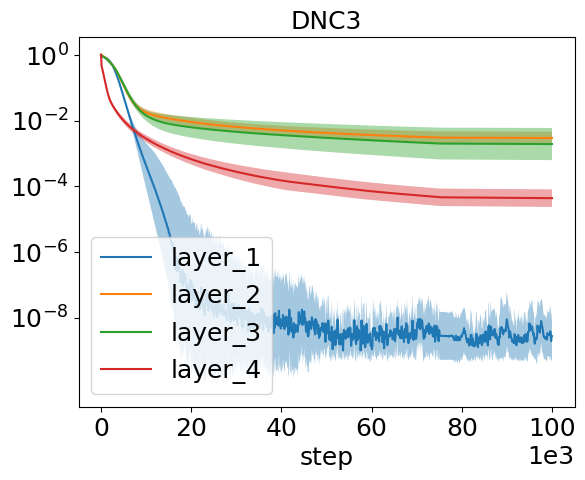}
    \includegraphics[width=0.24\textwidth]{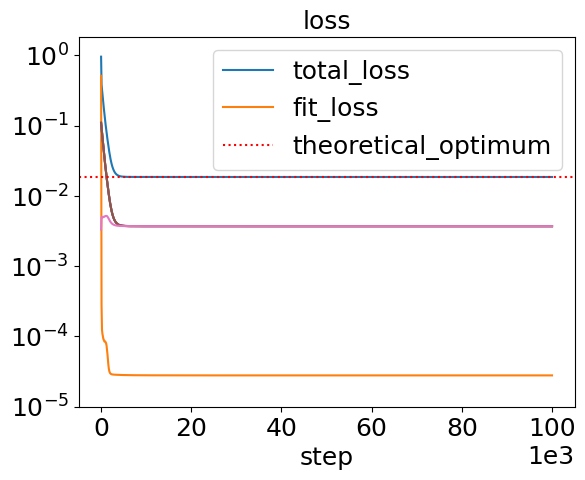}
    \includegraphics[width=0.24\textwidth]{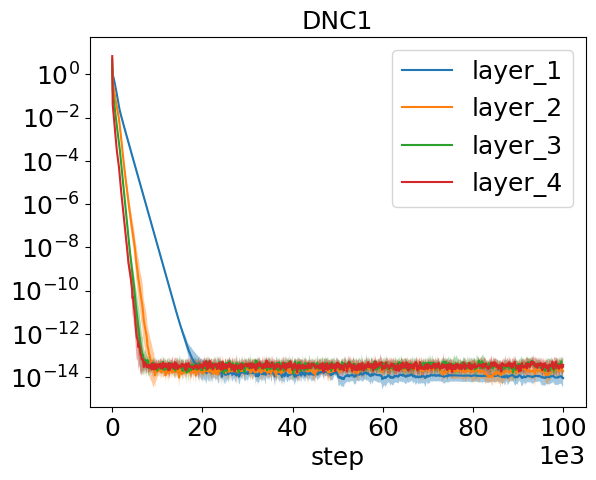}
    \includegraphics[width=0.24\textwidth]{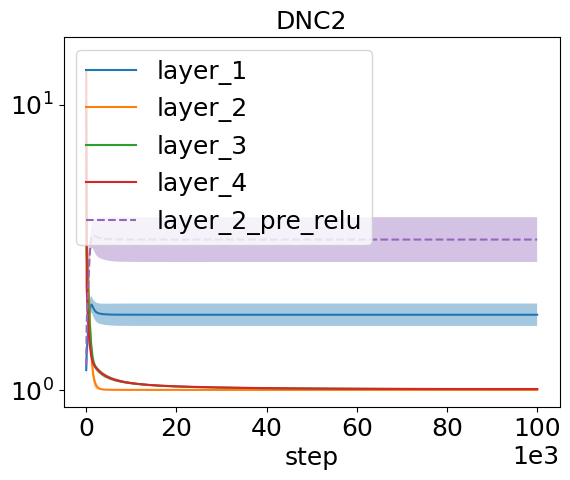}
    \includegraphics[width=0.24\textwidth]{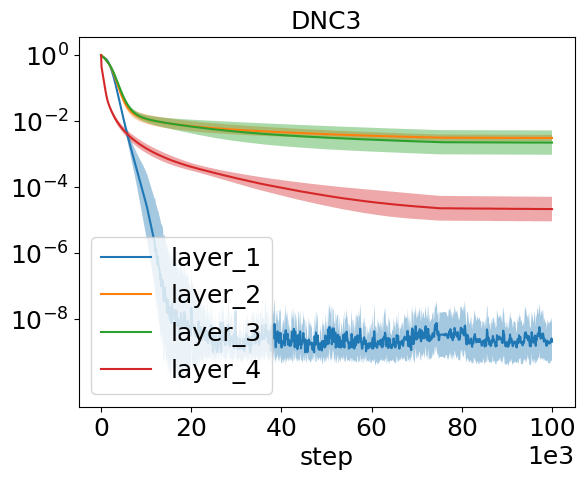}
    \caption{Losses and DNC metric progressions as a function of training time. The unlabeled loss curves correspond to all the regularization losses. Each row represents a different value of weight decay in increasing order through a set $\{0.0001, 0.0005, 0.0009, 0.0013, 0.0017\}.$ By increasing the weight decay, the training dynamics get faster.}
    \label{fig:wd_effect}
\end{figure}

\subsection{Interconnection of DNC and optimality of DUFM}\label{App:numerics_special}
As an interesting bonus, we show a single, cherry-picked run for the 4-DUFM with width 10 (although such runs occur quite often at this width). This training first got stuck at an almost-optimal solution, then after a long stay in the saddle point, it jumped to the optimum. The DNC metrics showed a clear decrease at exactly that point of the training, as shown in Figure~\ref{fig:special_run}. This run clearly shows how interconnected the deep neural collapse is with optimality. 

\begin{figure}
    \centering
    \includegraphics[width=0.45\textwidth]{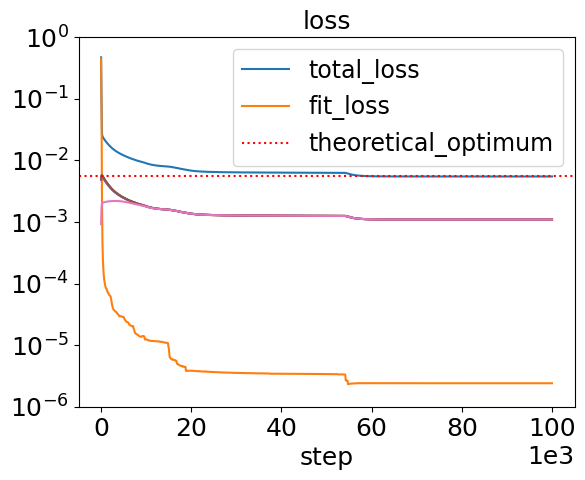}
    \includegraphics[width=0.45\textwidth]{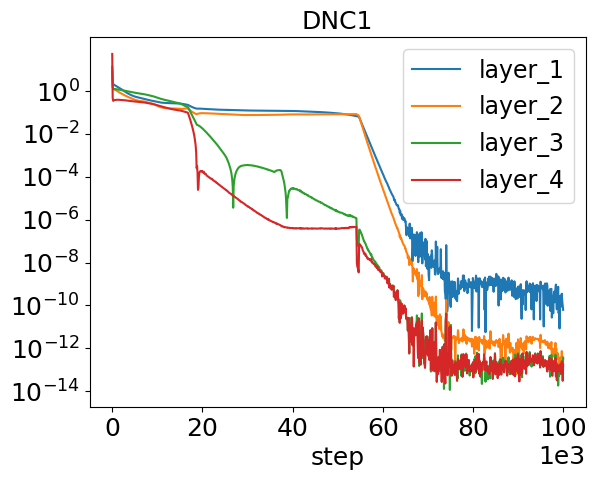}
    \includegraphics[width=0.45\textwidth]{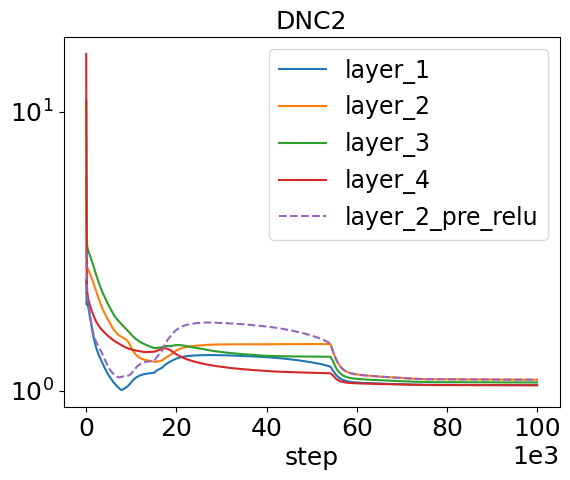}
    \includegraphics[width=0.45\textwidth]{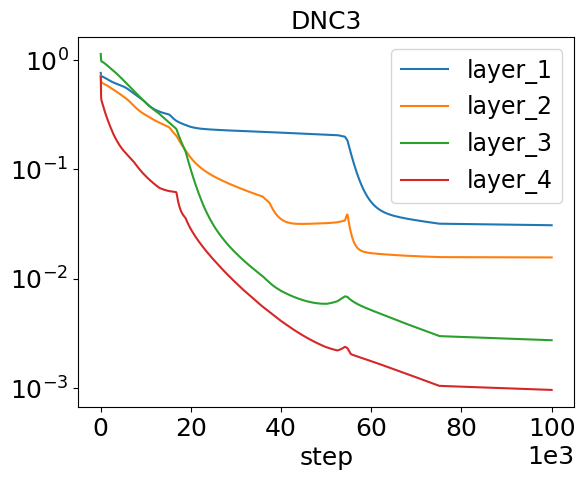}
    \caption{Losses and DNC metric progressions as a function of training time. The unlabeled loss curves correspond to all the regularization losses. The plot shows a single run of the training of $4$-DUFM with width $10$. After staying in a close-to-optimal saddle point, the model achieved optimum, the DNC metrics rapidly decreasing.}
    \label{fig:special_run}
\end{figure}

\section{Pre-trained DUFM experiments on CIFAR-10}\label{App:real_data}
Here, we specify the training conditions from Section~\ref{ssec:numerics} in more detail, and add results of further runs to provide more robust evidence. The ResNet20 we used has progressive width increase until width 64. The weights were initialized with He initialization \cite{he2015delving} with scaling 1. The DUFM in question is 3-layered of constant width 64. It was trained with weight decay $0.00001$ and with learning rate $0.001$ on $1000000$ full GD steps. The ResNet20 was then trained using stochastic gradient descent with batch size $128$, learning rate $0.00001$ and with $10000$ epochs on classes $0$ and $1$ of CIFAR-10. While in Figure~\ref{fig:pretrained_dufm} we plot results of a single training of the DUFM model followed by 7 independend trainings of independent initializations of the ResNet20, here we add two more independent trainings of the DUFM averaged over further 7 independent runs of ResNet20 training. The results are qualitatively the same as in the main body, which suggests that the ability of ResNet20 to make unconstrained features suitable for the emergence of DNC is robust w.r.t. the well-trained DUFM head. The results of these two runs are depicted in Figure \ref{fig:pretrained_dufm_app}.

As a side note, we remark that unlike end-to-end training or DUFM training, the ResNet20 training with pre-trained DUFM head is more prone to undesired behaviours. First of all, we need a rather small learning rate to prevent divergence and even with that, a minority of runs never jumps from $50\%$ training accuracy. Therefore, in our results, we only report those runs, which converged to near $100\%$ training accuracy. 

\begin{figure}
    \centering
    \includegraphics[width=0.24\textwidth]{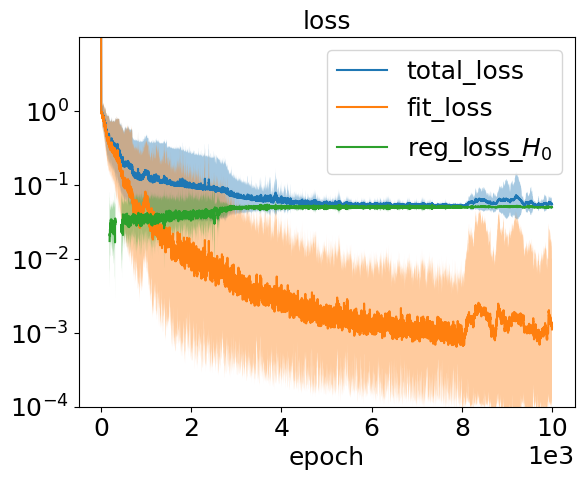}
    \includegraphics[width=0.24\textwidth]{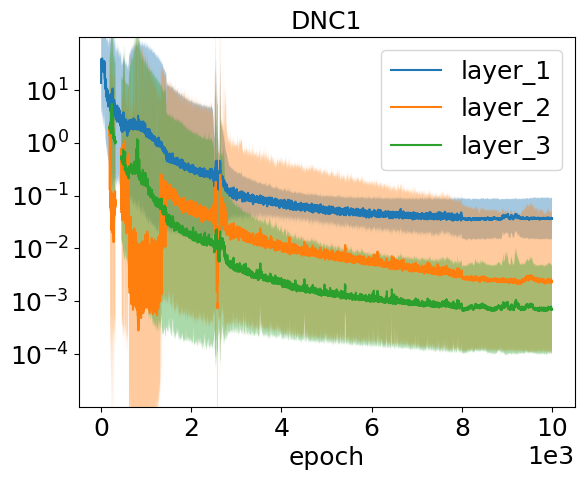}
    \includegraphics[width=0.24\textwidth]{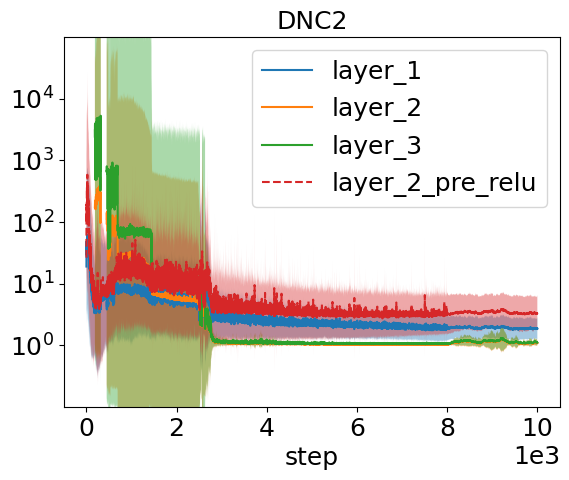}
    \includegraphics[width=0.24\textwidth]{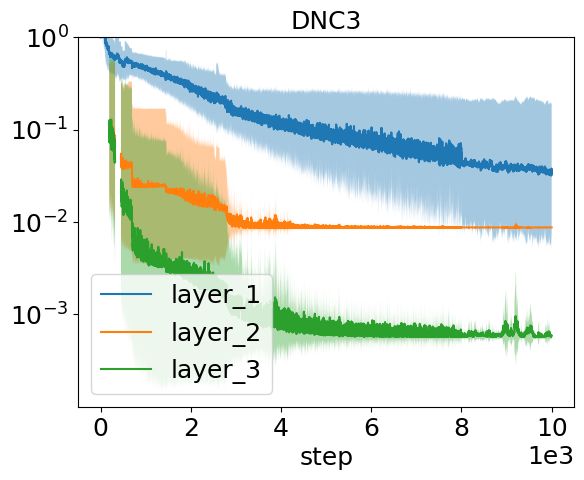}
    \includegraphics[width=0.24\textwidth]{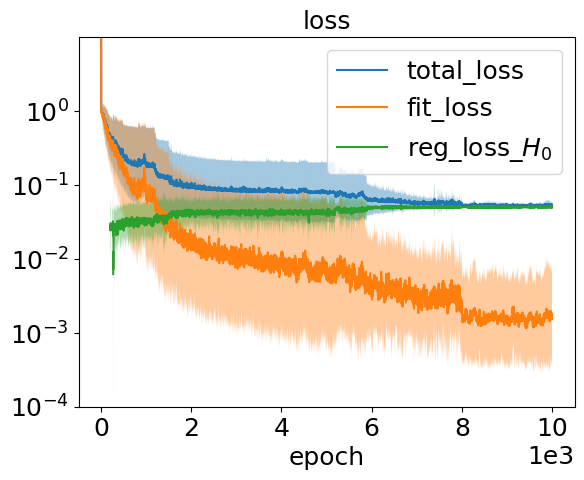}
    \includegraphics[width=0.24\textwidth]{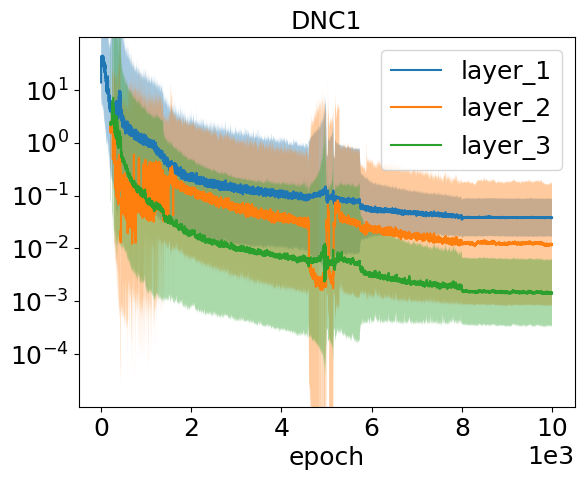}
    \includegraphics[width=0.24\textwidth]{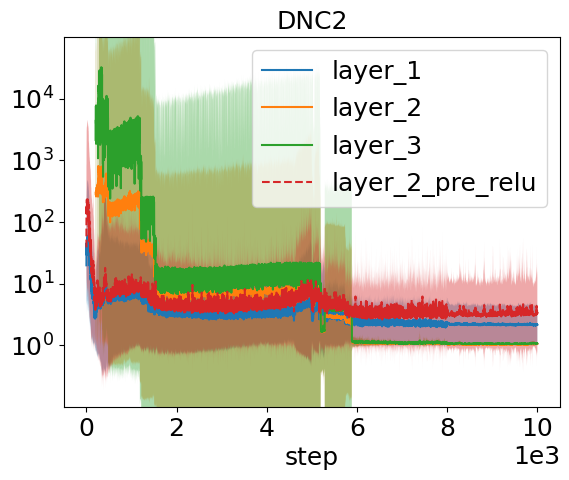}
    \includegraphics[width=0.24\textwidth]{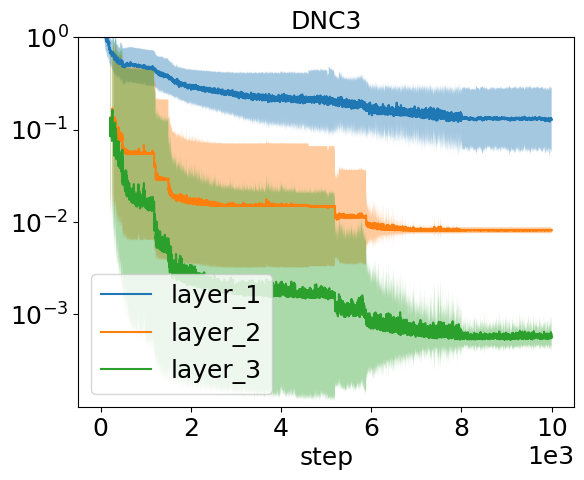}
    \caption{Loss functions and deep neural collapse metrics as a function of training progression. The rows correspond to different pre-training runs of the $3$-DUFM model. On the left, the training loss function and its decomposition are displayed. Next, the DNC1-3 metrics are displayed. The ResNet20 recovers the DNC metrics for DUFM, in accordance with our theory. This validates unconstrained features as a modeling principle for neural collapse.}
    \label{fig:pretrained_dufm_app}
\end{figure}

\end{document}